\newtheorem{theorem}{Theorem}[section]
\newtheorem{lemma}{Lemma}[section]
\newtheorem{claim}{Claim}[section]
\newtheorem{definition}{Definition}[section]
\newtheorem{corollary}{Corollary}[section]
\newcommand{\bR}{\ensuremath{\mathbb R}}
\newcommand{\bE}{\ensuremath{\mathbb E}}
\newcommand{\cP}{\ensuremath{\mathcal P}}
\newcommand{\polylog}{\ensuremath{\mathrm{polylog}}}
\newcommand{\poly}{\ensuremath{\mathrm{poly}}}
\newcommand{\cC}{\ensuremath{\mathcal C}}
\newcommand{\cost}{\ensuremath{\mathrm{cost}}}
\newcommand{\one}{\ensuremath{\mathds 1}}
\DeclareMathOperator*{\argmax}{argmax}
\DeclareMathOperator*{\argmin}{argmin}
\definecolor{bluish-green}{HTML}{006E53}
\crefname{claim}{Claim}{Claims}
\title{Near-Optimal Explainable $k$-Means for All Dimensions}
\author{Moses Charikar\thanks{Computer Science Department, Stanford University. 
Email: \texttt{moses@cs.stanford.edu}. Supported by a Simons
Investigator Award.} \and Lunjia Hu\thanks{Computer Science Department, Stanford University. 
Email: \texttt{lunjia@stanford.edu}. Supported by NSF Award
IIS-1908774, the Simons Foundation collaboration on the theory of algorithmic fairness, and a VMware fellowship.}}
\date{}
\begin{document}
\maketitle
\begin{abstract}
\ifdefined\soda
\small\baselineskip=9pt
\fi
Many clustering algorithms are guided by certain cost functions such as the widely-used $k$-means cost. These algorithms divide data points into clusters with often complicated boundaries, creating difficulties in explaining the clustering decision. In a recent work, Dasgupta, Frost, Moshkovitz, and Rashtchian (ICML 2020) introduced explainable clustering, where the cluster boundaries are axis-parallel hyperplanes and the clustering is obtained by applying a decision tree to the data. The central question here is: how much does the explainability constraint increase the value of the cost function?

Given $d$-dimensional data points, we show an efficient algorithm that finds an explainable clustering whose $k$-means cost is at most $k^{1 - 2/d}\,\mathrm{poly}(d\log k)$ times the minimum cost achievable by a clustering without the explainability constraint, assuming $k,d\ge 2$. Taking the minimum of this bound and the $k\,\mathrm{polylog} (k)$ bound in independent work by Makarychev-Shan (ICML 2021), Gamlath-Jia-Polak-Svensson (2021), or Esfandiari-Mirrokni-Narayanan (2021), we get an improved bound of $k^{1 - 2/d}\,\mathrm{polylog}(k)$, which we show is optimal for every choice of $k,d\ge 2$ up to a poly-logarithmic factor in $k$. For $d = 2$ in particular, we show an $O(\log k\log\log k)$ bound, improving near-exponentially over the previous best bound of $O(k\log k)$ by Laber and Murtinho (ICML 2021).
\end{abstract}

\setcounter{page}{1}

\section{Introduction}
As a result of the rapid deployment of data analysis and machine learning techniques,
many important decision rules that impact our lives are 
learned from data by algorithms,
rather than
designed explicitly by people.
For unlabeled data,
clustering algorithms are a useful tool for learning such rules,
and a necessary step towards making the learned rules trustworthy is to
make them easily understood by people.
Many clustering algorithms are designed to optimize a cost function such as the widely-used $k$-means cost. While being a convenient and effective way for designing and analyzing clustering algorithms,
optimizing a simple cost function can produce clusterings that are not easily understood by people, causing interpretability issues when applied in practice \cite{saisubramanian2020balancing,MR4205290}.

We study a notion of explainable clustering introduced recently by
Dasgupta, Frost, Moshkovitz, and Rashtchian \cite{pmlr-v119-moshkovitz20a} that aims to improve clustering interpretability. Here, a clustering is considered explainable if 
the clusters are obtained by applying a decision tree with $k$ leaves to the data.
Specifically, every non-leaf node of the decision tree corresponds to an axis-parallel hyperplane that divides the current set of data points into two subsets, which are passed to the two children of the node respectively. Every leaf of the decision tree thus corresponds to all the data points contained in a (possibly unbounded) rectangular box with axis-parallel faces, and these data points are required to be placed in the same cluster.

To understand the increase in the clustering cost caused by the explainability constraint,
\cite{pmlr-v119-moshkovitz20a} 
studied the \emph{competitive ratio} of an explainable clustering, which is defined to be the ratio between its cost and the minimum cost achievable by a clustering using $k$ clusters without the explainability constraint.
For the $k$-medians and $k$-means cost,
\cite{pmlr-v119-moshkovitz20a} showed efficient algorithms computing explainable clusterings with competitive ratios $O(k)$ and $O(k^2)$ respectively. (In their work, distances are measured using the $\ell_1$-norm for $k$-medians, whereas the $\ell_2$-norm is used for $k$-means.)

The competitive ratio bounds in \cite{pmlr-v119-moshkovitz20a} have been improved significantly by several independent papers \cite{makarychev2021,gamlath2021nearly,esfandiari2021almost}. For explainable $k$-medians, \cite{makarychev2021} and \cite{esfandiari2021almost} achieved the current best competitive ratio $O(\log k\log\log k)$. For explainable $k$-means, the current best competitive ratio is $O(k\log k)$ by \cite{esfandiari2021almost}. 
These competitive ratio bounds are known to be near-optimal, with the current best lower bound being $\Omega(\log k)$ for $k$-medians \cite{pmlr-v119-moshkovitz20a,esfandiari2021almost} and $\Omega(k)$ for $k$-means \cite{gamlath2021nearly,esfandiari2021almost}.
 
The lower bounds above were proved only when the dimension of the data points is 
$d = \Omega(\log k)$.
This leaves open the question of achieving better competitive ratios %
for dimensions $d = o(\log k)$. 
Before our work, Laber and Murtinho \cite{laber2021price} gave algorithms with better competitive ratios in lower dimensions, but their bounds for $k$-medians and $k$-means were sub-optimal and subsumed by \cite{esfandiari2021almost}.
For $k$-medians, \cite{esfandiari2021almost} showed an $O(d(\log d)^2)$ competitive ratio in $d$ dimensions, which improves over the $O(\log k\log\log k)$ bound when $d \le O(\log k/\log\log k)$. 
For the popular $k$-means objective, no competitive ratio better than $O(k\log k)$ was known prior to our work even for $d = 2$.

In this work, we show that significantly better competitive ratios than $O(k\log k)$ can be achieved for $k$-means when the dimension 
$d = O(\log k/\log\log k)$.
We give a competitive ratio bound that depends on both $d$ and $k$, which we show is near-optimal for all choices of $d,k\ge 2$.

\paragraph{Our results.}
Our main result is an efficient algorithm that takes a set of $d$ dimensional points and computes an explainable clustering with competitive ratio at most $k^{1 - 2/d}\poly(d\log k)$ for the $k$-means cost (\Cref{thm:d>2}). 
Compared to the the previous best bound of $O(k\log k)$,
our bound has a better dependence on $k$ for \emph{every} fixed dimension $d$. 
The dependence on $d$
can be improved by taking the minimum of our bound and $O(k\log k)$. Specifically, if we run the algorithm in any of the independent work \cite{makarychev2021,gamlath2021nearly,esfandiari2021almost} instead when $d =  \Omega(\log k)$, we get an improved bound of $k^{1 - 2/d}\,\polylog (k)$ for all $k,d\ge 2$ (\Cref{cor:main}). We 
show this is near-optimal for all $k,d\ge 2$ by constructing a set of $d$ dimensional points for which the competitive ratio is at least $k^{1-2/d}/\polylog(k)$ for the $k$-means cost (\Cref{thm:lb}).

In the special case of $d = 2$, we show an efficient algorithm computing explainable clusterings with competitive ratio $O(\log k\log\log k)$ (\Cref{thm:2d}). This is a near-exponential improvement compared to the previous best bound of $O(k\log k)$ obtained first by \cite{laber2021price}.

\paragraph{Technical overview.} 
As previous work \cite{pmlr-v119-moshkovitz20a,laber2021price} and independent work \cite{makarychev2021,gamlath2021nearly,esfandiari2021almost}, we design a post-processing algorithm that takes an arbitrary clustering $\cC$ with $k$ clusters, and computes an explainable clustering with $k$-means cost at most $k^{1 - 2/d}\poly(d\log k)$ times the cost of $\cC$ using the same cluster centroids in $\cC$. 
Our competitive ratio bound is then achieved by running the post-processing algorithm on a clustering $\cC$ computed by a constant-factor approximation algorithm for $k$-means \cite{kanungo2004local,MR3734218,grandoni2021refined}.

We build the decision tree recursively starting from the root. That is, the first step of the algorithm is to find the hyperplane corresponding to the root of the decision tree, and then solve the two induced subproblems recursively. 
If a point $x$ and its assigned centroid in $\cC$ lie on different sides of the hyperplane, we need to re-assign a new centroid to $x$ that lies on the same side of the hyperplane with $x$. Every such re-assignment incurs some cost.
In a similar spirit to \cite{laber2021price}, we make sure that the re-assignment cost caused by the hyperplane is small, and that the two subproblems have similar ``sizes'' in order to minimize the depth of the decision tree. When the two goals are in conflict, it is important to make a balanced tradeoff: \cite{laber2021price} applies binary search with non-uniform probing cost due to \cite{MR1912303}, whereas we take a more flexible approach originating in an argument of Seymour \cite{MR1337358}. 

In previous analysis \cite{pmlr-v119-moshkovitz20a,laber2021price}, the re-assignment cost of every point is bounded above by the diameter of the current subproblem. 
While this ``diameter upper bound'' can be a good estimate when the dimension %
$d = \Omega(\log k)$,
we need a more careful bound to get a near optimal competitive ratio for %
smaller dimensions.
To this end,
we form a \emph{forbidden region} when selecting every hyperplane to prevent re-assigning a point if the re-assignment cost is too large compared to the current cost. 
We use a volume argument to bound the size of the forbidden region, so that we have enough non-forbidden space to apply Seymour's argument.
The strength of the volume argument increases significantly as the dimension $d$ decreases, resulting in our significantly improved bound in lower dimensions.

The ``diameter upper bound'' used in previous works allows them to completely ignore a point once it is re-assigned, because the cost of any further re-assignment can be covered by the current diameter. In our analysis, however, we need to deal with situations where a point is re-assigned multiple times.
We give every point a \emph{type} during the algorithm based on 
its ``re-assignment history'', and carefully control the cost at every re-assignment by designing the forbidden region based on the types of individual points.
In addition,
we show that ``essentially'' no point can be re-assigned too many times. Roughly speaking, for $d = 2$, we show that a point can be re-assigned at most twice before either a) the point has a large distance to its current centroid so that we can afford to use the diameter upper bound, or b) both the point and its current centroid are close to a corner of the current rectangle so that we can avoid re-assigning them further. For the more general case $d > 2$, it is possible that none of the above scenarios happen for some ``bad'' points, but we make sure that the ``bad'' points are assigned to only a small number of centroids, in which case we can also avoid re-assigning them further. 

\paragraph{Related work.}

Decision trees are a classic method for classifying labeled data \cite{hunt1966experiments}. 
Due to its intrinsic interpretability, people also applied decision trees and related algorithms to clustering unlabeled data: 
\cite{de1997using,chang2002new,basak2005interpretable,liu2005clustering,yasami2010novel,MR3066902,chen2016interpretable,ghattas2017clustering,bertsimas2018interpretable,MR4205290}. 
We use the framework of
\cite{pmlr-v119-moshkovitz20a}, who gave the first competitive ratio analysis for explainable clustering using decision trees. \cite{frost2020exkmc} relaxed the framework of \cite{pmlr-v119-moshkovitz20a} by allowing more leaves in the decision tree than the number of clusters, so that a cluster can correspond to multiple leaves.

In clustering tasks, the dimension of the input points plays an important role. Many influential results were obtained while studying clustering in different dimensions. While hardness results have been proved for approximately optimizing the $k$-medians and the $k$-means cost within a small constant factor \cite{MR2121521,MR3392820,bhattacharya2020hardness}, polynomial-time approximation schemes (PTAS) have been found for both $k$-medians and $k$-means in fixed dimensions \cite{MR1731569,MR1729137,MR3630998,MR3630997,MR3775817}.
A random projection to $O(\log k)$ dimensions approximately preserves the $k$-medians and the $k$-means cost of all $k$-clusterings of a given set of points with high probability \cite{MR4003407,MR4003406}.

Interpretability and explainability are important aspects of making machine learning reliable, and they have received growing research attention
(see \cite{molnar2020interpretable,murdoch2019definitions} for an overview).
Compared to clustering and unsupervised learning in general, more work on interpretability considered supervised learning \cite{ribeiro2016should,lundberg2017unified,adadi2018peeking,ribeiro2018anchors,lipton2018mythos,rudin2019stop,murdoch2019interpretable,alvarez2019weight,deutch2019constraints,sokol2020limetree,garreau2020explaining}. 
Besides decision trees,
neural nets have been used to improve clustering explainability
\cite{kauffmann2019clustering},
whereas an interpretability score was formulated by \cite{saisubramanian2020balancing}, who studied the tradeoff between the interpretability score and the clustering cost.
Fairness is another important consideration towards making clustering more trustworthy. There is a large body of recent work on fair clustering 
\cite{backurs2019scalable,NEURIPS2019_fc192b0c,NEURIPS2019_810dfbbe,kleindessner2019fair,schmidt2019fair,mahabadi2020individual,jung2020service,deeparnab2021better,vakilian2021improved}.

\paragraph{Independent work.}
Soon after we made this paper public on arXiv, three related and independent papers \cite{gamlath2021nearly,esfandiari2021almost,makarychev2021} appeared on arXiv, and later \cite{makarychev2021} also appeared in the proceedings of ICML 2021. As we mentioned earlier, all three papers achieved similar competitive ratios for explainable $k$-medians with the $\ell_1$-norm and explainable $k$-means with the $\ell_2$-norm that are near-optimal when the dimension 
$d = \Omega(\log k)$.
In addition, \cite{gamlath2021nearly} considered general $\ell_p$-norms and achieved competitive ratio $O(k^{p-1}(\log k)^2)$ when the distances are raised to the $p$-th power with a near-matching lower bound, \cite{esfandiari2021almost} achieved a dimension-dependent competitive ratio $O(d(\log d)^2)$ for $k$-medians with the $\ell_1$-norm in $d$ dimensions, and \cite{makarychev2021} achieved a competitive ratio $O((\log k)^{3/2})$ for $k$-medians with the $\ell_2$-norm.

\paragraph{Paper organization.}
We formally define explainable clustering and introduce relevant notation in \Cref{sec:preli}. We prove our $O(\log k\log\log k)$ competitive ratio upper bound for $d = 2$ in \Cref{sec:2d}, and our $k^{1 - 2/d}\,\poly(d\log k)$ upper bound for $d > 2$ in \Cref{sec:d>2}. The lower bound $k^{1 - 2/d}/\polylog(k)$ is shown in \Cref{sec:lb}. Some helper claims and lemmas used in our analysis are stated and proved in \Cref{sec:helper}.

\section{Preliminaries}
\label{sec:preli}
We use $x(j)$ to denote the $j$-th coordinate of a point $x$ in the $d$-dimensional space $\bR^d$, where $j$ is chosen from $[d]$, namely, $\{1,\ldots,d\}$. We use $\|x\|_2 = (\sum_{j=1}^d x(j)^2)^{1/2}$ and $\|x\|_\infty = \max_{j\in [d]}|x(j)|$ to denote the $\ell_2$-norm and the $\ell_\infty$-norm of a point $x\in \bR^d$, respectively.

Every pair $(j,\theta)\in [d]\times \bR$ defines an axis-parallel hyperplane that partitions $\bR^d$ into two subsets: $B_\le(j,\theta):=\{x\in\bR^d:x(j) \le \theta\}$ and $B_>(j,\theta):=\{x\in\bR^d:x(j) > \theta\}$. We say two points $x,x'\in\bR^d$ lie on the same side of the hyperplane $(j,\theta)$ if $x,x'\in B_\le(j,\theta)$ or $x,x'\in B_>(j,\theta)$; otherwise we say the two points lie on different sides of the hyperplane, or equivalently, they are separated by the hyperplane.

We consider decision trees as rooted directed trees. Nodes in the tree with no child are called \emph{leaves}, and we require that every non-leaf node has exactly $2$ children\textemdash a left child and a right child. Every non-leaf node corresponds to an axis-parallel hyperplane $(j,\theta)\in [d]\times \bR$. This naturally makes every node in the tree define a subset of $\bR^d$ with axis-parallel boundaries:
the root defines the entire space $\bR^d$;
if a non-leaf node corresponding to hyperplane $(j,\theta)$ defines the region $B$,
its left child defines the region $B\cap B_\le (j,\theta)$, and its right child defines the region $B\cap B_>(j,\theta)$. Clearly, the regions defined by the leaves of a decision tree form a partition of $\bR^d$.

For positive integers $d$ and $k$,
a $k$-clustering $\cC$ for a set of points $x_1,\ldots,x_n\in\bR^d$ consists of
$k$ centroids $y_1,\ldots,y_k\in\bR^d$ and an assignment mapping $\xi:[n]\rightarrow [k]$. 
The $k$-means cost of the clustering $\cC$ is given by $\cost(\cC) = \sum_{i=1}^n\|x_i - y_{\xi(i)}\|_2^2$. 
We say the clustering $\cC$ is $k$-explainable with respect to
a decision tree $T$ if $T$ has at most $k$ leaves  and $\xi(i) = \xi(i')$ holds for all points $x_i,x_{i'}$ in the region defined by the same leaf of $T$.

For a subset $W\subseteq \bR^{d}$, we use $|W|$ to denote its Lebesgue measure. We only care about the Lebesgue measure of bounded subsets that can be represented as a union of finitely many rectangles (or intervals when $d = 1$). For those subsets $W$, the Lebesgue measure $|W|$ always exists.

We use $\log$ to denote the base-$e$ logarithm, and $\log_2$ to denote the base-$2$ logarithm.

\section{Explainable $k$-means in the plane}
\label{sec:2d}
We focus on the simpler case $d = 2$ in this section and give an efficient algorithm for finding a $k$-explainable clustering with competitive ratio $O(\log k\log\log k)$.
Before we describe our algorithm, we remark that there exists a poly-time algorithm that computes a $k$-explainable clustering with \emph{minimum} $k$-means cost given a set of $n$ input points in $d = 2$ dimensions. In fact, for general $d\ge 2$ and $n\ge 2$, there exists such an algorithm with running time $n^{O(d)}$ via dynamic programming: 
if $k\ge n$, it is trivial to achieve zero cost; 
if $k < n$, the algorithm solves all subproblems each consisting of a box $(\alpha(1),\beta(1)]\times (\alpha(2),\beta(2)]\times \cdots \times (\alpha(d),\beta(d)]\subseteq \bR^d$ and a positive integer $k'\le k$, where the goal is to find a $k'$-explainable clustering with minimum cost for the input points inside the box. Although there are infinitely many such boxes, at most $n^{O(d)}$ among them define distinct subsets of input points, so essentially there are at most $kn^{O(d)} = n^{O(d)}$ different subproblems. Also, every subproblem with $k'=1$ can be solved directly in $O(nd)$ time, and every subproblem with $k'>1$ can be solved in $O(k'nd) = (nd)^{O(1)}$ time using solutions to subproblems with smaller $k'$.

While the above algorithm guarantees to find a $k$-explainable clustering with minimum cost and thus minimum competitive ratio, it does not give us a concrete bound on the competitive ratio. We develop a different algorithm that post-processes an arbitrary $k$-clustering $\cC$ into a $k$-explainable clustering, and we show that the $k$-means cost of the explainable clustering is at most $O(\log k\log\log k)$ times the cost of $\cC$ assuming $d = 2$. Choosing $\cC$ as the output of a constant-factor approximation algorithm for $k$-means ensures that the explainable clustering has competitive ratio $O(\log k\log\log k)$. 

\begin{theorem}
\label{thm:2d}
Assume $k \ge 2$. 
There exists a poly-time algorithm $\tdpostprocess$ that takes a $k$-clustering $\cC$ of $n$ points in $2$ dimensions, and outputs a clustering $\cC'$ of the $n$ points and a decision tree $T$ with at most $k$ leaves such that
\begin{enumerate}
\item $\cC'$ is $k$-explainable with respect to $T$;
\item $\cost(\cC') \le O(\log k\log\log_2(2k))\cdot \cost(\cC)$;
\item $\cC'$ uses the same $k$ centroids as $\cC$ does.
\end{enumerate}
Consequently, there exists a poly-time algorithm that takes $n$ points in $2$ dimensions and outputs a $k$-explainable clustering with competitive ratio $O(\log k\log\log_2(2k))$.
\end{theorem}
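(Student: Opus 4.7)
The plan is to construct the decision tree $T$ top-down, starting from a bounding box of $\cC$ and recursing on the two sub-boxes produced by each cut. At the root, every input point $x_i$ is ``assigned'' to its original centroid $y_{\xi(i)}$; whenever a cut separates $x_i$ from its currently assigned centroid, I will re-assign $x_i$ to some centroid lying on $x_i$'s side of the cut. Writing $B$ for the current rectangle and letting $S_B$ denote the set of centroids of $\cC$ intersecting $B$, I will use a potential $\Phi(B) = |S_B|$ so that the tree has at most $k$ leaves, and I will bound the additional $k$-means cost by an amortized charge on each cut. The proof is then the combination of (i) a rule for choosing the cut that keeps the per-level re-assignment cost small and (ii) a bound on the number of levels any point experiences before it stops being charged.

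First I would describe the cut-selection inside a node $B$ containing centroids $S_B$ with $s:=|S_B|\ge 2$. Fix an axis $j$ and, for each currently live point $x$ in $B$ with assigned centroid $y$, define a \emph{forbidden interval} on axis $j$: any cut $\theta$ that lies in this interval would separate $x$ from $y$ at an $\ell_2^2$ cost deemed ``too large'' relative to $\|x-y\|_2^2$. Concretely, the interval has length proportional to the current per-point cost $\|x-y\|_2^2$ divided by an appropriate normalization (so that the total permitted extra cost per level is $O(\cost(\cC))$ in aggregate), and in the 2D setting this is a volume argument in a strip. Summing interval lengths shows the forbidden region occupies only a constant fraction of $B$'s $j$-extent. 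On the unforbidden portion, I apply Seymour's argument \cite{MR1337358}: place the centroids of $S_B$ in order along axis $j$, consider the at most $s-1$ candidate cuts lying strictly between consecutive centroids, and show that one can always find a cut $\theta$ outside the forbidden region that splits $S_B$ into two parts of sizes at most $\alpha s$ and at least $\beta$ for carefully chosen $\alpha<1,\beta$; taking the better axis gives a cut that guarantees both subproblems have at most $(1-\Omega(1/\log\log_2 s))$ times the centroid count, which is the source of the $\log\log k$ factor in the final bound.

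The main obstacle, and where the 2D-specific argument lives, is bounding the cost charged to any single point across all the levels in which it survives. I would introduce \emph{types} for live points based on their re-assignment history: a point starts as type $0$; each time it is separated from its current centroid by a cut, its type increases. The cut-selection rule, combined with the forbidden-region definition, is calibrated so that (a) a type-$0$ point pays at most $O(\|x-y\|_2^2)$ upon its first re-assignment, and (b) after at most two re-assignments in the 2D case, the point is either already at distance $\Omega(\operatorname{diam}(B))$ from its new centroid (so we can absorb any subsequent cost by the standard ``diameter upper bound'' of \cite{pmlr-v119-moshkovitz20a,laber2021price}), or both the point and its new centroid lie in a small corner of $B$ and cannot be separated by further axis-aligned cuts inside $B$ without the separating hyperplane also entering a region that has already been handled. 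This dichotomy is exactly the one sketched in the technical overview and relies on the fact that in $\bR^2$ a corner region is defined by only two inequalities, so every ``bad'' configuration after two re-assignments is confined to one quadrant of the rectangle.

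Combining everything: the Seymour-style cut keeps $\log s$ decreasing by $\Omega(1/\log\log_2 s)$ at every level, so the recursion depth is $O(\log k \cdot \log\log_2(2k))$; each live point pays at most $O(1)$ re-assignments of type $0$ or $1$, each contributing $O(\|x-y_{\xi(i)}\|_2^2)$ to the total cost by the forbidden-region calibration; and all further costs are dominated by the diameter upper bound at the re-assignment where the point becomes ``large'' or ``cornered.'' Amortizing the per-level budget against $\cost(\cC)$ yields $\cost(\cC')\le O(\log k\,\log\log_2(2k))\cdot\cost(\cC)$. Running the post-processing on any constant-factor $k$-means approximation \cite{kanungo2004local,MR3734218,grandoni2021refined} produces the final $O(\log k\log\log_2(2k))$ competitive ratio. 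I expect the most delicate step to be item (b) above — formalizing the corner argument so that a type-$2$ point is guaranteed to be unaffected by later cuts — since the other ingredients (volume bound, Seymour choice, amortization) are essentially one-dimensional calculations once the forbidden region is defined.
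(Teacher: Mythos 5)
Your high-level plan captures several correct ingredients from the paper — axis-parallel cuts chosen outside a forbidden region, types tracking re-assignment history, and a Seymour-style cut selection — but the core amortization strategy has a concrete gap that would prevent the proof from closing.

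\textbf{Seymour's argument does not bound recursion depth.} You state that the Seymour-style cut ``keeps $\log s$ decreasing by $\Omega(1/\log\log_2 s)$ at every level, so the recursion depth is $O(\log k\log\log_2(2k))$,'' and then you amortize a per-level budget of $O(\cost(\cC))$. This is not what Seymour's argument provides, and it cannot work as stated. Seymour's mean-value argument (as in the paper's Lemma in the helper section) finds a cut $\theta$ for which the \emph{derivative} of the accumulated mass is small relative to the smaller side's mass times $\log(M/M^*)\log\log(M/m)$; it says nothing about the cut being balanced. The resulting tree can be arbitrarily unbalanced, so there is no $O(\log k\log\log k)$ depth bound and ``per-level amortization'' is unavailable. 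The paper instead uses a \emph{global potential} $A(\cP)$ (a weighted combination of a ``mass'' term $M(\cP)$ pushed through a super-additive function $f$, plus squared lengths by type) and proves the single inequality $A(\cP_1)+A(\cP_2)\le A(\cP)$ at each cut, so that the initial $A(\widetilde\cP)=O(\log k\log\log_2(2k))\cost(\cC)$ already upper-bounds the total cost at the leaves — no depth bound needed, and depth is never controlled.

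\textbf{The potential must include point lengths, not just the centroid count.} Your potential $\Phi(B)=|S_B|$ is missing the essential second term. In the paper, $M(\cP)=m|Y|+\sum_{x\in R_0}\ell_x^2$, and Seymour's argument is run against this combined mass. This matters because when a type-$0$ point is separated, its new length becomes $\ell_x' = \ell_x + 2^{11}L\sqrt{\ell_x/L}$, so $(\ell_x')^2 \approx 2^{22}\ell_x L$, which is not $O(\|x-y\|_2^2)$ and can be far larger than the point's original cost. The claim that ``each re-assignment contributes $O(\|x-y_{\xi(i)}\|_2^2)$'' is therefore incorrect; the cost of a separation must instead be charged against the mass $M^*$ of the smaller subproblem via Seymour's inequality, and that is why the lengths of $R_0$ points need to be inside the mass $M(\cP)$ in the first place. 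Without this, the forbidden-region calibration has nothing to push against, and the volume argument bounds only the \emph{fraction} of the axis that is forbidden, not the cost of cuts taken outside it. Your corner/types intuition for why a point stops being charged after two separations in $d=2$ is essentially right (this matches the paper's Lemma showing $t_x(j^*)\ne 0 \Rightarrow x\notin X_+$ once the $L/8$ boundary buffers are in the forbidden region), but it only controls \emph{how many} re-assignments occur, not how expensive each one is.
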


In the rest of the section, we assume $d = 2$ and $k\ge 2$.

\subsection{Subproblem}
\label{sec:2d-subproblem}
Our algorithm $\tdpostprocess$ works in a recursive manner, constructing the tree from root to leaf. Thus, in each stage of the algorithm, we focus on a subset of the points and the centroids. Moreover, our algorithm keeps track of some helper information for every point. This leads us to the definition of a \emph{subproblem}.
\begin{definition}[Subproblem for $d = 2$] 
\label{def:2d-subproblem}
Given points $x_1,\ldots,x_n\in\bR^d$ and centroids $y_1,\ldots,y_k\in\bR^d$,
a subproblem $\cP$ consists of the following:
\begin{enumerate}
\item A subset $X\subseteq\{x_1,\ldots,x_n\}$. We focus on points $x\in X$.
\item A subset $Y\subseteq\{y_1,\ldots,y_k\}$. We focus on centroids $y\in Y$.
\item An assigned centroid $\sigma_x\in Y$ for every $x\in X$.
\item \label{item:L-infty} A length $\ell_x\ge 0$ for every point $x\in X$. We always enforce $\ell_x$ to be an upper bound on $\|x - \sigma_x\|_\infty$ (see \Cref{def:2d-valid} \Cref{def:2d-valid-1}).
While we define the $k$-means cost using the $\ell_2$-norm, in our analysis we find it more convenient to keep track of the $\ell_\infty$-norm instead.

\item A type $t_x$ for every point $x\in X$. The type $t_x$ is either a function $t_x:[d]\rightarrow \{0,1,2\}$ or the \emph{irrelevant type} $t_x = \bot$. This gives a partition of $X$ into two subsets: a subset 
\[
R = \{x\in X:t_x\ne \bot\}
\]
consisting of all \emph{relevant points}, and a subset $X\backslash R$ consisting of all \emph{irrelevant points}. The set $R$ is further partitioned into $R_0,R_1, \ldots, R_d$ defined as follows:
\[
R_i = \{x\in R:\|t_x\|_0 = i\}, \quad \textnormal{for all}\ i\in[d],
\]
where $\|t_x\|_0$ denotes the number of $j\in [d]$ with $t_x(j)\ne 0$. 
If $x\in R$ has $t_x(j)\ne 0$ for some $j\in [d]$, we ensure that $x$ is close to one of the \emph{boundaries} in the $j$-th dimension, which we formalize in \Cref{def:boundaries} and \Cref{def:2d-valid} \Cref{def:2d-valid-4}. \end{enumerate}
\end{definition}
We fix a positive real number $m$ as the \emph{centroid mass} which we determine later. We can now define two quantities $M(\cP)$ and $A(\cP)$ for a subproblem $\cP$ with respect to the centroid mass $m$. 

\begin{definition}
\label{def:cost}
Given a subproblem $\cP$, we define the following quantities:
\begin{align*}
M(\cP) & := m|Y| + \sum_{x\in R_0} \ell_x^2,\\
A(\cP) & := f(M(\cP)) + 2^{32} \sum_{x\in R_1} \ell_x^2 + 2^9\sum_{x\in R_2} \ell_x^2 + \sum_{x\in X\backslash R} \ell_x^2,
\end{align*}
where
\[
f(M) := 2^{57}M(1 + \log (M/m))\log\log_2 (2k).
\]
\end{definition}
The quantity $A(\cP)$ plays an important role in our proof of \Cref{thm:2d}. The algorithm $\tdpostprocess$ we construct for proving \Cref{thm:2d} forms an initial subproblem $\widetilde \cP$ with $A(\widetilde \cP) \le O(\log k\log\log_2 (2k))\cdot \cost(\cC)$ and divides it into smaller and smaller subproblems. We compute an explainable clustering for every subproblem, and we choose $\cC'$ to be the explainable clustering for the initial subproblem $\widetilde\cP$. 
We show that the cost of the explainable clustering we find for every subproblem $\cP$ is at most $2A(\cP)$ (\Cref{lm:2d-induction}), which implies $\cost(\cC') \le 2A(\widetilde \cP)\le O(\log k\log\log_2 (2k))\cdot \cost(\cC)$, as required by \Cref{thm:2d}.

Our algorithm $\tdpostprocess$ crucially uses the boundaries and the diameter of a subproblem defined as follows:
\begin{definition}[Subproblem boundary]
\label{def:boundaries}
Given a subproblem $\cP$, for every $j\in[d]$, we define
\begin{align*}
b_1(j) & = \min_{y\in Y}y(j),\ \textnormal{and}\\
b_2(j) & = \max_{y\in Y}y(j)
\end{align*}
as the lower and upper \emph{boundaries} in the $j$-th dimension. 
Define $L:=\max_{j\in[d]}(b_2(j) - b_1(j))$ as the diameter of the subproblem $\cP$.
\end{definition}
To impose necessary constraints on the subproblems we deal with, we focus on \emph{valid} subproblems defined below (see \Cref{fig:1} for an example of a valid subproblem).
\begin{definition}[Valid subproblem]
\label{def:2d-valid}
Given points $x_1,\ldots,x_n\in\bR^d$ and centroids $y_1,\ldots,y_k\in\bR^d$,
a subproblem $\cP = (X, Y, (\sigma_x)_{x\in X}, (\ell_x)_{x\in X}, (t_x)_{x\in X})$ is valid if all of the following hold:
\begin{enumerate}
\item \label{def:2d-valid-1} $\ell_x \ge \|x - \sigma_x\|_\infty$ for all $x\in X$.
\item \label{def:2d-valid-2} $M(\cP)/m \le 2k$.
\item \label{def:2d-valid-3} For every point $x\in X\backslash R$ and every $y\in Y$, $\ell_x \ge \|x - y\|_\infty$.
\item \label{def:2d-valid-4} If point $x\in R$ has $t_x(j) = 1$ for some $j\in[d]$, then $|x(j) - b_1(j)|\le \ell_x$.
Similarly,
if $x\in R$ has $t_x(j) = 2$, then $|x(j) - b_2(j)|\le \ell_x$.
\end{enumerate}
\end{definition}
\begin{figure}[h]
\centering
\includegraphics[width=0.8\textwidth]{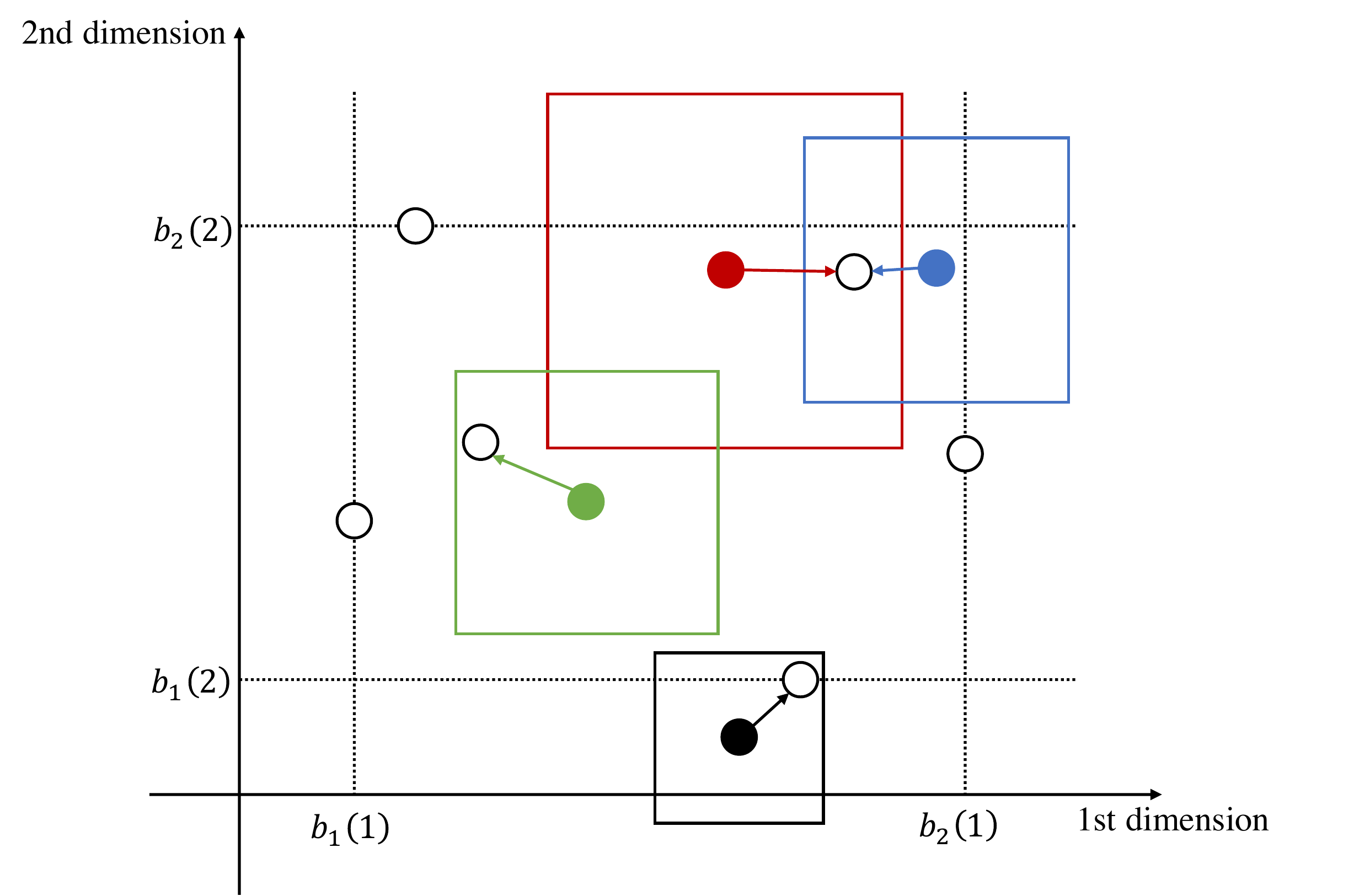}
\caption{A valid subproblem in $2$ dimenions. The hollow circles represent the centroids $y\in Y$, and the filled circles represent the relevant points $x\in R$. Irrelevant points $x\in X\backslash R$ are not shown in the figure. 
The four dotted lines represent the boundaries as in \Cref{def:boundaries}.
Every point $x\in R$ has an arrow pointing to its assigned centroid $\sigma_x$.   
We draw an $\ell_\infty$ ball (square) around every point $x\in R$ with radius $\ell_x$. The square around a point $x$ contains $\sigma_x$ by \Cref{def:2d-valid} \Cref{def:2d-valid-1}. According to \Cref{def:2d-valid} \Cref{def:2d-valid-4}, the blue point can have $t_x(1) = 2$ and $t_x(2) = 2$ because the blue square intersects the upper boundaries in both dimensions. Similarly, the red point can have $t_x(2) = 2$ but must have $t_x(1) = 0$. The green point must have $t_x(1) = t_x(2) = 0$. The black point to the bottom lies below the lower boundary in the $2$nd dimension, and it can have $t_x(2) = 1$.}
\label{fig:1}
\end{figure}

\subsection{Making a single cut}
\label{sec:2d-single-cut}
We describe an efficient algorithm $\tdsinglecut$ that takes a valid subproblem $\cP$, and produces two smaller valid subproblems $\cP_1$ and $\cP_2$ together with an axis-parallel hyperplane $(j^*,\theta)$ that separates them. Later in \Cref{sec:2d-tree}, we invoke this algorithm recursively to construct the algorithm $\tdpostprocess$ required by \Cref{thm:2d}.

Specifically, given an input subproblem $\cP = (X, Y, (\sigma_x)_{x\in X}, (\ell_x)_{x\in X}, (t_x)_{x\in X})$, the algorithm $\tdsinglecut$ computes a partition $X_1,X_2$ of $X$, a partition $Y_1,Y_2$ of $Y$, new assignments $(\sigma'_x)_{x\in X}$, new lengths $(\ell_x')_{x\in X}$, new types $(t_x')_{x\in X}$, and outputs two smaller subproblems
\begin{align}
\cP_1 & = (X_1, Y_1, (\sigma_x')_{x\in X_1}, (\ell_x')_{x\in X_1}, (t_x')_{x\in X_1}),\notag\\
\cP_2 & = (X_2, Y_2, (\sigma_x')_{x\in X_2}, (\ell_x')_{x\in X_2}, (t_x')_{x\in X_2}).
\label{eq:subproblems}
\end{align}

The partitions $(X_1,X_2)$ and $(Y_1,Y_2)$ are determined by an axis-parallel hyperplane $(j^*,\theta)\in[d]\times (b_1(j^*), b_2(j^*))$:
\begin{align}
X_1 & = X\cap B_\le (j^*,\theta) = \{x\in X:x(j^*) \le \theta\},\notag\\
X_2 & = X\cap B_> (j^*,\theta)= \{x\in X:x(j^*) > \theta\},\notag\\
Y_1 & = Y\cap B_\le (j^*,\theta)= \{y\in Y:y(j^*) \le \theta\},\notag\\
Y_2 & = Y\cap B_> (j^*,\theta)= \{y\in Y:y(j^*) > \theta\}. \label{eq:partition}
\end{align}
We always choose $j^*\in[d]$ so that $b_2(j^*) - b_1(j^*)$ is maximized, i.e., 
$b_2(j^*) - b_1(j^*) = L$.
Note that by choosing $\theta\in(b_1(j^*),b_2(j^*))$, we are implicitly requiring $b_1(j^*)<b_2(j^*)$, or equivalently, $L > 0$, which we assume to be the case. Moreover, the choice $\theta\in(b_1(j^*),b_2(j^*))$ guarantees that $Y_1$ and $Y_2$ are both non-empty, and thus they both have sizes smaller than $|Y|$, which means that the two subproblems are indeed ``smaller''.

We say a point $x\in X$ is \emph{$\sigma$-separated} if $x$ and $\sigma_x$ are separated by the hyperplane $(j^*,\theta)$. 
In other words, $\sigma$-separated points form the subset $X_+\subseteq X$ defined as follows:
\begin{equation}
\label{eq:separated}
X_+ := \{x\in X_1:\sigma_x\in Y_2\} \cup \{x\in X_2:\sigma_x \in Y_1\}.
\end{equation}
Consequently, non-$\sigma$-separated points belong to one of the following two sets
\begin{align}
X_{11} & :=\{x\in X_1:\sigma_x\in Y_1\}, \quad \textnormal{and}\notag\\
X_{22} & := \{x\in X_2:\sigma_x\in Y_2\}.
\label{eq:non-separated}
\end{align}

To make sure that the two subproblems $\cP_1,\cP_2$ are well-defined, we require that $\sigma'_x\in Y_1$ whenever $x\in X_1$ and $\sigma'_x\in Y_2$ whenever $x\in X_2$. In other words, we require that no point is $\sigma'$-separated.
This implies that for every $\sigma$-separated point $x\in X$, we must ensure $\sigma'_x\ne \sigma_x$.
On the other hand, our algorithm $\tdsinglecut$ guarantees $\sigma'_x = \sigma_x$ whenever $x$ is not $\sigma$-separated.

Our goal is to show that the two new subproblems $\cP_1,\cP_2$ created by the $\tdsinglecut$ algorithm satisfy the following lemmas, which are crucial in our analysis to obtain \Cref{thm:2d}. We always assume that the input subproblem $\cP$ is valid throughout \Cref{sec:2d-single-cut} even when we do not explicitly state so.
\begin{lemma}
\label{lm:2d-valid}
The two new subproblems $\cP_1,\cP_2$ output by $\tdsinglecut$ are both valid.
\end{lemma}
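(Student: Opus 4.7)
The plan is to verify each of the four clauses of \Cref{def:2d-valid} for $\cP_1$ and for $\cP_2$, splitting into cases according to whether a point $x\in X$ is $\sigma$-separated (so $x\in X_+$) or not (so $x\in X_{11}\cup X_{22}$). For non-$\sigma$-separated points, by the design guarantee $\sigma'_x=\sigma_x$, and provided the single-cut procedure sets $\ell'_x\ge \ell_x$ and does not ``downgrade'' $t'_x$ in a way that invalidates inherited properties, clauses \Cref{def:2d-valid-1}, \Cref{def:2d-valid-3}, and \Cref{def:2d-valid-4} transfer from $\cP$ after verifying that the new boundaries $b_i'(j)$ of $\cP_1,\cP_2$ agree with $b_i(j)$ in the dimension $j\ne j^*$ and that in dimension $j^*$ one of the boundaries is defined by the cut coordinate $\theta$, which only brings $y\in Y_1\cup Y_2$ closer, not further. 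The bulk of the work is to check clauses \Cref{def:2d-valid-1}, \Cref{def:2d-valid-3}, \Cref{def:2d-valid-4} for $\sigma$-separated points $x\in X_+$, where the algorithm reassigns $\sigma'_x$ across the hyperplane and adjusts $\ell'_x$ and $t'_x$ accordingly.

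For clause \Cref{def:2d-valid-1} on the separated points, the key step is to observe that the new centroid $\sigma'_x$ lies on the same side of $(j^*,\theta)$ as $x$, so its $\ell_\infty$-distance to $x$ is at most the maximum over the $d-1$ uncut dimensions of $|x(j)-\sigma'_x(j)|$ together with $|x(j^*)-\sigma'_x(j^*)|$; each of these is controlled either by $\ell_x$ (inherited) or by the subproblem diameter $L$ (since $\sigma'_x$ lies inside the bounding box of $Y$). The algorithm sets $\ell'_x$ to be at least this maximum, which makes clause \Cref{def:2d-valid-1} automatic. Clause \Cref{def:2d-valid-3} for any irrelevant point $x\in X\setminus R$ that remains irrelevant in the child subproblem requires the stronger bound $\ell'_x\ge \|x-y\|_\infty$ for every $y\in Y_i$; since $Y_i\subseteq Y$, and $x$ has not gained a type, this follows from the $\tdsinglecut$ design that only keeps $x$ irrelevant when $\ell'_x$ dominates all such distances.

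Clause \Cref{def:2d-valid-4} is the most delicate: whenever $t'_x(j)\in\{1,2\}$, the algorithm must certify $|x(j)-b_i'(j)|\le \ell'_x$ against the \emph{new} boundaries $b_i'(j)$ of $\cP_1$ or $\cP_2$. For $j\ne j^*$, the boundaries are inherited and the property follows from $t_x$ together with a possible increase of $\ell'_x$ over $\ell_x$. For $j=j^*$, the new boundary on the ``cut'' side is exactly $\theta$, so the property becomes $|x(j^*)-\theta|\le \ell'_x$. The plan is to argue that $\tdsinglecut$ only sets $t'_x(j^*)$ to a nonzero value when $x$ is close to the cut, i.e., within the ``safe'' strip around $\theta$, and this closeness is precisely what the forbidden-region construction used to pick $\theta$ enforces for the points it allows to cross. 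Clause \Cref{def:2d-valid-2} reduces to verifying $M(\cP_i)/m\le 2k$: since $|Y_1|,|Y_2|<|Y|$, the ``centroid mass'' shrinks strictly, and one must control the new $R_0$-sum $\sum_{x\in R_0'}(\ell'_x)^2$. This is bounded by arguing that only points already in $R_0$ (whose $\ell'_x$-values are not enlarged) or points newly demoted into $R_0$ (whose lengths are bounded by $L$ and accounted against the decrease in $|Y|$) contribute; the validity assumption $M(\cP)/m\le 2k$ then suffices.

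The main obstacle will be clause \Cref{def:2d-valid-4} for points in $X_+$ whose type changes from $t_x$ to $t'_x$ upon reassignment, because both the boundaries and the length may change simultaneously. Handling this correctly forces us to use crucially the fact that the cut coordinate $\theta$ was selected from outside a forbidden region chosen precisely so that any point that becomes relevant (or whose type index gets promoted) after the cut lies within its updated length of the new boundary. Once this invariant is stated and the case analysis on the $(t_x,t'_x)$ transitions is organized dimension by dimension, the remaining verifications are mechanical applications of $\sigma'_x=\sigma_x$ for non-separated points and triangle-inequality-style bounds for separated points.
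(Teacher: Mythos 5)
Your overall plan (verify each clause, split into $\sigma$-separated vs.\ not, handle the cut dimension $j^*$ separately) matches the paper's structure, but it contains several substantive errors and a genuine gap that would prevent the argument from closing.

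First, the claim that ``the new boundary on the `cut' side is exactly $\theta$'' is false. The boundaries are defined by \Cref{def:boundaries} as $b'_1(j)=\min_{y\in Y_i}y(j)$, $b'_2(j)=\max_{y\in Y_i}y(j)$, so for $\cP_1$ the relevant new boundary $b'_2(j^*)=\max_{y\in Y_1}y(j^*)$ is at most $\theta$ but typically strictly less. Your proposed closeness-to-$\theta$ argument therefore does not directly verify \Cref{def:2d-valid} \Cref{def:2d-valid-4}. What the paper actually exploits is the sandwiching $\sigma'_x(j^*)\le b'_2(j^*)\le \sigma_x(j^*)$ (since $\sigma'_x\in Y_1$ and $b'_2(j^*)\le\theta<\sigma_x(j^*)$), which gives $|x(j^*)-b'_2(j^*)|\le\max\{|x(j^*)-\sigma'_x(j^*)|,\,|x(j^*)-\sigma_x(j^*)|\}\le\max\{\ell'_x,\ell_x\}=\ell'_x$. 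A similar sandwich handles $j\ne j^*$. You also misattribute the role of the forbidden region: it is not there to force $|x(j^*)-\theta|\le\ell'_x$ (that much follows immediately from $x\in X_1$, $\sigma_x\in Y_2$, and \Cref{def:2d-valid} \Cref{def:2d-valid-1}); it is there to make \Cref{lm:2d-separated-relevant} \Cref{lm:2d-separated-relevant-3} hold, i.e.\ to bound $q_x$.

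Second, and more seriously, your verification of \Cref{def:2d-valid} \Cref{def:2d-valid-1} for $\sigma$-separated relevant points is too coarse. Saying each coordinate gap is ``controlled either by $\ell_x$... or by the subproblem diameter $L$'' would only give $\|x-\sigma'_x\|_\infty\le\ell_x+L$, but the update rule \eqref{rule:2d-ell} sets $\ell'_x=\ell_x+2^{11}L(\ell_x/L)^{1/(d-\|t_x\|_0)}$, which can be far below $\ell_x+L$ when $\ell_x\ll L$. The actual proof uses $\|x-\sigma'_x\|_\infty\le\|x-\sigma_x\|_\infty+\|\sigma_x-\eta_x\|_\infty\le\ell_x+q_x$, and then appeals to the forbidden-region guarantee $q_x\le 2^{11}L(\ell_x/L)^{1/(d-\|t_x\|_0)}$ from \Cref{lm:2d-separated-relevant}. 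This is exactly the role of the forbidden region that you missed, and without invoking it the inequality $\ell'_x\ge\|x-\sigma'_x\|_\infty$ does not follow. Finally, your discussion of \Cref{def:2d-valid-2} mentions ``points newly demoted into $R_0$''; in this algorithm types are only ever promoted (from $0$ to $1$ or $2$) or removed ($\to\bot$), never reverted, so $R'_0\subseteq R_0$ and the verification reduces to \Cref{lm:2d-M-star} giving $\max\{M(\cP_1),M(\cP_2)\}\le M(\cP)$ directly.
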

\begin{lemma}
\label{lm:2d-cost}
The two new subproblems $\cP_1,\cP_2$ output by $\tdsinglecut$ satisfy $A(\cP_1) + A(\cP_2) \le A(\cP)$.
\end{lemma}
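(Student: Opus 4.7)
The plan is to compare $A(\cP_1) + A(\cP_2)$ and $A(\cP)$ term by term by tracking, for each point $x\in X$, how its contribution to $A$ changes across the cut as $(\ell_x,t_x)$ is updated to $(\ell'_x,t'_x)$. The argument exploits the wide multiplicative gap between the coefficient $2^{57}$ inside $f$ and the coefficients $2^{32}$, $2^9$, and $1$ attached to the remaining three levels $R_1$, $R_2$, $X\setminus R$: this gap is calibrated so that each permissible demotion $R_0\to R_1 \to R_2 \to X\setminus R$ can be paid for out of the previous level's budget, even when $\ell_x$ grows by a bounded factor during re-assignment.

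For the $f$ term I would first split $M(\cP) = m|Y| + \sum_{x\in R_0}\ell_x^2$ according to which child subproblem each centroid and each $R_0$ point ends up in. Since $(Y_1,Y_2)$ partitions $Y$, the $m|Y|$ part splits exactly. A non-$\sigma$-separated $R_0$ point either stays in $R_0$ in the appropriate child or is demoted to $R_1$/$R_2$ when it becomes close to the cut-induced boundary, while a $\sigma$-separated $R_0$ point is re-assigned and necessarily leaves $R_0$. Letting $\Delta$ denote the total $\ell_x^2$ of $R_0$ points that leave $R_0$ in the children, I have $M(\cP_1) + M(\cP_2) = M(\cP) - \Delta$. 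Because $f$ is convex on $(0,\infty)$ with $\lim_{M\to 0^+}f(M) = 0$, it is superadditive, so $f(M(\cP_1)) + f(M(\cP_2)) \le f(M(\cP_1)+M(\cP_2)) = f(M(\cP) - \Delta)$. Combined with the mean value estimate $f(M(\cP)) - f(M(\cP) - \Delta) \ge 2^{58}\Delta\log\log_2(2k)$, which uses the derivative $f'(M) = 2^{57}(2+\log(M/m))\log\log_2(2k)$ evaluated at some $\xi \ge M(\cP) - \Delta \ge m|Y| \ge m$, this yields a credit of order $2^{58}\Delta\log\log_2(2k)$ with which to pay for the demoted $R_0$ points now entering $R_1$ or $R_2$, at a cost of only $2^{32}(\ell'_x)^2$ or $2^9(\ell'_x)^2$ each. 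Provided $\ell'_x/\ell_x$ is bounded by a moderate constant (which the forbidden-region step of $\tdsinglecut$ is designed to ensure), the credit comfortably dominates these payments.

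For the remaining points I would handle each case separately. Non-$\sigma$-separated points keep their type and length, so their contribution moves verbatim into $A(\cP_1)$ or $A(\cP_2)$. A $\sigma$-separated $R_1$ point is demoted to $R_2$ because its type acquires a nonzero entry in the cut coordinate $j^*$ on top of its already existing one in another coordinate; its coefficient drops from $2^{32}$ to $2^9$, which covers a growth of $\ell_x$ by any factor up to $2^{11}$. A $\sigma$-separated $R_2$ point becomes irrelevant in the child (any further re-assignment would violate the forbidden region), and the factor-$2^9$ slack covers the new irrelevant cost under an analogous growth bound on $\ell'_x$. Finally, a $\sigma$-separated point that was already irrelevant can keep $\ell'_x = \ell_x$: validity of $\cP$ guarantees $\ell_x \ge \|x-y\|_\infty$ for every $y\in Y$, hence also for every $y$ in the child's centroid set.

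The step I expect to be the main obstacle is quantifying the growth factor $\ell'_x/\ell_x$ after re-assignment. This bound is delivered by the forbidden region constructed inside $\tdsinglecut$: by excluding cuts that would cause too large a re-assignment cost for any current point, the algorithm guarantees that $\ell'_x$ stays within the needed multiplicative windows ($2^{11}$ for $R_1\to R_2$, similarly for $R_2\to$ irrelevant, and moderate for $R_0 \to R_1/R_2$). Verifying these quantitative guarantees is where the bulk of the work lies; once they are in hand, the final inequality $A(\cP_1)+A(\cP_2)\le A(\cP)$ reduces to routine arithmetic using the carefully chosen constants $2^{57}, 2^{32}, 2^9$.
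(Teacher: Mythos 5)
Your overall plan — tracking each point's contribution as it is demoted through the coefficient cascade $2^{57}\to 2^{32}\to 2^9\to 1$, and using properties of $f$ to pay for the newly arrived $R_1$ points — is in the right spirit, but there is a fatal quantitative gap in the $R_0\to R_1$ step. You claim that $\ell'_x/\ell_x$ is "bounded by a moderate constant" for points promoted out of $R_0$ and that this is what the forbidden region is "designed to ensure." That is not what happens. For $d = 2$ and $\|t_x\|_0 = 0$, the forbidden region only guarantees $q_x \le 2^{11} L (\ell_x/L)^{1/2}$, so the update rule gives $\ell'_x = \ell_x + 2^{11}\sqrt{\ell_x L}$ and hence $(\ell'_x)^2 \le 2^{23}\,\ell_x L$. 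When $\ell_x \ll L$ this is far larger than any fixed multiple of $\ell_x^2$, and your mean-value credit of roughly $2^{58}\Delta\log\log_2(2k)$ with $\Delta=\sum_{x\in R_0\cap X_+}\ell_x^2$ simply cannot dominate $2^{32}\sum_{x\in R_0\cap X_+}(\ell'_x)^2 \approx 2^{55}\sum_{x\in R_0\cap X_+}\ell_x L$.

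What you are missing is the cutting lemma (\Cref{lm:2d-cutting}): the value of $\theta$ is not arbitrary among non-forbidden cuts but is chosen by Seymour's mean-value argument to guarantee $\sum_{x\in R_0\cap X_+}\ell_x L \le 8 M^*\log\bigl(M(\cP)/M^*\bigr)\log\log_2\bigl(M(\cP)/m\bigr)$. The function $f$ is then calibrated so that $f(M_1^*)+f(M_2^*)+2^{57}M^*\log(M/M^*)\log\log_2(2k)\le f(M)$, using \Cref{lm:2d-M-star} to see $M(\cP_i)=M_i^*$, $M_1^*\le M^*$, and $M_2^*\le M - M^*$. In other words, the quantity absorbed by $f$ is $M^*\log(M/M^*)$, not $\Delta$; your superadditivity-plus-derivative argument loses exactly the $\log(M/M^*)$ factor that pairs with the cutting lemma. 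A secondary, smaller issue: you invoke "$\sigma$-separated $R_2$ points" becoming irrelevant, but \Cref{lm:2d-separated-relevant} shows every $x\in R\cap X_+$ has $\|t_x\|_0\le 1$, so no such points exist in the $d=2$ algorithm; the update rules never send a relevant point to $\bot$.
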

We prove the above lemmas after describing the $\tdsinglecut$ algorithm step by step in the following subsections. 
\subsubsection{Preprocessing}
\label{sec:2d-preprocessing}
For every $x\in R$ with $\ell_x\ge L/16$, we have 
\begin{equation}
\label{eq:preprocessing}
\|x - y\|_\infty\le \|x - \sigma_x\|_\infty + \|\sigma_x - y\|_\infty \le \ell_x + L \le 17\ell_x, \quad \textnormal{for all}\ y\in Y.
\end{equation}
For every such point $x$, we replace the current value of $\ell_x$ by $17\ell_x$, and set $t_x = \bot$ (thus removing $x$ from $R$). The new subproblem is still valid (\Cref{def:2d-valid} \Cref{def:2d-valid-3} follows from \eqref{eq:preprocessing}), and all points $x\in R$ in the new subproblem satisfies $\ell_x\le L/16$. Moreover, it is clear that the value of $A(\cP)$ does not increase (note that $17^2 < 2^9$). For the rest of \Cref{sec:2d-single-cut}, we use $\cP = (X, Y, (\sigma_x)_{x\in X}, (\ell_x)_{x\in X}, (t_x)_{x\in X})$ to denote the subproblem \emph{after} the preprocessing step.

\subsubsection{Forbidding}
\label{sec:2d-forbidding}
We specify a subset $F$ of the interval $(b_1(j^*),b_2(j^*))$ as the forbidden region. By making the algorithm $\tdsinglecut$ choose $\theta$ outside of the forbidden region, we can guarantee some desired properties for $\sigma$-separated points (see \Cref{lm:2d-separated-relevant}).

For every point $x\in X$, define 
$Y(x)\subseteq Y$ as the following set of centroids:
\begin{equation}
Y(x):=\left\{\begin{array}{ll}
\{y\in Y:y(j^*) \ge \min\{x(j^*), b_2(j^*)\}\},& \textnormal{if}\ x(j^*) \ge \sigma_x(j^*);
\\ 
\{y\in Y:y(j^*) \le \max\{x(j^*), b_1(j^*)\}\},& \textnormal{if}\ x(j^*) < \sigma_x(j^*).
\end{array}\right.
\label{eq:candidate-target}
\end{equation}
If $x$ is $\sigma$-separated, it is clear that every centroid in $Y(x)$ must lie on the same side of the hyperplane $(j^*,\theta)$ with $x$, in which case choosing $\sigma_x'$ from $Y(x)$ prevents $x$ from being $\sigma'$-separated.
Define $\eta_x$ as the centroid in $Y(x)$ with the smallest $\ell_\infty$ distance to $\sigma_x$, and define $q_x$ to be the corresponding $\ell_\infty$ distance:
\begin{align}
\eta_x & = \argmin_{y\in Y(x)}\|\sigma_x - y\|_\infty,\notag\\
q_x & =\|\sigma_x - \eta_x\|_\infty.
\label{eq:target}
\end{align}
Now we focus on points $x$ in the subset $T\subseteq S \subseteq R_0\cup R_1$ defined as follows:
\begin{align}
S & =\{x\in R_0\cup R_1:t_x(j^*) = 0, x(j^*)\ne \sigma_x(j^*)\}.\notag\\
T & = \left\{x\in S: \frac{q_x}{L}>2^{11}\left(\frac{\ell_x}{L}\right)^{1/(d - \|t_x\|_0)}\right\}.
\label{eq:T}
\end{align}
For every point $x\in X$, we define an interval $W_x$ as follows:
\begin{equation}
\label{eq:Wx}
W_x = \left\{
\begin{array}{ll}
[\sigma_x(j^*), x(j^*)) \cap (b_1(j^*), b_2(j^*)), & \textnormal{if}\ x(j^*) \ge \sigma_x(j^*);
\\ 
\,\![x(j^*), \sigma_x(j^*)) \cap (b_1(j^*), b_2(j^*)), & \textnormal{if}\ x(j^*) < \sigma_x(j^*).
\end{array}
\right.
\end{equation}
We define the forbidden region $F$ as
\[
F = (b_1(j^*),b_1(j^*) + L/8]\cup[b_2(j^*) - L/8, b_2(j^*))\cup \bigcup_{x\in T}W_x.
\]
It is clear that $F$ can be represented as a union of finitely many \emph{disjoint} intervals, and the representation can be computed in poly-time.
We define $F$ as above because choosing $\theta$ outside $F$ guarantees that all relevant points that can possibly be $\sigma$-separated must have good properties summarized in \Cref{lm:2d-type} and \Cref{lm:2d-separated-relevant} below, including having a relatively small value of $q_x$.
Since the algorithm needs to choose $\theta\in(b_1(j^*),b_2(j^*))$ outside the forbidden region, it is necessary to show that the forbidden region does not cover the entire interval $(b_1(j^*),b_2(j^*))$. \Cref{lm:forbidden} below makes a stronger guarantee. 

\begin{lemma}
\label{lm:2d-type}
If we choose $\theta\in (b_1(j^*),b_2(j^*))\backslash F$, then every $x\in R$ with $t_x(j^*) = 1$ belongs to $X_{11}$, and similarly every $x\in R$ with $t_x(j^*) = 2$ belongs to $X_{22}$. Consequently, every $x\in R\cap X_+$ satisfies $t_x(j^*) = 0$.
\end{lemma}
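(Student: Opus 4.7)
The plan is to observe that the lemma follows essentially from the boundary part of the forbidden region $F$ together with the preprocessing step of \Cref{sec:2d-preprocessing}. The arcs $W_x$ coming from points in $T$ do not seem to play any role here; they are reserved for later claims about $\sigma$-separated relevant points.

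First I would record the two standing facts. From $F \supseteq (b_1(j^*), b_1(j^*) + L/8] \cup [b_2(j^*) - L/8, b_2(j^*))$ and the hypothesis $\theta \in (b_1(j^*), b_2(j^*)) \setminus F$, we immediately get
\[
b_1(j^*) + L/8 < \theta < b_2(j^*) - L/8.
\]
From the preprocessing step, every $x \in R$ in the current subproblem satisfies $\ell_x \le L/16$. Combined with validity \Cref{def:2d-valid-1}, this also gives $|x(j^*) - \sigma_x(j^*)| \le \ell_x \le L/16$.

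Next I would handle the case $t_x(j^*) = 1$. Validity \Cref{def:2d-valid-4} forces $|x(j^*) - b_1(j^*)| \le \ell_x \le L/16$, so $x(j^*) \le b_1(j^*) + L/16 < \theta$, placing $x$ in $X_1$. The triangle inequality in the $j^*$-coordinate then gives
\[
\sigma_x(j^*) \le x(j^*) + L/16 \le b_1(j^*) + L/8 < \theta,
\]
so $\sigma_x \in Y_1$ and hence $x \in X_{11}$. The case $t_x(j^*) = 2$ is symmetric, using $\theta < b_2(j^*) - L/8$ and the bound $|x(j^*) - b_2(j^*)| \le \ell_x$ to conclude $x \in X_{22}$.

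Finally, for the consequence: any $x \in R \cap X_+$ is $\sigma$-separated by construction of $X_+$, so $x \notin X_{11} \cup X_{22}$. The two cases above show that $t_x(j^*) \in \{1,2\}$ would force $x \in X_{11} \cup X_{22}$, a contradiction. Since $x \in R$ means $t_x \ne \bot$ and hence $t_x(j^*) \in \{0,1,2\}$, we must have $t_x(j^*) = 0$. I do not expect any serious obstacle; the whole argument is a direct unpacking of the definitions, with the only quantitative content being that $\ell_x \le L/16$ after preprocessing and that the forbidden region eliminates the $L/8$-strips next to each boundary.
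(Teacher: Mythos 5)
Your proof is correct and follows essentially the same route as the paper: both use the preprocessing bound $\ell_x \le L/16$, validity item 4 for the boundary condition, validity item 1 plus the triangle inequality to bound $\sigma_x(j^*)$, and the $L/8$-strips in $F$ to conclude $x\in X_{11}$ (resp.\ $X_{22}$), then pass to the contrapositive for the consequence. The only cosmetic difference is that you bound $x(j^*)$ and $\sigma_x(j^*)$ in two separate steps while the paper bundles them into $\max\{x(j^*),\sigma_x(j^*)\}\le b_1(j^*)+L/8$; the content is identical.
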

\begin{proof}
Consider a point $x\in R$ with $t_x(j^*) = 1$.
By \Cref{def:2d-valid} \Cref{def:2d-valid-4}, we know $|x(j^*) - b_1(j^*)| \le \ell_x \le L/16$, where the last inequality is guaranteed by the preprocessing step. By \Cref{def:2d-valid} \Cref{def:2d-valid-1} and the triangle inequality, $|\sigma_x(j^*) - b_1(j^*)| \le \|x - \sigma_x\|_\infty + |x(j^*) - b_1(j^*)| \le \ell_x + \ell_x \le L/8$. Therefore, 
\[
\max\{x(j^*), \sigma_x(j^*)\}\le b_1(j^*) + L/8 < \theta,
\]
where the last inequality is because $(b_1(j^*), b_1(j^*) + L/8]\subseteq F$ and $\theta\in(b_1(j^*),b_2(j^*))\backslash F$. This implies $x\in X_{11}$. Similarly, every $x\in R$ with $t_x(j^*) = 2$ belongs to $X_{22}$. Since $X_+$ is disjoint from $X_{11}\cup X_{22}$, points $x\in R\cap X_+$ cannot have $t_x(j^*)\in \{1,2\}$, and thus $t_x(j^*) = 0$.
\end{proof}
\begin{lemma}
\label{lm:2d-separated-relevant}
If we choose $\theta\in (b_1(j^*),b_2(j^*))\backslash F$, then every $\sigma$-separated relevant point $x\in R\cap X_+$ satisfies all of the following:
\begin{enumerate}
\item \label{lm:2d-separated-relevant-1} $x(j^*)\ne \sigma_x(j^*)$;
\item \label{lm:2d-separated-relevant-2} $t_x(j^*) = 0$ (and thus $\|t_x\|_0 \le 1$ and $x\in R_0\cup R_1$);
\item \label{lm:2d-separated-relevant-3} $\frac{q_x}{L}\le 2^{11}(\frac{\ell_x}{L})^{1/(d - \|t_x\|_0)}$.
\end{enumerate}
\end{lemma}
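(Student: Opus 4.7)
\medskip
\noindent\textbf{Proof proposal.} The three items all follow rather directly from unwinding the definitions of $X_+$, $F$, $W_x$, $S$, and $T$, together with the preceding \Cref{lm:2d-type}. I would dispatch them in order.

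For item~\ref{lm:2d-separated-relevant-1}, I would argue directly from the partition \eqref{eq:partition} and the definition \eqref{eq:separated} of $X_+$. If $x \in X_1$ and $\sigma_x \in Y_2$, then $x(j^*) \le \theta < \sigma_x(j^*)$; if $x \in X_2$ and $\sigma_x \in Y_1$, then $\sigma_x(j^*) \le \theta < x(j^*)$. In either case $x(j^*) \ne \sigma_x(j^*)$.

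Item~\ref{lm:2d-separated-relevant-2} is essentially just a restatement of the conclusion of \Cref{lm:2d-type}: every $x \in R \cap X_+$ has $t_x(j^*) = 0$. Since $d = 2$, this forces $\|t_x\|_0 \le 1$, so $x \in R_0 \cup R_1$, as required.

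The substantive step is item~\ref{lm:2d-separated-relevant-3}, which is where the choice of forbidden region pays off. Combining items~\ref{lm:2d-separated-relevant-1} and~\ref{lm:2d-separated-relevant-2}, any $x \in R \cap X_+$ already lies in the set $S$ from \eqref{eq:T}. It therefore suffices to show $x \notin T$, since the definition of $T$ is exactly the negation of item~\ref{lm:2d-separated-relevant-3}. I would prove this by contradiction: suppose $x \in T$, so $W_x \subseteq F$. Then I would verify that the $\sigma$-separation of $x$ forces $\theta \in W_x$. Concretely, splitting on whether $x(j^*) > \sigma_x(j^*)$ or $x(j^*) < \sigma_x(j^*)$ (which are the only cases by item~\ref{lm:2d-separated-relevant-1}), the membership of $x$ in $X_+$ places $\theta$ between $\sigma_x(j^*)$ and $x(j^*)$, and of course $\theta \in (b_1(j^*), b_2(j^*))$, so $\theta$ lands in the half-open interval defining $W_x$ in \eqref{eq:Wx}. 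But then $\theta \in F$, contradicting the hypothesis $\theta \in (b_1(j^*), b_2(j^*)) \setminus F$.

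The only place where a small amount of care is needed is matching the half-open endpoints in \eqref{eq:Wx} against the strict/non-strict inequalities defining $X_1, X_2, Y_1, Y_2$ in \eqref{eq:partition}; everything else is bookkeeping. The conceptual content is entirely loaded into the definition of $F$ and into \Cref{lm:2d-type}, so I do not expect any genuine obstacle.
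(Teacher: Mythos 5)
Your proposal is correct and follows essentially the same route as the paper's proof: item~\ref{lm:2d-separated-relevant-1} from the definition of $X_+$, item~\ref{lm:2d-separated-relevant-2} from \Cref{lm:2d-type}, and item~\ref{lm:2d-separated-relevant-3} by contradiction via $x\in S\Rightarrow x\in T\Rightarrow W_x\subseteq F$ while $\sigma$-separation forces $\theta\in W_x$. The additional care you flag about the half-open endpoints of $W_x$ versus the strict/non-strict inequalities in \eqref{eq:partition} is exactly the bookkeeping that makes $\theta\in W_x$ hold, and the paper handles it implicitly in the same way.
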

\begin{proof}

\Cref{lm:2d-separated-relevant-1} is obvious, since a point $x$ cannot be $\sigma$-separated unless $x(j^*)\ne \sigma_x(j^*)$. 
\Cref{lm:2d-separated-relevant-2} follows directly from \Cref{lm:2d-type}. 

Assume for the sake of contradiction that \Cref{lm:2d-separated-relevant-3} is not satisfied by $x\in R\cap X_+$. We already know that \Cref{lm:2d-separated-relevant-1} and \Cref{lm:2d-separated-relevant-2} are both satisfied, so $x\in S$, and therefore $x\in T$. This implies $W_x\subseteq F$ by the definition of $F$. However, the fact that $x\in X_+$ implies $\theta\in W_x$, and thus $\theta\in F$, a contradiction.
\end{proof}

\begin{lemma}
\label{lm:forbidden}
The forbidden region $F$ has length (i.e.\ Lebesgue measure) at most $L/2$.
\end{lemma}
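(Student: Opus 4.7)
The plan is to decompose $F$ into its three constituent parts: the two explicit boundary intervals, each of length $L/8$ and together contributing $L/4$, and the union $\bigcup_{x \in T} W_x$. This immediately reduces the task to showing $|\bigcup_{x \in T} W_x| \le L/4$.

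First, I simplify the union by a nesting observation. By definition each $W_x$ is a half-open interval anchored at $\sigma_x(j^*)$, extending upward or downward according to the sign of $x(j^*) - \sigma_x(j^*)$, with length at most $|x(j^*) - \sigma_x(j^*)| \le \|x - \sigma_x\|_\infty \le \ell_x$. Grouping the points of $T$ by $\sigma_x$ and by direction of extension, the $W_x$'s within each group are nested (they share the anchor $\sigma_x(j^*)$), so the union over each group is a single interval of length at most $\max_x \ell_x$ in the group. Hence
\[
\Bigl|\bigcup_{x \in T} W_x\Bigr| \le \sum_{y \in Y}\bigl(\ell_y^\uparrow + \ell_y^\downarrow\bigr),
\]
where $\ell_y^\uparrow$ and $\ell_y^\downarrow$ denote the largest $\ell_x$ over $x \in T$ with $\sigma_x = y$ extending upward and downward, respectively (or $0$ if no such $x$ exists). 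This reduces the bound to a sum over at most $2|Y|$ representative points.

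The core of the proof is a volume argument on this reduced sum. For each representative $x$, the defining inequality $q_x > 2^{11}(\ell_x/L)^{1/(d - \|t_x\|_0)} L$, combined with the $\ell_\infty$-minimality of $\eta_x$ in $Y(x)$, certifies that an axis-aligned box $B_x$ anchored at $\sigma_x$---extending distance about $q_x$ on its ``far'' side (toward $\eta_x$) and distance $q_x$ in each of the remaining coordinates---contains no centroid of $Y \setminus \{\sigma_x\}$. After intersecting with the bounding box $\prod_j[b_1(j), b_2(j)]$ and accounting for the boundary constraints encoded by $t_x$ (along the $\|t_x\|_0$ axes where $\sigma_x$ is pinned near a boundary, the centroid-free region effectively spans a length of order $L$ rather than $q_x$), the $d$-dimensional Lebesgue measure of $B_x$ is at least a constant times $q_x^{d - \|t_x\|_0}\, L^{\|t_x\|_0}$. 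Rearranging the $T$ condition yields $\ell_x \le q_x^{d - \|t_x\|_0}/(2^{11(d - \|t_x\|_0)} L^{d - \|t_x\|_0 - 1})$, so each $\ell_x$ is bounded by a minuscule fraction of $|B_x|/L^{d-1}$. If the $B_x$'s were pairwise disjoint, direct summation would give $\sum(\ell_y^\uparrow + \ell_y^\downarrow) \le L^d/(2^{11d} L^{d-1}) = L/2^{11d}$, far below $L/4$.

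The main obstacle is that the centroid-free boxes $B_x$ for distinct representatives need not be disjoint, so the summation must be repaired. I plan to handle this with a Vitali-style greedy packing: process representatives in decreasing order of $q_x$, select those whose $B_x$ is disjoint from all previously selected boxes, and charge each unselected representative to the earliest selected one that overlaps it. A short geometric lemma (likely available among the helper lemmas in \Cref{sec:helper}) will show that an overlapping unselected box with smaller $q_x$ fits inside a bounded-factor enlargement of the selected box, so the total $\ell_x$ charged per selected representative is only a constant times its own. The packing-disjoint sum of selected $\ell_x$'s is $\le L/2^{11d}$ by the above, and the resulting constant blow-up still leaves large slack to conclude $\sum(\ell_y^\uparrow + \ell_y^\downarrow) \le L/4$, completing the proof.
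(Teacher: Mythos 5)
The decomposition into the two boundary intervals plus $\bigcup_{x\in T}W_x$ is right, and you correctly identify the heart of the matter as bounding $|\bigcup_{x\in T}W_x|$ by $L/4$. Your nesting observation and the centroid-free box construction are both sound ideas (and the latter does work: since $x(j^*)-\sigma_x(j^*)\le\ell_x<q_x/6$, any centroid in a box offset by $q_x/6$ in the $j^*$ direction would belong to $Y(x)$ and be strictly closer to $\sigma_x$ than $\eta_x$, a contradiction). The difficulty is in the final packing step, and there the argument breaks.

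The issue is that the $T$-condition gives only a one-sided bound $\ell_x\lesssim|B_x|/L^{d-1}$; it does \emph{not} give a matching lower bound $\ell_x\gtrsim|B_x|/L^{d-1}$, since $q_x$ (hence $|B_x|$) can be far larger than what the $T$-condition requires. Consequently, if a selected box $B_x$ has many unselected $B_{x'}$ charged to it, you cannot conclude $\sum_{x'}\ell_{x'}\lesssim\ell_x$: even if the $B_{x'}$ all lie inside a bounded enlargement of $B_x$, you would only get $\sum_{x'}\ell_{x'}\lesssim\sum_{x'}|B_{x'}|/L^{d-1}$, and nothing bounds $\sum_{x'}|B_{x'}|$ (the charged boxes need not be disjoint among themselves) nor relates it to $\ell_x$. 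So the ``short geometric lemma'' you are counting on is not available, and it is not the content of \Cref{claim:disjoint-intervals}, which is a one-dimensional Vitali lemma for intervals, not for $d$-dimensional boxes with an $\ell_x$-charging conclusion.

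The paper sidesteps this by running Vitali at the \emph{interval} level rather than the box level. Applying \Cref{claim:disjoint-intervals} to the intervals $(W_x)_{x\in T}$ yields $U\subseteq T$ with pairwise-disjoint $W_x$ and $|\bigcup_{x\in T}W_x|\le 3|\bigcup_{x\in U}W_x|\le 3\sum_{x\in U}\ell_x$; the Vitali loss is absorbed at the level of total interval measure, where it is harmless. The crucial extra step you are missing is that the disjointness of $\{W_x\}_{x\in U}$ is then \emph{propagated} to the boxes: if $B_x\cap B_{x'}\ne\emptyset$ for $x,x'\in U_>$ with $x(j^*)\le x'(j^*)$, disjointness of $W_x,W_{x'}$ forces $\sigma_{x'}(j^*)\ge x(j^*)$, hence $\sigma_{x'}\in Y(x)$ and $q_x\le\|\sigma_x-\sigma_{x'}\|_\infty$, and the triangle inequality then yields a contradiction. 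This is a metric argument, not a centroid-freeness argument, and it makes the boxes $\{B_x\}_{x\in U_>}$ genuinely pairwise disjoint, so the volume bound applies directly with no charging step. Your nesting reduction is not needed once Vitali is done on the intervals, and the centroid-freeness of $B_x$ — while true — is not what drives the disjointness.
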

\Cref{lm:forbidden} is a direct consequence of the following lemma:
\begin{lemma}
\label{lm:volume}
$|\bigcup_{x\in T}W_x|\le L/4$.
\end{lemma}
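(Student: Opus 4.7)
My plan is a volume (area) argument in the ambient $\bR^2$ plane: I associate to each representative $x \in T$ a 2-D region $V_x$ whose area is much larger than $|W_x|\cdot L$ (for $x\in T\cap R_0$) or than $|W_x|$ viewed on a 1-D boundary strip (for $x\in T\cap R_1$), then bound $\sum|V_x|$ via a multiplicity argument inside a box of area $O(L^2)$. Combined with $|\bigcup_{x\in T}W_x|\le\sum_x|W_x|$, this will yield the required bound with substantial slack.

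First I would split $T$ by type. Since $T\subseteq S\subseteq R_0\cup R_1$, write $T = T_0\sqcup T_1$ with $T_i := T\cap R_i$. Unpacking $T$'s defining inequality at $d=2$: for $x\in T_0$, $q_x^2 > 2^{22}\ell_x L \ge 2^{22}|W_x|L$; for $x\in T_1$, $q_x > 2^{11}\ell_x\ge 2^{11}|W_x|$ (so in both cases $|W_x|\le q_x/2^{11}$, using also $q_x\le L$). Next I reduce to representatives. For each centroid $y\in Y$ and each direction $\pm$, the intervals $W_x$ over $x\in T$ with $\sigma_x=y$ and $p_x$ on that side of $y(j^*)$ are \emph{nested}, since they share the endpoint $y(j^*)$, so their union is realized by the single longest, which I call the representative $x^\pm_y$. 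Thus $|\bigcup_{x\in T} W_x| \le \sum_{y,\pm}|W_{x^\pm_y}|$, and it suffices to bound this last sum.

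For each representative $x=x^+_y$ (the $-$ case is symmetric) I define the rectangle
\[
V_x := \bigl(y(j^*)+|W_x|,\ y(j^*)+q_x\bigr]\times\bigl(y(j')-q_x/2,\ y(j')+q_x/2\bigr]
\]
with $j'\ne j^*$. Its width $q_x-|W_x|\ge q_x/2$ and height $q_x$ give $|V_x|\ge q_x^2/2$. Crucially, the interior of $V_x$ is \emph{centroid-free}: any interior centroid $z$ would satisfy $z(j^*)>y(j^*)+|W_x|=p_x$, so $z\in Y(x)$, while $\|z-y\|_\infty<q_x$, contradicting the definition of $q_x$ as the minimum of $\|\sigma_x-\cdot\|_\infty$ over $Y(x)$. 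For $x\in T_0$ this yields $|V_x|\ge 2^{21}|W_x|L$. For $x\in T_1$, $y$ additionally lies within $2\ell_x<q_x/2^{10}$ of some boundary in dim $j'$, so I replace $V_x$ by its projection $I_x := (y(j^*)+|W_x|,\ y(j^*)+q_x/2]$ onto a copy of the boundary line, a 1-D segment of length $\ge q_x/4\ge 2^{9}|W_x|$ living in an ambient interval of length $L$, one per boundary class.

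The main obstacle is showing that the $V_x$'s (respectively $I_x$'s) have bounded overlap multiplicity. The approach: if $p\in V_{x^+_{y_1}}\cap V_{x^+_{y_2}}$ with $q^*_{y_1}\le q^*_{y_2}$, then either $y_2\in Y(x^+_{y_1})$, in which case the centroid-free property forces $\|y_2-y_1\|_\infty\ge q^*_{y_1}$; combined with both $y_i$ lying within $\ell_\infty$-distance $q^*_{y_i}/2$ of $p$ in dim $j'$, this gives $q^*_{y_2}\ge 2q^*_{y_1}$ in general (and $q^*_{y_2}\ge 2^{10}q^*_{y_1}$ in the $T_1$ case, using boundary-proximity). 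Otherwise $y_2$ lies in the narrow strip $(y_1(j^*),y_1(j^*)+|W_{x^+_{y_1}}|]$ of $j^*$-width $\le q^*_{y_1}/2^{11}$, which by a similar packing analysis contributes $O(1)$ per dyadic scale of $q^*$ with the scales telescoping geometrically. A standard dyadic packing count then bounds the multiplicity by an absolute constant $C$. Consequently $\sum_{x\in T_0}|V_x|\le C\cdot O(L^2)$ gives $\sum_{x\in T_0}|W_x|\le O(L)/2^{21}$, and $\sum_{x\in T_1,\text{per class}}|I_x|\le CL$ gives $\sum_{x\in T_1}|W_x|\le O(L)/2^{9}$ summed over the four boundary/direction classes. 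Taken together with the two directions for $T_0$, $|\bigcup_{x\in T}W_x|\le L/4$ with ample room to spare.
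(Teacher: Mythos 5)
Your high-level plan (centroid-free regions around each $\sigma_x$, then a volume/packing argument against a box of area $O(L^2)$) is the same idea as the paper's. But there is a genuine gap in the step where the regions are supposed to have bounded overlap, and it traces back to a structural difference in how you reduce the family of intervals.

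The paper first applies a Vitali-type covering lemma (\Cref{claim:disjoint-intervals}) to extract a subfamily $U\subseteq T$ with \emph{pairwise disjoint} intervals $W_x$ that captures a $1/3$ fraction of the union. Disjointness of the $W_x$'s is exactly what makes the subsequent box argument clean: for $x,x'\in U$ on the same side of their centroids with $x(j^*)\le x'(j^*)$, disjointness forces $\sigma_x(j^*)<x(j^*)\le\sigma_{x'}(j^*)$, which immediately puts $\sigma_{x'}\in Y(x)$ and hence $q_x\le\|\sigma_x-\sigma_{x'}\|_\infty$; with a box whose $j^*$-slab is offset to $(\sigma_x(j^*)+q_x/6,\,\sigma_x(j^*)+q_x/3)$, overlap then forces $q_x<q_{x'}/2$ and the $j^*$-slabs become disjoint --- pairwise disjointness of the boxes, not merely bounded multiplicity, with no case analysis.

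Your reduction-to-representatives step (one longest $W$ per centroid and direction, by nestedness) is correct but is \emph{not} a replacement for that Vitali step: representatives coming from different centroids can still have overlapping $W$'s, and then you lose the implication ``the other centroid lies in $Y(x)$.'' You notice this yourself --- your argument splits into the case $y_2\in Y(x^+_{y_1})$ and an ``otherwise $y_2$ lies in the narrow strip'' case --- but the second case is exactly where the structure fails, and ``a similar packing analysis contributes $O(1)$ per dyadic scale with the scales telescoping geometrically'' is not an argument. Even in the first case, with your wider box ($j'$-halfwidth $q_x/2$ and $j^*$-range of length roughly $q_x$), the inequalities you can actually extract from a common point $p$ (namely $|y_1(j')-y_2(j')|<(q_1+q_2)/2$ and $|y_1(j^*)-y_2(j^*)|<q_{\max}$) together with $\|y_1-y_2\|_\infty\ge q_1$ give only $q_2>q_1$, not the geometric gap $q_2\ge 2q_1$ that a dyadic count would require; and even a dyadic count yields multiplicity $O(\log)$ in the range of scales, not an absolute constant. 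So as written, the key claim ``$\sum|V_x|\le C\cdot O(L^2)$ with absolute $C$'' is unsupported. The fix is essentially to import the paper's step: apply the Vitali lemma to the representatives' $W$'s (or to all of $T$ directly, as the paper does), and then your centroid-free boxes can be made pairwise disjoint after shrinking them to have a small $j^*$-offset, at which point the volume bound closes as in the paper.

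Two smaller points: you omit the boundary-projection device (the paper's $\widetilde B_x$ vs.\ $B_x$ via $\pi_z$) that handles $R_1$-points whose $\sigma_x$ sits near a face of the bounding box; your sketch of replacing $V_x$ by a one-dimensional interval ``per boundary class'' is plausible in $d=2$ but would need the same disjointness input. Also note the paper proves this lemma in a $d$-general form so it can be reused in \Cref{sec:d>2}; your $d=2$-specific split into $T_0,T_1$ and the ``project to a boundary line'' trick would not carry over to higher $d$ without the projection idea.
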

\begin{proof}
While we are dealing with $d = 2$ specifically, we prove the lemma using a more general language so that the proof can be reused in \Cref{sec:d>2} where we deal with $d > 2$.

By \Cref{claim:disjoint-intervals}, we can find $U\subseteq T$ such that the intervals $(W_x)_{x\in U}$ are disjoint, and $|\bigcup_{x\in T}W_x|\le 3|\bigcup_{x\in U}W_x|$.

It remains to prove that $|\bigcup_{x\in U}W_x|\le L/12$. 
Define $U_> = \{x\in U:x(j^*) > \sigma_x(j^*)\}$.

We prove $|\bigcup_{x\in U_>}W_x|\le L/24$ via a volume argument. For every point $x\in U_>$, define a rectangular box $B_x\subseteq \bR^d$ as follows:
\[
B_x = \{z\in\bR^d: \|z - \sigma_x\|_\infty < q_x/3, z(j^*) - \sigma_x(j^*) > q_x/6  \}.
\]
We can write $B_x$ in a different way as the cartesian product $B_x = B_x(1)\times \cdots \times B_x(d)$, where $B_x(j)\subseteq \bR$ is the interval $(\sigma_x(j) - q_x/3, \sigma_x(j) + q_x/3)$ if $j\ne j^*$, and $B_x(j^*)\subseteq \bR$ is the interval $(\sigma_x(j^*) + q_x/6, \sigma_x(j^*) + q_x/3)$. Thus, the width of $B_x$ in the $j^*$-th dimension is $1/4$ times the width in other dimensions.

We show that the boxes $B_x$ are pair-wise disjoint. Assume for the sake of contradiction that a point $z$ lies in both boxes $B_x$ and $B_{x'}$ where $x,x'$ are distinct points in $U_>$. Assume w.l.o.g.\  $x(j^*) \le x'(j^*)$. Since $W_x$ and $W_{x'}$ are disjoint, we have $\sigma_x(j^*) < x(j^*) \le \sigma_{x'}(j^*) < x'(j^*)$. Therefore, $\sigma_{x'}\in Y(x)$, and thus
\[
q_x \le \|\sigma_x - \sigma_{x'}\|_\infty \le \|\sigma_x - z\|_\infty + \|\sigma_{x'} - z\|_\infty < q_x/3 + q_{x'}/3.
\]
This implies that $q_x < q_{x'}/2$, and thus $z(j^*) < \sigma_x(j^*) + q_x/3 < \sigma_{x'}(j^*) + q_{x'}/6 < z(j^*)$, a contradiction.

It is clear by definition that $q_x \le L$ for all $x\in U_>$. Therefore, the boxes $B_x$ are all contained in the large box
\[
B:= \{z\in\bR^d: \forall j\in[d],b_1(j) - L/3 \le z(j) \le b_2(j) + L/3\}.
\]
The large box $B$ contains a smaller box $B'$ defined as follows:
\[
B':= \{z\in B: \forall j\in [d]\backslash\{j^*\}, b_1(j) \le z(j) \le b_2(j)\}.
\]
For every $z\in B$, we define $\pi_z\in B'$ such that
$\pi_z(j^*) = z(j^*)$ and for all $j\ne j^*$, 
\[
\pi_z(j) = \left\{\begin{array}{ll} 
z(j),& \textnormal{if}\ b_1(j) \le z(j) \le b_2(j),
\\
b_1(j) , & \textnormal{if}\ z(j) < b_1(j),
\\
b_2(j), & \textnormal{if}\ z(j) > b_2(j).
\end{array}\right.
\]
This allows us to define another family of disjoint boxes. 
Specifically, define $\widetilde B_x = \widetilde B_x(1)\times \cdots \times \widetilde B_x(d)$ where $\widetilde B_x(j)\subseteq \bR$ are defined as follows:
\begin{enumerate}
\item $\widetilde B_x(j^*)= (\sigma_x(j^*) + q_x/6, \sigma_x(j^*) + q_x/3)$;
\item for all $j\ne j^*$ with $t_x(j) = 0$, $\widetilde B_x(j) = (\sigma_x(j) - q_x/3, \sigma_x(j) + q_x/3)$;
\item for all $j$ with $t_x(j) = 1$, $\widetilde B_x(j) = (b_1(j) - L/3, b_1(j))$;
\item for all $j$ with $t_x(j) = 2$, $\widetilde B_x(j) =  (b_2(j), b_2(j) + L/3)$.
\end{enumerate}
It is clear that $\widetilde B_x\subseteq B$. Moreover, for $j\in [d]$ with $t_x(j) = 1$, we have 
\begin{equation}
\label{eq:forbid-project}
0 \le  \sigma_x(j) - b_1(j)\le |b_1(j) - x(j)| + \|x - \sigma_x\|_\infty \le 2\ell_x < q_x/3,
\end{equation}
where the last inequality is by $x\in U_>\subseteq T$ and thus $q_x > 2^{11}\ell_x$.
Inequality \eqref{eq:forbid-project} implies $b_1(j) \in B_x(j)$ whenever $t_x(j) = 1$.
Similarly, we have $b_2(j) \in B_x(j)$ whenever $t_x(j) = 2$.
Therefore, any $z\in \widetilde B_x$ satisfies $\pi_z\in B_x$ because $z(j)\in \widetilde B_x(j)$ implies $\pi_z(j)\in B_x(j)$ for all $j\in [d]$. It is then easy to show that $\widetilde B_x$ are disjoint: if $z\in \widetilde B_x\cap\widetilde B_{x'}$, then $\pi_z\in B_x\cap B_{x'}$, a contradiction. \Cref{fig:2} shows an example of the boxes $B_x$ and $\widetilde B_x$ contained in $B$.
\begin{figure}[h]
\centering
\includegraphics[width=0.8\textwidth]{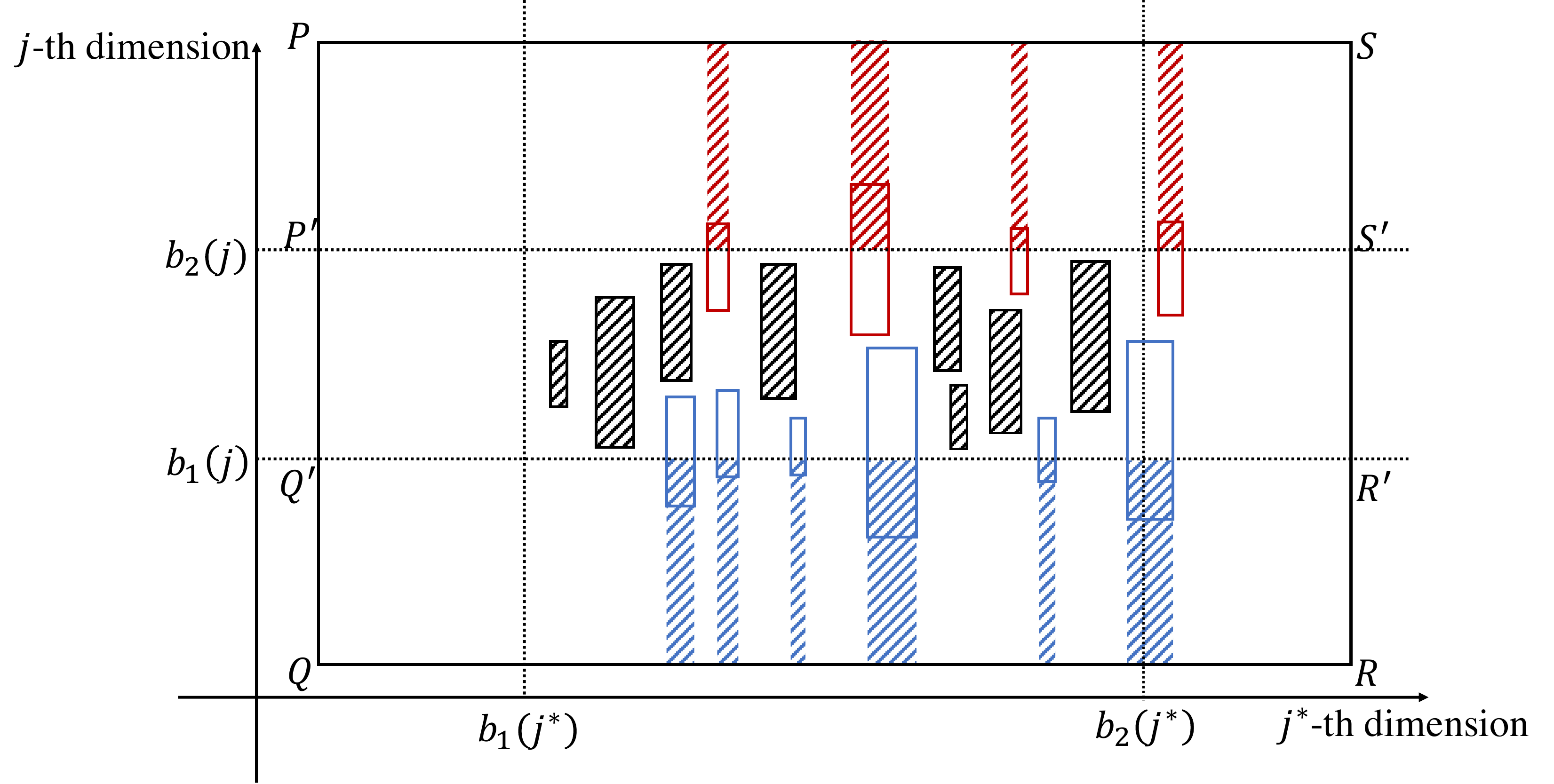}
\caption{For every $x\in U_>$, the box $B_x$ is represented by a square with solid boundary, and $\widetilde B_x$ is represented by a shaded square. The rectangle $PQRS$ represents the box $B$, and the rectangle $P'Q'R'S'$ represents the smaller box $B'$. 
If $t_x(j) = 1$, the corresponding $B_x$ (blue) intersects the lower boundary in the $j$-th dimension (the dotted horizontal line at $b_1(j)$). 
If $t_x(j) = 2$, the corresponding $B_x$ (red) intersects the upper boundary in the $j$-th dimension (the dotted horizontal line at $b_2(j)$). }
\label{fig:2}
\end{figure}

The volume of $\widetilde B_x$ can be lower bounded as follows:
\[
|\widetilde B_x| = \prod_{j=1}^d|\widetilde B_x(j)| = (1/4)(2q_x/3)^{d - \|t_x\|_0}(L/3)^{\|t_x\|_0}\ge (5L/3)^d(24\ell_x/L),
\]
where the last inequality is by the fact that $\frac{q_x}{L}> 2^{11}(\frac{\ell_x}{L})^{1/(d - \|t_x\|_0)}$, $d\ge 2$, and $\|t_x\|_0 \le 1$. Summing up, we have
\[
\sum_{x\in U_>}(5L/3)^d(24\ell_x/L) \le \sum_{x\in U_>}|\widetilde B_x| \le |B|\le  (5L/3)^d.
\]
Therefore,
\[
\sum_{x\in U_>}|W_x| \le \sum_{x\in U_>}\ell_x \le L/24.
\]

A similar argument proves $\bigcup_{x\in U\backslash U_>}|W_x|\le L/24$, which implies $|\bigcup_{x\in U}W_x|\le L/12$ and completes the proof of the lemma.
\end{proof}
\subsubsection{Cutting}
\label{sec:2d-cutting}
Our algorithm $\tdsinglecut$ chooses $\theta\in(b_1(j^*),b_2(j^*))\backslash F$ using a method by Seymour \cite{MR1337358} based on \Cref{lm:2d-cutting} below. Recall that any choice of $\theta$ defines a partition $X_1,X_2$ of $X$ and a partition $Y_1,Y_2$ of $Y$ as specified in \eqref{eq:partition}. 
It also defines $X_+, X_{11}, X_{22}$ as specified in \eqref{eq:separated} and \eqref{eq:non-separated}. We further define
\begin{align*}
M_1^* & = m|Y_1| + \sum_{x\in R_0\cap X_{11}}\ell_x^2,\\
M_2^* & = m|Y_2| + \sum_{x\in R_0\cap X_{22}}\ell_x^2,\\
M^* &= \min\{M(\cP)/2, M(\cP) - M_1^*, M(\cP) - M_2^*\}.
\end{align*}
\begin{lemma}
\label{lm:2d-cutting}
There exists $\theta \in (b_1(j^*), b_2(j^*))\backslash F$ satisfying
\begin{equation}
\label{eq:2d-cutting}
\sum_{x\in R_0\cap X_+}\ell_xL\le 8M^*\log (M(\cP)/M^*)\log\log_2(M(\cP)/m).
\end{equation}
Moreover, $\theta$ can be computed in poly-time.
\end{lemma}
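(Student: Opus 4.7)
The plan is to exhibit a good $\theta\in I := (b_1(j^*),b_2(j^*))\setminus F$ via an averaging argument. The polynomial-time construction then follows immediately: both $c(\theta):=\sum_{x\in R_0\cap X_+(\theta)}\ell_x L$ and $M^*(\theta)$ are piecewise constant on $(b_1(j^*),b_2(j^*))$ with breakpoints only at the $j^*$-th coordinates of points in $X$ and centroids in $Y$, so one can enumerate the polynomially many ``effective'' thresholds and output any that satisfies the bound.

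The two ingredients are Lemma~\ref{lm:forbidden}, which gives $|I|\ge L/2$, and a simple accounting: writing $W_x$ for the interval between $\sigma_x(j^*)$ and $x(j^*)$ (intersected with $(b_1(j^*),b_2(j^*))$), the condition $x\in X_+(\theta)$ is equivalent to $\theta\in W_x$, and $|W_x|\le|x(j^*)-\sigma_x(j^*)|\le\|x-\sigma_x\|_\infty\le\ell_x$. Hence
$$\int_I c(\theta)\,d\theta \;=\; \sum_{x\in R_0:\,x(j^*)\ne\sigma_x(j^*)}\ell_x L\cdot |W_x\cap I| \;\le\; L\sum_{x\in R_0}\ell_x^2 \;\le\; L\cdot M(\cP).$$
A naive uniform average over $I$ yields only $\min_\theta c(\theta)\le 2M(\cP)$, which matches the target bound only when $M^*(\theta)=\Theta(M(\cP))$; it is too weak whenever $M^*(\theta)$ is substantially smaller.

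To sharpen the bound I would partition $I$ into dyadic level sets $I_j:=\{\theta\in I : M^*(\theta)\in(2^{-j-1}M(\cP),2^{-j}M(\cP)]\}$ for $j=1,\dots,N$, where $N:=\lceil\log_2(M(\cP)/m)\rceil$; the bound on $N$ uses that $M^*(\theta)\ge m$ always, since both $Y_1$ and $Y_2$ are nonempty for any $\theta\in(b_1(j^*),b_2(j^*))$. On $I_j$, the target becomes $c(\theta)\le O\!\bigl(M(\cP)(j/2^j)\log\log_2(M(\cP)/m)\bigr)$, since $M^*\log(M(\cP)/M^*)=\Theta(M(\cP)\,j/2^j)$. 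Following Seymour's ``flexible'' averaging \cite{MR1337358}, I would define a probability distribution $\mu$ on $I$ that scales the uniform measure on each level $I_j$ by a harmonic factor $1/j$ (rather than uniformly across levels), and show $\mathbb{E}_\mu\bigl[c(\theta)/(M^*(\theta)\log(M(\cP)/M^*(\theta)))\bigr] = O(\log\log_2(M(\cP)/m))$. Markov's inequality then produces a $\theta\in I$ satisfying the claimed inequality with constant $8$.

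The main technical obstacle is calibrating this weighting to obtain exactly the $\log\log_2(M(\cP)/m)$ factor rather than the weaker $\log_2(M(\cP)/m)$ factor that a flat pigeonhole over the $N$ dyadic levels would give. Achieving this improvement is precisely what the paper's technical overview refers to as the ``more flexible'' feature of Seymour's argument when compared to the binary-search-with-non-uniform-probing-cost approach of \cite{laber2021price,MR1912303}; the harmonic (or similar) scaling across levels is what translates the single $\sum_j 1/j = O(\log N)$ sum into the extra $\log\log_2(M(\cP)/m)$ factor in the bound.
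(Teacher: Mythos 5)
Your high-level plan --- dyadic level sets in $M^*(\theta)$, a harmonic weighting across levels, and a union/Markov step --- is in the spirit of the paper's Seymour-style argument, and your observations that $|I|\ge L/2$, $|W_x|\le\ell_x$, $\int_I c\le LM(\cP)$, and $M^*\ge m$ (hence $N\approx\log_2(M/m)$ levels) are all correct. However, there is a genuine gap: you never establish the per-level bound $\int_{I_j} c(\theta)\,d\theta \lesssim L\cdot 2^{-j}M(\cP)$, and without it the program does not close. Your global bound $\int_I c\le LM(\cP)$ says nothing about how the integral distributes across levels; if you only knew the total, the best the $j$-th level could contribute is still $\Theta(LM)$, and pigeonholing over $N$ levels would only buy a $\log(M/m)$ factor, not $\log\log_2(M/m)$. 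The harmonic weighting is indeed what converts $\sum_j 1/j$ into the $\log N = \log\log_2(M/m)$ factor, but it is only useful once you know that the failure set at level $j$ has measure $O(L/(j\,\log\log))$, which requires $\int_{I_j} c\lesssim 2^{-j}ML$.

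That missing ingredient is exactly what the paper's monotone accumulation function supplies: it sets $G(\theta)=mh(\theta)+\sum_{x\in R_0}\int_{b_1(j^*)}^{\theta}g_x$, which is non-decreasing, ranges in $[m,M-m]$, dominates $M_1^*$ and $M-M_2^*$ pointwise, and satisfies $G'(\theta)\ge c(\theta)/L$. Monotonicity of $G$ is what makes the sublevel set $\{\theta: \min\{G,M-G\}\le t\}$ a union of a left and a right interval, over which the integral of $G'$ (and hence of $c/L$) is at most $2t$. That is the per-level integral bound you need. (The paper then short-circuits the whole dyadic bookkeeping by reparameterizing $I$ onto $(0,L/2)$, setting $V=G\circ\gamma$, and applying the clean differential form of Seymour's lemma, Lemma~\ref{lm:mean-value-2}, followed by Claim~\ref{claim:monotone-relaxed} to pass from $M'=\min\{V,M-V\}$ to $M^*$; the reason you get $\log\log$ and not $\log$ is that the antiderivative $-\log\log_2(M/V)$ has total variation only $\log\log_2(M/m)$.) Your sketch also needs a small repair at the boundary: $M^*\log(M/M^*)=\Theta(M j/2^j)$ fails for $j=O(1)$, where the $\log$ is $\Theta(1)$ rather than $\Theta(j)$, though this only affects constants. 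To make your route rigorous, you would need to (i) introduce the monotone $G$ and show $c(\theta)\le L\,G'(\theta)$ and $M'(\theta)\le M^*(\theta)$, (ii) derive $\int_{\{M'\le t\}}c\le 2Lt$ from monotonicity, and (iii) then run the dyadic union bound with $1/j$ weights; at that point you would essentially have reproved the paper's Lemma~\ref{lm:mean-value-2}.
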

\begin{proof}
The fact that $\theta$ can be computed in poly-time follows immediately from its existence, because there are at most $|X| + |Y|$ choices of $\theta\in (b_1(j^*),b_2(j^*))\backslash F$ that lead to distinct partitions $(X_1,X_2)$ and $(Y_1,Y_2)$. It only takes poly-time to check \eqref{eq:2d-cutting} for each of the choices using the representation of $F$ as a union of disjoint intervals. Below we prove the existence of $\theta$.

Define $M = M(\cP)$.
For every point $x\in R_0$, define a function $g_x:(b_1(j^*),b_2(j^*))\rightarrow [0,+\infty)$ such that 
$g_x(\theta) = \ell_x^2/|W_x|$ if $\theta\in W_x$, and
$g_x(\theta) = 0$ otherwise. 
Define $h(\theta)$ as the number of centroids $y\in Y$ with $y(j^*)< \theta$.
Define 
\[
G(\theta) = mh(\theta) + \sum_{x\in R_0} \int_{b_1(j^*)}^\theta  g_x(\theta')\mathrm d\theta'.
\]
It is clear that $G$ is non-decreasing, and bounded between $m$ and $M - m$ for all $\theta \in(b_1(j^*),b_2(j^*))$. Moreover, for every choice of $\theta$, we have $M_1^* \le G(\theta)$ and $M_2^* \le M - G(\theta)$. Define $I_1 = (\{x(j^*):x\in X\}\cup\{y(j^*):y\in Y\})\cap(b_1(\ell^*), b_2(\ell^*))$. $G$ is differentiable on $(b_1(\ell^*),b_2(\ell^*))\backslash I_1$, where $G'(\theta) = \sum_{x\in R_0} g(\theta)$.

By \Cref{lm:forbidden}, the total length of the non-forbidden region is at least $L/2$. Therefore, we can find real numbers $\alpha_1,\ldots, \alpha_u$ and $\beta_1,\ldots,\beta_u$ such that 
\begin{enumerate}
\item $b_1(j^*)\le \alpha_1 < \beta_1 \le \alpha_2 < \beta_2 \le \cdots \le \alpha_u < \beta_u \le b_2(j^*)$;
\item every $(\alpha_i,\beta_i)$ is disjoint from the forbidden region $F$;
\item $\sum_{i=1}^u(\beta_i - \alpha_i) = L/2$.
\end{enumerate}
Define $z_i:=\sum_{i' = 1}^{i}(\beta_{i'} - \alpha_{i'})$. 
We define a bijection $\gamma$ from $\bigcup_{i=1}^u(z_{i-1},z_i)$ to $\bigcup_{i=1}^u(\alpha_i,\beta_i)$ as follows: for all $z\in (z_{i-1},z_i)$, define $\gamma(z) = \alpha_i + (z - z_{i-1})$. It is clear that $\gamma$ is non-decreasing and has derivative $\gamma'(z) = 1$ for all $z\in \bigcup_{i=1}^u(z_{i-1},z_i)$.

Define $I = \{z_1,\ldots,z_{m - 1}\}\cup \{\gamma^{-1}(\theta):\theta\in I_1\cap \bigcup_{i=1}^u(\alpha_i,\beta_i)\}$. $I$ is a finite subset of $(0,L/2)$.
Define $V:(0,L/2)\backslash I\rightarrow [m,M - m]$ by $V(z) = G(\gamma(z))$. 
Then $V$ is a non-decreasing function on $(0,L/2)\backslash I$ with derivative $V'(z) = G'(\gamma(z))$.
By \Cref{lm:mean-value-2}, we can find $z\in (0, L/2)\backslash I$ such that 
\[
V'(z) \le (4/L)M'\log (M/M')\log\log_2(M/m), 
\]
where $M':= \min\{V(z), M - V(z)\}$. Choose $\theta =\gamma (z)$. 
We have $M'\le V(z) = G(\theta) \le M - M_2^*$ and $M'\le M - V(z) = M - G(\theta) \le M - M_1^*$.
Therefore, $M' \le M^*$.
By \Cref{claim:monotone-relaxed},
we have
\[
G'(\theta)  = V'(z) \le (8/L)M^*\log (M/M^*)\log\log_2(M/m).
\]
The lemma is proved by noting that 
\[
G'(\theta) = \sum_{x\in R_0} g_x(\theta) \ge \sum_{x\in R_0\cap X_+}g_x(\theta) \ge \sum_{x\in R_0\cap X_+}\ell_x,
\]
where the last inequality is by the easy fact that $\theta\in W_x$ whenever $x\in X_+$ and that $\ell_x^2/|W_x| \ge \ell_x$.
\end{proof}
\subsubsection{Updating}
\label{sec:2d-updating}
Having computed the hyperplane $(j^*,\theta)$, we get the partions $(X_1,X_2)$ and $(Y_1,Y_2)$ by \eqref{eq:partition}.
We now specify the new assignments $\sigma'_x$, new lengths $\ell_x'$, new types $t_x'$. The two new subproblems $\cP_1,\cP_2$ can then be formed by \eqref{eq:subproblems}.

For every non-$\sigma$-separated point $x\in X\backslash X_+$ we define $\sigma'_x = \sigma_x$, $\ell_x' = \ell_x$, and $t_x' = t_x$. For every $\sigma$-separated irrelevant point $x\in X_+\backslash R$, we define $\ell_x' = \ell_x$, $t_x' = t_x(=\bot)$, and 
define $\sigma'_x$ to be an arbitrary centroid in $Y$ that lies on the same side of the hyperplane $(j^*,\theta)$ with $x$. Such a centroid exists because $Y_1,Y_2$ are both non-empty since we choose $\theta$ from $(b_1(j^*),b_2(j^*))$.

It remains to consider relevant points that are $\sigma$-separated, i.e.\ points $x\in R\cap X_+$. These points satisfy the properties in \Cref{lm:2d-separated-relevant}. For these points, we define 
\begin{equation}
\label{rule:2d-ell}
\ell_x' = \ell_x + 2^{11}L(\ell_x/L)^{1/(d - \|t_x\|_0)} 
\end{equation}
and $\sigma'_x = \eta_x$. We define $t_x'$ to be equal to $t_x$, except that we change $t_x'(j^*)$ to either $1$ or $2$ from the original value $t_x(j^*) = 0$ (\Cref{lm:2d-separated-relevant} \Cref{lm:2d-separated-relevant-2}). Specifically, define $t_x'(j^*) = 1$ if $x\in X_2$, and $t_x'(j^*) = 2$ if $x\in X_1$.

This completes our definition of $\sigma_x', \ell_x'$ and $t_x'$.
The algorithm $\tdsinglecut$ returns the two subproblem $\cP_1,\cP_2$ formed by \eqref{eq:subproblems} together with the hyperplane $(j^*,\theta)$.
Before we prove \Cref{lm:2d-valid} and \Cref{lm:2d-cost}, we first prove \Cref{lm:2d-M-star} below.
Define $R' = \{x\in X:t'_x\ne \bot\}$.
For $i = \{0,1,2\}$, define $R'_i = \{x\in R':\|t'_x\|_0 = i\}$.
It is clear from our update rules that $t_x' = \bot$ if and only if $t_x = \bot$, so $R' = R$.
\begin{lemma}
\label{lm:2d-M-star}
We have the following equalities and inequalities:
\begin{align}
M(\cP_1) = M_1^*, & \quad \textnormal{and} \quad M(\cP_2) = M_2^*; \label{eq:2d-M-star-1}\\
\min\{M_1^*,M_2^*\} \le M^*, & \quad \textnormal{and} \quad \max\{M_1^*,M_2^*\} \le M(\cP) - M^*. \label{eq:2d-M-star-2}
\end{align}
\end{lemma}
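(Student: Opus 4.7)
The plan is to prove the two equalities first, then derive the two inequalities from a single observation: $M_1^* + M_2^* \le M(\cP)$.

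For the equalities, I would first argue that the set $R'$ of relevant points in the updated problem coincides with $R$. This is because the update rules leave $t'_x = t_x$ for non-$\sigma$-separated points, irrelevant $\sigma$-separated points keep $t'_x = \bot$, and relevant $\sigma$-separated points only have a single coordinate of $t_x$ switched from $0$ to a nonzero value (so they remain relevant). Next, I would track which points land in $R'_0$: a $\sigma$-separated relevant point $x \in R \cap X_+$ satisfies $t_x(j^*) = 0$ by \Cref{lm:2d-separated-relevant}, and the update sets $t'_x(j^*) \in \{1,2\}$, so $\|t'_x\|_0 \ge 1$ and $x \notin R'_0$. Conversely, any point in $R_0$ that is not $\sigma$-separated keeps $t'_x = t_x$, $\ell'_x = \ell_x$, and lies on the same side as $\sigma_x$. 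Hence $R'_0 \cap X_1 = R_0 \cap X_{11}$ and $R'_0 \cap X_2 = R_0 \cap X_{22}$, with $\ell'_x = \ell_x$ on these sets. Combining with $|Y_1|, |Y_2|$ from the partition of $Y$ gives $M(\cP_1) = M_1^*$ and $M(\cP_2) = M_2^*$ directly from \Cref{def:cost}.

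For the inequalities, the key fact is that $X_{11}$ and $X_{22}$ are disjoint subsets of $X$ and $Y_1, Y_2$ partition $Y$, so
\begin{equation*}
M_1^* + M_2^* = m|Y| + \sum_{x \in R_0 \cap (X_{11} \cup X_{22})} \ell_x^2 \le m|Y| + \sum_{x \in R_0}\ell_x^2 = M(\cP).
\end{equation*}
This immediately yields $M_1^* \le M(\cP) - M_2^*$ and $M_2^* \le M(\cP) - M_1^*$. Then
\begin{equation*}
M(\cP) - M^* = \max\{M(\cP)/2,\, M_1^*,\, M_2^*\} \ge \max\{M_1^*, M_2^*\},
\end{equation*}
giving the second inequality. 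For $\min\{M_1^*, M_2^*\} \le M^*$, assume w.l.o.g.\ that $M_1^* \le M_2^*$; then $2M_1^* \le M_1^* + M_2^* \le M(\cP)$ gives $M_1^* \le M(\cP)/2$, and $M_1^* \le M_2^* \le M(\cP) - M_1^*$ together with $M_1^* \le M(\cP) - M_2^*$ (trivial since $M_2^* \ge M_1^* \ge 0$) show that $M_1^*$ is at most each of the three quantities defining $M^*$.

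No step seems genuinely hard here; the only place that requires care is the first paragraph, where I must verify that the update rules in \Cref{sec:2d-updating} produce exactly $R'_0 \cap X_1 = R_0 \cap X_{11}$ (and symmetrically for $\cP_2$), using the fact from \Cref{lm:2d-separated-relevant} that $t_x(j^*) = 0$ for every relevant $\sigma$-separated point, which is what forces the increment in $\|t'_x\|_0$ to move such points out of $R'_0$.
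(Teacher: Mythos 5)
Your proof is correct and follows essentially the same approach as the paper: establish $R'_0 \cap X_1 = R_0 \cap X_{11}$ (and symmetrically for $X_2$) by noting that no $\sigma$-separated point can be in $R'_0$, then observe $M_1^* + M_2^* \le M(\cP)$ and combine with a w.l.o.g.\ case split to get the two inequalities. One tiny slip: you attribute $M_1^* \le M(\cP) - M_2^*$ to the inequality $M_2^* \ge M_1^* \ge 0$, but that does not imply it; what you actually need (and already derived) is $M_1^* + M_2^* \le M(\cP)$, so the conclusion stands regardless.
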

\begin{proof}
According to our update rule, no point $x\in X_+$ has $\|t_x'\|_0 = 0$ because either $x\in X_+\backslash R$ and $t_x' = \bot$, or $x\in R\cap X_+$ and $\|t_x'\|_0 \ge 1$. Therefore,
a point $x\in X_1\cap R_0'$ if and only if $x\in X_1\backslash X_+ = X_{11}$, $t_x\ne \bot$, and $\|t_x\|_0 = 0$, or equivalently, $x\in R_0\cap X_{11}$.
This implies
\[
M(\cP_1) = m|Y_1| + \sum_{x\in X_1\cap R'_0}(\ell_x')^2 = m|Y_1| + \sum_{x\in R_0\cap X_{11}}(\ell_x')^2 = m|Y_1| + \sum_{x\in R_0\cap X_{11}}\ell_x^2 = M_1^*.
\]
Similarly, $M(\cP_2) = M_2^*$. This completes the proof of \eqref{eq:2d-M-star-1}.

To prove \eqref{eq:2d-M-star-2}, we assume w.l.o.g.\ that $M_1^* \le M_2^*$. It is clear from definition that $M_1^* + M_2^* \le M(\cP)$. Therefore, $M_1^* \le \min\{M(\cP)/2, M(\cP) - M_1^*, M(\cP) - M_2^*\} = M^*$. The definition of $M^*$ directly implies $M_2^* \le M(\cP) - M^*$.
\end{proof}
We conclude \Cref{sec:2d-single-cut} by proving \Cref{lm:2d-valid} and \Cref{lm:2d-cost}.
\ifdefined\soda
We first prove \Cref{lm:2d-valid}.
\begin{proof}
\else
\begin{proof}[Proof of \Cref{lm:2d-valid}]
\fi
Let $\cP = (X, Y, (\sigma_x)_{x\in X}, (\ell_x)_{x\in X}, (t_x)_{x\in X})$ denote the valid subproblem \emph{after} the preprocessing step.

We check every item in \Cref{def:2d-valid}. \Cref{def:2d-valid-3} follows immediately from the validity of $\cP$ and the fact that $\sigma'_x = \sigma_x$ whenever $x\in X\backslash R' = X\backslash R$.

Now we prove \Cref{def:2d-valid-1}. All non-$\sigma$-separated points $x\in X\backslash X_+$ have $\sigma'_x = \sigma_x$, and $\ell_x' = \ell_x$, so they satisfy $\ell_x'= \ell_x \ge \|x - \sigma_x\|_\infty = \|x - \sigma_x'\|_\infty$. By \Cref{def:2d-valid-3}, all points in $X\backslash R = X\backslash R'$ also satisfy $\ell_x' \ge \|x - \sigma'_x\|_\infty$. It remains to check \Cref{def:2d-valid-1} for $\sigma$-separated relevant points $x\in R\cap X_+$. By \Cref{lm:2d-separated-relevant}, these points satisfy 
\[
\ell_x' = \ell_x + 2^{11}L(\ell_x/L)^{1/(d - \|t_x\|_0)}\ge \ell_x + q_x \ge \|x - \sigma_x\|_\infty + \|\sigma_x - \eta_x\|_\infty \ge \|x - \eta_x\|_\infty = \|x - \sigma_x'\|_\infty.
\]

We now move on to \Cref{def:2d-valid-2}. It suffices to prove that $\max\{M(\cP_1),M(\cP_2)\}\le M(\cP)$, which follows directly from \Cref{lm:2d-M-star}.

Now we prove \Cref{def:2d-valid-4}. We prove it for $\cP_1$, and omit the similar proof for $\cP_2$. Define $b'_1(j),b'_2(j)$ similarly as $b_1(j),b_2(j)$ are defined in \Cref{def:boundaries} except that we replace $Y$ by $Y_1$.

Suppose $x\in X_1\cap R'$ has $t_x'(j) \ne 0$. If $j = j^*$ and $t_x'(j) = 2$, then by \Cref{lm:2d-type} it must be the case that $x\in R\cap X_+$. We have $\sigma'_x(j) \le b'_2(j) \le \sigma_x(j)$, so 
\[
|x(j) - b'_2(j)| \le \max\{|x(j) - \sigma'_x(j)|, |x(j) - \sigma_x(j)|\} \le \max\{\ell_x',\ell_x\} = \ell_x'. 
\]
If $j \ne j^*$ or $t_x'(j) \ne 2$, we have $t_x'(j) = t_x(j)$. Define $i = t_x'(j) = t_x(j)$. If $i = 1$, we have $b_i(j)\le b_i'(j) \le \sigma_x'(j)$; if $i = 2$, we have $b_i(j)\ge b_i'(j) \ge \sigma_x'(j)$. In both cases,
\[
|x(j) - b'_i(j)| \le \max\{|x(j) - \sigma'_x(j)|, |x(j) - b_i(j)|\} \le \max\{\ell_x',\ell_x\} = \ell_x'. 
\ifdefined\soda
\else
\qedhere
\fi
\]
\end{proof}
\ifdefined\soda
Now we prove \Cref{lm:2d-cost}.
\begin{proof}
\else
\begin{proof}[Proof of \Cref{lm:2d-cost}]
\fi
Since the preprocessing step preserves the validity of $\cP$ and does not increase $A(\cP)$, we assume w.l.o.g.\ that $\cP = (X, Y, (\sigma_x)_{x\in X}, (\ell_x)_{x\in X}, (t_x)_{x\in X})$ is the subproblem \emph{after} the preprocessing step.
Define $M = M(\cP)$.
We have
\begin{equation}
\label{eq:subproblem-cost}
A(\cP_1) + A(\cP_2) = f(M(\cP_1)) + f(M(\cP_2)) + 2^{32}\sum_{x\in R'_1}(\ell_x')^2 + 2^9\sum_{x\in R'_2}(\ell_x')^2 + \sum_{x\in X\backslash R}(\ell_x')^2.
\end{equation}
Moreover,
\begin{align}
\sum_{x\in R'_1} (\ell_x')^2 & = \sum_{x\in R_1\cap R_1'}(\ell_x')^2 + \sum_{x\in R_0\cap R_1'}(\ell_x')^2 \nonumber \\
& = \sum_{x\in R_1\cap R_1'}(\ell_x')^2 + \sum_{x\in R_0\cap X_+}(\ell_x')^2 \nonumber \\
& = \sum_{x\in R_1\cap R_1'}\ell_x^2 + \sum_{x\in R_0\cap X_+}(\ell_x + 2^{11}L(\ell_x/L)^{1/2})^2 \tag{by \eqref{rule:2d-ell}}\\
& \le \sum_{x\in R_1\cap R_1'}\ell_x^2 + 2^{23}\sum_{x\in R_0\cap X_+}\ell_xL \nonumber\\
& \le \sum_{i\in R_1\cap R_1'}\ell_x^2 + 2^{25}M^*\log(M/M^*)\log\log_2(2k),\label{eq:2d-combine-1}
\end{align}
where the last inequality is by \Cref{lm:2d-cutting} and \Cref{def:2d-valid} \Cref{def:2d-valid-2}. Similarly, 
\begin{align}
\sum_{x\in R'_2} (\ell_x')^2 & = \sum_{x\in R_2}(\ell_x')^2 + \sum_{x\in R_1\cap R_2'}(\ell_x')^2\nonumber \\
& = \sum_{x\in R_2}\ell_x^2 + \sum_{x\in R_1\cap R_2'}(\ell_x + 2^{11}\ell_x)^2\tag{by \eqref{rule:2d-ell}}\\
& \le 
\sum_{x\in R_2}\ell_x^2 + 2^{23}\sum_{x\in R_1\cap R_2'}\ell_x^2,\label{eq:2d-combine-2}
\end{align}
Applying \Cref{lm:2d-M-star},
\begin{align}
& f(M(\cP_1)) + f(M(\cP_2)) + 2^{57}M^*\log(M/M^*)\log\log_2(2k) \nonumber \\
= {} & f(M_1^*) + f(M_2^*) + 2^{57}M^*\log(M/M^*)\log\log_2(2k) \nonumber \\
\le {} & f(M^*) + f(M - M^*) + 2^{57}M^*\log(M/M^*)\log\log_2(2k)\nonumber \\
\le {} & 2^{57}\Big(M^*(1 + \log (M^*/m))\log\log_2(2k) + (M - M^*)(1 + \log(M/m))\log\log_2(2k)\nonumber \\
& + M^*\log(M/M^*)\log\log_2(2k)\Big)\nonumber \\
= {} & f(M).\label{eq:2d-combine-3}
\end{align}
Combining the inequalities as $\eqref{eq:2d-combine-1}\times 2^{32} + \eqref{eq:2d-combine-2}\times 2^9 + \eqref{eq:2d-combine-3}$ and simplifying using \eqref{eq:subproblem-cost}, we get
$
A(\cP_1) + A(\cP_2) \le A(\cP)
$,
as desired.
\end{proof}
\subsection{Building a decision tree}
\label{sec:2d-tree}
We prove \Cref{thm:2d} by describing the algorithm $\tdpostprocess$ that takes an arbitrary $k$-clustering $\cC$ and turns it into a $k$-explainable clustering with respect to a decision tree $T$.

\begin{algorithm2e}[t]
 \caption{Algorithm \texttt{post-process\_2d} via \texttt{decision\_tree\_2d}}
 \label{alg:1}
\SetKwInOut{Input}{Input}
\SetKwInOut{Output}{Output}

\SetKwProg{myalg}{Algorithm}{}{end}

\Input{A $k$-clustering $\cC$ of points $x_1,\ldots,x_n$ in $d = 2$ dimensions}
\Output{A $k$-clustering $\cC'$ of points $x_1,\ldots,x_n$, a decision tree $T$}
\BlankLine
\myalg{\tdpostprocess{$x_1,\ldots,x_n;\cC$}}{
	Let $y_1,\ldots,y_k$ be the centroids of $\cC$, and $\xi:[n]\rightarrow [d]$ be the assignment mapping of $\cC$\;
	$X \gets \{x_1,\ldots,x_n\}, Y \gets \{y_1,\ldots,y_k\}$\;
	$\forall i\in [n], \forall x\in X, \forall j\in [d]$, set $\sigma_{x_i} \gets y_{\xi(i)}, \ell_x \gets \|x - \sigma_x\|_\infty, t_x(j) \gets 0$\;
	 $\widetilde \cP \gets (X, Y, (\sigma_x)_{x\in X}, (\ell_x)_{x\in X}, (t_x)_{x\in X})$\tcc*{Initial subproblem}
	$m \gets \frac 1k\sum_{x\in X}\ell_x^2$ \tcc*{Centroid mass}
	$((\delta_x)_{x\in X}, T)\gets \tddecisiontree{$\widetilde \cP$}$\;
	Let $\xi':[n]\rightarrow [d]$ be such that $y_{\xi'(i)} = \delta_{x_i}$.\;
	Let $\cC'$ be the clustering with centroids $y_1,\ldots,y_k$ and assignment mapping $\xi'$\;
	\Return $\cC',T$\;
}

\vspace{\baselineskip}

\Input{A valid subproblem $\cP = (X, Y, (\sigma_x)_{x\in X}, (\ell_x)_{x\in X}, (t_x)_{x\in X})$}
\Output{An assigned centroid $\delta_x\in Y$ for every point $x\in X$, a decision tree $T$}

\BlankLine
\myalg{\tddecisiontree{$\cP$}}{
	Let $L$ be the diameter of $\cP$\;
	\eIf{$L = 0$}{
	Pick an arbitrary $y\in Y$ and set $\delta_x = y$ for all $x\in X$\;
	Let $T$ be the decision tree with a single node\;
	}
	{
		$(\cP_1,\cP_2,j^*,\theta) \gets \tdsinglecut{$\cP$}$\;
		$((\delta_x)_{x\in X_1}, T_1) \gets \tddecisiontree{$\cP_1$}$\;
		$((\delta_x)_{x\in X_2}, T_2) \gets \tddecisiontree{$\cP_1$}$\;
		Define $T$ to be the decision tree with root corresponding to hyperplane $(j^*,\theta)$ and its left and right sub-trees being $T_1$ and $T_2$.\;
		
	}
	\Return $(\delta_x)_{x\in X}, T$\;
}
\end{algorithm2e}

As shown in \Cref{alg:1}, our algorithm \tdpostprocess 
calls an algorithm $\tddecisiontree$ that takes a valid subproblem $\cP = (X, Y, (\sigma_x)_{x\in X}, (\ell_x)_{x\in X}, (t_x)_{x\in X})$ and produces a decision tree $T$ and an assigned centroid $\delta_x\in Y$ for every $x\in X$. Algorithm \tddecisiontree works recursively as follows: if the diameter $L$ of $\cP$ is zero, which means that all centroids $y\in Y$ are at the same location, return $((\delta_x)_{x\in X},T)$, where $\delta_x$ is identical for all $x$ and equals to an arbitrary centroid $y\in Y$, and $T$ is the tree with a single node; else, the algorithm $\tddecisiontree$ calls $\tdsinglecut$ on $\cP$, obtains a hyperplane $(j^*,\theta)$ and two subproblems $\cP_1,\cP_2$. 
The algorithm $\tddecisiontree$ recursively calls itself on $\cP_1$ and $\cP_2$, and obtains $((\delta_x)_{x\in X_1},T_1)$ and $((\delta_x)_{x\in X_2},T_2)$. The algorithm constructs a tree $T$ with root corresponding to $(j^*,\theta)$ and its left and right sub-trees being $T_1$ and $T_2$. The algorithm returns $((\delta_x)_{x\in X},T)$.

\begin{lemma}
\label{lm:2d-induction}
Assuming the input $\cP = (X, Y, (\sigma_x)_{x\in X}, (\ell_x)_{x\in X}, (t_x)_{x\in X})$ to $\tddecisiontree$ is valid.
The output $((\delta_x)_{x\in X}, T)$ of $\tddecisiontree$ satisfies the following properties.
The decision tree $T$ has at most $|Y|$ leaves.
For every leaf $v$ of the decision tree $T$ and
every pair of points $x,x'\in X$ in the region defined by $v$, we have $\delta_x = \delta_x'$. Moreover,
$\sum_{x\in X}\|x - \delta_x\|_\infty^2 \le A(\cP)$.
\end{lemma}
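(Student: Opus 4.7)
The plan is induction on $|Y|$, using \Cref{lm:2d-valid} to pass validity to the recursive calls and \Cref{lm:2d-cost} to collapse the cost bounds at each internal node.

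For the base case we handle the situation $L=0$ (which in particular covers $|Y|=1$, since a single centroid trivially has zero diameter). Here the algorithm returns the single-node tree, so there is only one leaf, its region is $\bR^d$, and all $\delta_x$ equal a common $y\in Y$. Because $L=0$ forces every centroid in $Y$ to coincide, we have $\delta_x=\sigma_x$ in the $\ell_\infty$ sense, so \Cref{def:2d-valid} \Cref{def:2d-valid-1} gives $\|x-\delta_x\|_\infty\le\ell_x$ for every $x\in X$. Summing and splitting according to the partition $R_0\cup R_1\cup R_2\cup (X\setminus R)$ of $X$, it suffices to check $\sum_{x\in R_0}\ell_x^2\le f(M(\cP))$ since the other three pieces are already dominated termwise by $A(\cP)$. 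This follows from $\sum_{x\in R_0}\ell_x^2\le M(\cP)$ combined with the bound $f(M)\ge M$, which holds because $k\ge 2$ forces $\log\log_2(2k)>0$ and $|Y|\ge 1$ forces $M(\cP)/m\ge 1$, so the leading coefficient $2^{57}(1+\log(M/m))\log\log_2(2k)$ is well above $1$.

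For the inductive step we assume $L>0$, so $\tdsinglecut$ produces a hyperplane $(j^*,\theta)$ with $\theta\in(b_1(j^*),b_2(j^*))$ together with valid subproblems $\cP_1,\cP_2$ (\Cref{lm:2d-valid}). Since $\theta$ lies strictly inside $(b_1(j^*),b_2(j^*))$, both $Y_1$ and $Y_2$ are non-empty and $|Y_1|,|Y_2|<|Y|$, so the induction hypothesis applies to each recursive call. Let $T_1,T_2$ be the returned trees. The tree $T$ has $(\textnormal{leaves of }T_1)+(\textnormal{leaves of }T_2)\le|Y_1|+|Y_2|=|Y|$ leaves. For the leaf-consistency claim, every leaf of $T$ is a leaf of $T_1$ or $T_2$; by construction its region is contained in $B_\le(j^*,\theta)$ or $B_>(j^*,\theta)$ respectively, so any $x\in X$ landing in it belongs to $X_1$ or $X_2$ accordingly, and the inductive hypothesis on the appropriate subproblem then gives a common $\delta_x$.

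Finally, because $\delta_x$ for $x\in X_i$ is determined entirely by the recursive call on $\cP_i$, we may split the cost as
\[
\sum_{x\in X}\|x-\delta_x\|_\infty^2=\sum_{x\in X_1}\|x-\delta_x\|_\infty^2+\sum_{x\in X_2}\|x-\delta_x\|_\infty^2\le A(\cP_1)+A(\cP_2)\le A(\cP),
\]
where the first inequality uses the inductive hypothesis and the second is exactly \Cref{lm:2d-cost}. I expect the only subtle point to be the constant-chasing in the base case, specifically verifying $f(M(\cP))\ge\sum_{x\in R_0}\ell_x^2$ using the assumption $k\ge 2$; the rest is a clean structural induction fed by the two single-cut lemmas.
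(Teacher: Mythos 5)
Your proposal is correct and follows essentially the same argument as the paper: induction on $|Y|$, with base case $L=0$ (covering $|Y|=1$), validity propagated via Lemma~\ref{lm:2d-valid}, and the cost bound closed by Lemma~\ref{lm:2d-cost}. The paper calls the base case "trivial" and omits the details you spelled out, but your verification that $\sum_{x\in R_0}\ell_x^2\le M(\cP)\le f(M(\cP))$ (using $k\ge 2$ and $M(\cP)/m\ge 1$) is exactly the constant-chasing needed to make "trivial" precise, and your cost decomposition across $X_1,X_2$ matches the paper's intent.
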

\begin{proof}
We prove the lemma by induction on $|Y|$. When $|Y| = 1$, we have $L = 0$ and thus $\tddecisiontree$ returns without calling $\tdsinglecut$. The lemma is trivial in this case. Now suppose the lemma is true when $|Y| < u$ for an integer $u > 1$, and we prove the lemma for $|Y| = u$. If $L = 0$, then again the lemma is trivial; otherwise, the algorithm $\tddecisiontree$ calls $\tdsinglecut$, and the lemma follows from the induction hypothesis together with \Cref{lm:2d-valid} and \Cref{lm:2d-cost}.
\end{proof}

Given a $k$-clustering $\cC$ of points $x_1,\ldots,x_n$ consisting of centroids $y_1,\ldots,y_k$, and an assignment mapping $\xi:[n]\rightarrow [k]$, the algorithm $\tdpostprocess$
computes an \emph{initial subproblem} $\widetilde \cP$ by setting $X = \{x_1,\ldots,x_n\}, Y = \{y_1,\ldots,y_k\}, \sigma_{x_i} = y_{\xi(i)}, \ell_x = \|x - \sigma_x\|_\infty, t_x(j) = 0, \forall j\in [d]$. Setting the centroid mass $m = \frac 1k\sum_{x\in X}\ell_x^2$, we have $\widetilde \cP$ is valid and $A(\widetilde \cP) = O(\log k\log\log_2 (2k))\cdot \cost(\cC)$. Algorithm $\tdpostprocess$ then calls $\tddecisiontree$ on $\widetilde \cP$ and obtains $(\delta_x)_{x\in X}$ and $T$. Algorithm $\tdpostprocess$ returns the decision tree $T$ and a clustering $\cC'$ with centroids $y_1,\ldots,y_k$ and assignment mapping $\xi':[n]\rightarrow [k]$ such that $y_{\xi'(i)}  = \delta_{x_i}$.
\ifdefined\soda

Now we finish the proof of \Cref{thm:2d}.
\begin{proof}
\else
\begin{proof}[Proof of \Cref{thm:2d}]
\fi
Since $\tdsinglecut$ computes $\theta$ in polynomial time by \Cref{lm:2d-cutting}, the entire algorithm $\tdsinglecut$ can be implemented in polynomial time. 
Consequently, $\tddecisiontree$ and $\tdpostprocess$ both run in polynomial time.
Let $\cC'$ and $T$ be the output of algorithm $\tdpostprocess$. By \Cref{lm:2d-induction},  we know $\cC'$ is $k$-explainable w.r.t.\ $T$, and 
\begin{align*}
\cost(\cC') = \sum_{i=1}^n\|x_i - y_{\xi'(i)}\|_2^2 \le 2 \sum_{i=1}^n\|x_i - y_{\xi'(i)}\|_\infty^2 & =  2\sum_{i=1}^n\|x_i - \delta_{x_i}\|_\infty^2 \\
& \le 2A(\widetilde\cP) \le O(\log k\log\log_2(2k))\cost(\cC).
\end{align*}
It is by definition that $\cC'$ uses the same centroids as $\cC$ does. Choosing $\cC$ as the output of a poly-time constant factor approximation algorithm for $k$-means, e.g.\ \cite{kanungo2004local,MR3734218,grandoni2021refined}, or since $d=2$, a PTAS for $k$-means \cite{MR3630998,MR3630997,MR3775817}, gives the competitive ratio bound $O(\log k\log\log_2(2k))$.
\end{proof}
\section{Explainable $k$-means in $d>2$ dimensions}
\label{sec:d>2}
We now describe our algorithm for higher dimensions, i.e., $d > 2$. The algorithm also works for $d = 2$ despite giving a worse bound than \Cref{thm:2d}. We follow the same structure as the previous section, and emphasize the differences from it. Our goal is to prove the following high-dimensional analogue of \Cref{thm:2d}.
\begin{theorem}
\label{thm:d>2}
Assume $k, d \ge 2$. 
There exists a poly-time algorithm $\postprocess$ that takes a $k$-clustering $\cC$ of $n$ points in $d$ dimensions, and outputs a clustering $\cC'$ of the $n$ points and a decision tree $T$ with at most $k$ leaves such that
\begin{enumerate}
\item $\cC'$ is $k$-explainable with respect to $T$;
\item $\cost(\cC') \le O\big(k^{1 - 2/d}(\log k)^8(\log\log_2(2k))^3d^4\big)\cdot \cost(\cC)$;
\item $\cC'$ uses the same $k$ centroids as $\cC$ does.
\end{enumerate}
Consequently, there exists a poly-time algorithm that takes $n$ points in $d$ dimensions and outputs a $k$-explainable clustering with competitive ratio $O\big(k^{1 - 2/d}(\log k)^8(\log\log_2(2k))^3d^4\big)$.
\end{theorem}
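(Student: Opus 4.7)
The plan is to follow the same architecture as Section \ref{sec:2d}, designing $\postprocess$, $\decisiontree$, and $\singlecut$ by lifting their two-dimensional counterparts to arbitrary $d\ge 2$. The subproblem definition from \Cref{def:2d-subproblem} extends verbatim except that the partition $R = R_0 \cup R_1 \cup \cdots \cup R_d$ now uses up to $d+1$ classes; the boundaries, diameter, $\sigma$-separation, forbidden region, preprocessing step, and the cutting hyperplane search all transfer unchanged. Critically, \Cref{lm:volume} was already proved for general $d$, so the forbidden region occupies at most $L/2$ of the interval $(b_1(j^*),b_2(j^*))$, and Seymour's argument \Cref{lm:2d-cutting} still supplies a $\theta$ outside $F$ whose Seymour cost is controlled by $M^*\log(M/M^*)\log\log_2(M/m)$.

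The key change is in the potential $A(\cP)$: instead of three weights $1, 2^9, 2^{32}$ on $R_0, R_1, R_2$, one introduces a full sequence $w_0 = 1, w_1, \ldots, w_d$ and defines $A(\cP) = f(M(\cP)) + \sum_{i=1}^{d} w_i \sum_{x \in R_i} \ell_x^2 + \sum_{x \in X\setminus R} \ell_x^2$ with $f$ analogous to that in \Cref{def:cost} but with a larger multiplicative constant. The update rule $\ell_x' = \ell_x + 2^{11} L (\ell_x/L)^{1/(d-i)}$ for a point promoted from $R_i$ to $R_{i+1}$ forces the weights to satisfy $w_{i+1}(\ell_x')^2 \lesssim w_i \ell_x^2 + w_{i+1} L^{2-2/(d-i)} \ell_x^{2/(d-i)}$ for every $i \in \{0,\ldots,d-1\}$, with the second term then paid for by the Seymour budget applied to the generalized volume bound obtained from \Cref{lm:volume}. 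Telescoping this recursion across $d$ possible promotions, and exploiting the fact that the per-cut volume bound picks up one extra factor of $L$ in each dimension, gives $w_d = O(k^{1-2/d} \poly(d\log k))$.

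The hard part is controlling points that eventually reach $R_d$. In two dimensions this situation was ruled out entirely: the argument in the paper's technical overview shows that after two promotions, a point is either far enough from its centroid to invoke the diameter upper bound or trapped near a corner. In general $d$, a point may be promoted up to $d$ times, after which it can no longer be re-assigned through the type mechanism, but by \Cref{def:2d-valid} \Cref{def:2d-valid-4} such a point lies within an $\ell_\infty$-ball of radius $\ell_x$ around a corner of the bounding box of $Y$. The plan is to modify $\singlecut$ to \emph{freeze} every $R_d$ point to the centroid at its corresponding corner, so that these points never incur further re-assignment cost. Bounding the total frozen contribution requires an amortized, global argument (a strengthened analogue of \Cref{lm:2d-induction}) that exploits two facts: each subproblem contains at most $2^d$ corners, and across the full recursion there are only $O(k)$ live corner classes because the decision tree has at most $k$ leaves. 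Packing these against the volume bound is precisely where the $k^{1-2/d}$ factor emerges, alongside the extra $(\log k)^8 (\log\log_2(2k))^3 d^4$ overhead absorbed by $f$.

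Once the generalized $\singlecut$ is shown to satisfy the analogues of \Cref{lm:2d-valid} and \Cref{lm:2d-cost}, the algorithm $\postprocess$ is defined exactly as $\tdpostprocess$ in \Cref{alg:1}: initialize $\sigma_{x_i} = y_{\xi(i)}$, $\ell_x = \|x - \sigma_x\|_\infty$, $t_x \equiv 0$, and $m = \frac{1}{k}\sum_{x \in X} \ell_x^2$; then recurse via $\decisiontree$. The initial subproblem $\widetilde\cP$ is valid and satisfies $A(\widetilde\cP) \le O(k^{1-2/d}(\log k)^8 (\log\log_2(2k))^3 d^4)\cdot \cost(\cC)$ from the choice of weights, and the strengthened induction lemma yields $\cost(\cC') \le 2 A(\widetilde\cP)$. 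Composing with a constant-factor approximation algorithm for unconstrained $k$-means \cite{kanungo2004local,MR3734218,grandoni2021refined} to choose $\cC$ gives the stated competitive ratio; polynomial running time is immediate from the bound on the number of admissible cuts, exactly as in the proof of \Cref{thm:2d}.
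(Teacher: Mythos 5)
Your proposal diverges substantially from the paper's approach, and the divergence exposes a genuine gap.

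\paragraph{Where the plans disagree.} The paper does \emph{not} allow types up to $R_d$. It caps $\|t_x\|_0\le 2$ (so $R = R_0\cup R_1\cup R_2$, just as in dimension two), and instead augments each point with three new fields: a \emph{color} $c_x$, a \emph{scale} $s_x$, and a \emph{potential} $p_x$. When a point in $R_2$ is $\sigma$-separated, it is demoted directly to $\bot$ with $\ell_x' = 2L$ (a diameter-type bound); there is no freezing to corner centroids anywhere in the construction. The entire $k^{1-2/d}$ factor lives inside the initial potential $p_x = 2^{54}k^{1-2/d}d^3(48\log_2 k)^3$, and the technical heart of the argument (Lemma~\ref{lm:p}) is that $p_x$ never drops below $1$.

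\paragraph{Why your telescoping cannot work as stated.} You propose fixed weights $w_0,\ldots,w_d$ and a constraint of the form $w_{i+1}(\ell_x')^2\lesssim w_i\ell_x^2 + (\text{Seymour budget})$. But the Seymour budget controls $\sum \ell_x L$, while the update rule produces $(\ell_x')^2 \approx L^2(\ell_x/L)^{2/(d-i)} = (\ell_x L)\cdot u_i^{\,1 - 2/(d-i)}$ where $u_i = L/\ell_x$. The excess multiplicative factor $u_i^{\,1-2/(d-i)}$ is \emph{point-dependent} and can be as large as $k^{1-2/(d-i)}$; a single global weight $w_{i+1}$ cannot absorb it. Compounding over multiple promotions would give an exponent $\sum_i (1-2/(d-i))$, not the desired $1-2/d$. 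The paper avoids this by making the absorbing factor itself point-dependent — that is exactly what $p_x$ is, decreasing by $u_i^{\,1-2/(d-i)}/2^{24}$ at each promotion (rule~\eqref{rule:p1}).

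\paragraph{The missing mechanism.} To show $p_x\ge 1$, one must show $\prod_i u_i^{\,1-2/(d-i)} \lesssim k^{1-2/d}\poly(d\log k)$. The product $\prod_i u_i$ is bounded by $\approx k\cdot\poly(d\log k)$ via the scale $s_x$: $s_x$ starts at $k$, shrinks by a factor $\approx s_x\ell_x(\log_2 k)/L$ at each promotion (rule~\eqref{rule:s}), and is kept $\ge 1$ by the additional forbidden sets $H_{c,t}$, which cap the number of distinct $\sigma$-targets hit per color/type class. The step from $\prod_i u_i \lesssim k$ to $\prod_i u_i^{\,1-2/(d-i)}\lesssim k^{1-2/d}$ then uses a further structural inequality ($u(2)^{2/d}\le u(1)^{2/(d-1)-2/d}$ from rule~\eqref{rule:ell}). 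Your proposal contains nothing analogous to either the scale bookkeeping or this exponent-balancing inequality. The remark about ``$O(k)$ live corner classes'' and packing them against the volume bound gestures at a similar intuition, but is not a substitute: it neither controls the per-point growth nor produces the crucial product bound $\prod_i u_i \lesssim k$. As written, the proposal therefore does not reach the stated competitive ratio.
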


The bound in \Cref{thm:d>2} can be improved when combined with \cite{makarychev2021,gamlath2021nearly,esfandiari2021almost}.

\begin{corollary}[In light of \cite{makarychev2021,gamlath2021nearly,esfandiari2021almost}]
\label{cor:main}
Assume $k,d\ge 2$.
There exists a poly-time algorithm that takes $n$ points in $d$ dimensions and outputs a $k$-explainable clustering with competitive ratio $k^{1 - 2/d}\,\polylog(k)$.
\end{corollary}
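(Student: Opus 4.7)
The plan is to obtain the corollary by taking the better of two algorithms: our own algorithm from \Cref{thm:d>2} with competitive ratio $\alpha_1(k,d) := O(k^{1-2/d}(\log k)^8(\log\log_2(2k))^3 d^4)$, and the algorithm from any of \cite{makarychev2021,gamlath2021nearly,esfandiari2021almost}, which produces a $k$-explainable $k$-means clustering with competitive ratio $\alpha_2(k) := O(k\,\polylog(k))$ independent of $d$. Since both are poly-time post-processing procedures, running both and returning the cheaper output is also poly-time, and yields a $k$-explainable clustering whose competitive ratio is at most $\min\{\alpha_1(k,d), \alpha_2(k)\}$.

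The remaining task is to show that $\min\{\alpha_1(k,d), \alpha_2(k)\} \le k^{1-2/d}\,\polylog(k)$ for every $k,d \ge 2$. I would do a case split at the threshold $d^* := \lceil \log k / \log\log_2(2k)\rceil$ (any constant-factor variant works). In the low-dimensional regime $d \le d^*$, we have $d^4 \le \polylog(k)$, so $\alpha_1(k,d)$ already has the form $k^{1-2/d}\,\polylog(k)$ and we are done using our own algorithm.

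In the high-dimensional regime $d > d^*$, I would use $\alpha_2(k)$ and show that $k\,\polylog(k) \le k^{1-2/d}\,\polylog(k)$, which is equivalent to $k^{2/d} \le \polylog(k)$. Taking logarithms, this reduces to $2\log k / d \le O(\log\log k)$, i.e., $d \ge \Omega(\log k/\log\log k)$, which holds by our choice of $d^*$. Hence in this regime the $d$-independent bound $\alpha_2(k)$ suffices.

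I do not anticipate any real technical obstacle here: the content of the corollary is exactly that the threshold where our bound becomes worse than $k\,\polylog(k)$ is precisely the threshold where $k^{1-2/d}$ becomes $\Omega(k/\polylog(k))$, so the two bounds interpolate cleanly. The only care needed is tracking that the polylogarithmic factor hidden in $\alpha_2(k)$ (which depends on the particular reference cited) combines with the $k^{2/d} \le \polylog(k)$ slack from the case split to give a single $\polylog(k)$ factor in the final bound, and that the $d^4$ factor in $\alpha_1$ is absorbed by $\polylog(k)$ in the low-dimensional case via our choice of $d^* = O(\log k/\log\log k)$.
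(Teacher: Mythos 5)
Your proposal is correct and takes essentially the same approach as the paper: a case split on the dimension, using Theorem~\ref{thm:d>2} when $d$ is small and the cited $k\,\polylog(k)$ algorithms when $d$ is large. The only cosmetic differences are that the paper sets the threshold at $d=\log k$ (so that $k^{2/d}=O(1)$ in the high-dimensional case, rather than merely $\polylog(k)$ at your threshold $\log k/\log\log k$) and explicitly dispatches on $d$ rather than running both algorithms and taking the cheaper output, but both variants work.
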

\begin{proof}
Use the following algorithm: when $d \le \log k$, invoke the algorithm in \Cref{thm:d>2} to achieve competitive ratio 
\[
O\big(k^{1 - 2/d}(\log k)^{8}(\log\log_2(2k))^3d^4\big) = k^{1 - 2/d}\,\polylog(k);
\]
when $d > \log k$, invoke the algorithm in \cite{makarychev2021,gamlath2021nearly,esfandiari2021almost} to achieve competitive ratio
\[
k\,\polylog(k) = 
k^{2/d}k^{1 - 2/d}\,\polylog (k) = O(1)\cdot k^{1 - 2/d}\,\polylog (k) = k^{1 - 2/d}\,\polylog (k).\ifdefined\soda\else\qedhere\fi
\]
\end{proof}
In the rest of the section, we assume $k,d \ge 2$.

\subsection{Subproblem}
Compared to subproblems for $d = 2$ defined in \Cref{sec:2d-subproblem},
subproblems for $d > 2$ contain more information:
\begin{definition}[Subproblem for $d > 2$] 
Given points $x_1,\ldots,x_n\in\bR^d$ and centroids $y_1,\ldots,y_k\in\bR^d$, besides what is included in \Cref{def:2d-subproblem},
a subproblem $\cP$ consists of the following in addition:
\begin{enumerate}

\item A color $c_x\in\{-1, 0, \ldots,\lfloor\log_2 k\rfloor - 1\}$ for every point $x\in X$. 
\item A scale $s_x\in [1,+\infty)$ for every point $x\in X$;
\item A potential $p_x\in (0, +\infty)$ for every point $x\in X$.
\end{enumerate}
\end{definition}

We fix a positive real number $m$ as the \emph{centroid mass} and define two quantities $M(\cP)$ and $A(\cP)$ for a subproblem $\cP$ with respect to the centroid mass $m$.
This is similar to \Cref{def:cost} but we take the potentials $p_x$ into account:
\begin{definition}
Given a subproblem $\cP$, we define the following quantities:
\begin{align*}
M(\cP) & := m|Y| + \sum_{x\in R} p_x\ell_x^2\big(16(\log (2k))^2\log\log_2 (2k)\big)^{2 - \|t_x\|_0},\\
A(\cP) & := f(M(\cP)) + \sum_{x\in X\backslash R} p_x\ell_x^2,
\end{align*}
where
\[
f(M) := 16M(M/m)^{1/\log (2k)}(1 + \log (M/m))\log\log_2(2k).
\]
\end{definition}
We define subproblem boundaries in the same way as \Cref{def:boundaries}. Our definition for valid subproblems includes more requirements than \Cref{def:valid}:
\begin{definition}[Valid subproblem]
\label{def:valid}
Given points $x_1,\ldots,x_n\in\bR^d$ and centroids $y_1,\ldots,y_k\in\bR^d$,
a subproblem $\cP = (X, Y, (\sigma_x)_{x\in X}, (\ell_x)_{x\in X}, (t_x)_{x\in X},(c_x)_{x\in X}, (s_x)_{x\in X}, (p_x)_{x\in X})$ 
is valid if in addition to the requirements in \Cref{def:2d-valid}, it satisfies all of the following:
\begin{enumerate}
\item \label{def:valid-t} For all $x\in R$, $\|t_x\|_0\le 2$. (Thus, $R = R_0 \cup R_1 \cup R_2$.)
\item \label{def:valid-power} If $x\in R_0$, then $\ell_x$ is either zero or a power of $2$, i.e., $\ell_x = 0$ or $\ell_x = 2^a$ for an integer $a$.
\item \label{def:valid-s} If $x\in R_0$, then $s_x = k$.
\item \label{def:valid-color} For every relevant point $x\in R$, its color $c_x = -1$ if and only if $x\in R_0$.
\item \label{def:valid-identical} For every color $c\in\{0,\ldots,\lfloor \log_2k\rfloor - 1\}$ and every relevant type $t:[d]\rightarrow \{0,1,2\}$, define $R_{c,t} = \{x\in R:c_x = c, t_x = t\}$ and $Y_{c,t} = \{\sigma_x:x\in R_{c,t}\}$. If $R_{c,t}\ne \emptyset$, there exists scale $s_{c,t}\ge 1$ and length $\ell_{c,t}\ge 0$ such that all points $x\in R_{c,t}$ have $s_x = s_{c,t}$ and $\ell_x = \ell_{c,t}$. Moreover, $|Y_{c,t}|\le s_{c,t}$.
\end{enumerate}
\end{definition}

\subsection{Making a single cut}
\label{sec:single-cut}
As in \Cref{sec:2d-single-cut}, we describe an efficient algorithm $\singlecut$ that takes a valid subproblem $\cP = (X, Y, (\sigma_x)_{x\in X}, (\ell_x)_{x\in X}, (t_x)_{x\in X}, (c_x)_{x\in X}, (s_x)_{x\in X}, (p_x)_{x\in X})$, and produces two smaller valid subproblems $\cP_1$ and $\cP_2$ together with an axis-parallel hyperplane $(j^*,\theta)$ that separates them. The two subproblems are defined by new assignments $(\sigma'_x)_{x\in X}$, new lengths $(\ell_x')_{x\in X}$, new types $(t_x')_{x\in X}$, new colors $(c_x')_{x\in X}$, new scales $(s_x')_{x\in X}$, and new potentials $(p_x')_{x\in X}$ as follows
\begin{align}
\cP_1 & = (X_1, Y_1, (\sigma_x')_{x\in X_1}, (\ell_x')_{x\in X_1}, (t_x')_{x\in X_1}, (c_x')_{x\in X_1}, (s_x')_{x\in X_1}, (p_x')_{x\in X_1}),\notag\\
\cP_2 & = (X_2, Y_2, (\sigma_x')_{x\in X_2}, (\ell_x')_{x\in X_2}, (t_x')_{x\in X_2}, (c_x')_{x\in X_2}, (s_x')_{x\in X_2}, (p_x')_{x\in X_2}).
\label{eq:subproblems-d>2}
\end{align}
Again, we choose $j^* = \argmax_{j\in [d]}(b_2(j) - b_1(j))$, and define $X_1,X_2,Y_1,Y_2,X_+,X_{11},X_{22}$ as in \eqref{eq:partition}, \eqref{eq:separated} and \eqref{eq:non-separated}.
Throughout \Cref{sec:single-cut},
we assume that the input subproblem $\cP$ is valid 
and has diameter $L > 0$
even when we do not explicitly state so. At the end of the section, we prove the following lemmas for the algorithm $\singlecut$.
\begin{lemma}
\label{lm:valid}
The two new subproblems $\cP_1,\cP_2$ output by $\singlecut$ are both valid.
\end{lemma}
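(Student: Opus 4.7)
The plan is to mirror the structure of the proof of \Cref{lm:2d-valid}, checking each condition in \Cref{def:valid} for both $\cP_1$ and $\cP_2$. Since \Cref{def:valid} inherits the four conditions of \Cref{def:2d-valid} and adds five new ones, I would organize the proof into two parts: first, verifying the inherited conditions via essentially the same arguments as in the 2D case; and second, handling the new bookkeeping conditions that involve the color, scale, and potential.

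For the inherited conditions, I would reuse the proof of \Cref{lm:2d-valid} verbatim, with two adjustments. First, I would appeal to the $d>2$ analogues of \Cref{lm:2d-type} and \Cref{lm:2d-separated-relevant}, which should still be consequences of the forbidding step and of the structure of $F$ (in particular, every $\sigma$-separated relevant point $x \in R \cap X_+$ should still satisfy $t_x(j^*) = 0$ and $\frac{q_x}{L} \le 2^{11}(\frac{\ell_x}{L})^{1/(d - \|t_x\|_0)}$). Second, the length update rule for $\sigma$-separated relevant points, which in the 2D case was \eqref{rule:2d-ell}, presumably takes the form $\ell_x' = \ell_x + 2^{11}L(\ell_x/L)^{1/(d - \|t_x\|_0)}$ and still dominates $\ell_x + q_x \ge \|x - \sigma_x\|_\infty + \|\sigma_x - \eta_x\|_\infty \ge \|x - \eta_x\|_\infty$, giving \Cref{def:2d-valid-1}. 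The bound $M(\cP_i) \le M(\cP)$ needed for \Cref{def:2d-valid-2} should follow from the $d > 2$ analogue of \Cref{lm:2d-M-star}, and \Cref{def:2d-valid-3} and \Cref{def:2d-valid-4} carry over unchanged.

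For the new conditions: \Cref{def:valid-t} ($\|t_x\|_0 \le 2$) follows because either $t_x'=t_x$ (inductively valid) or $x \in R \cap X_+$ in which case $\|t_x\|_0 \le 1$ by the forbidding-step guarantee, so $\|t_x'\|_0 \le 2$ after setting $t_x'(j^*)\in\{1,2\}$. \Cref{def:valid-power}, \Cref{def:valid-s}, and \Cref{def:valid-color} are local properties of individual points and should be straightforward to verify from the update rules, which presumably round lengths to powers of $2$ when a point is first introduced into $R_0$, fix $s_x=k$ on $R_0$, and set $c_x=-1$ exactly on $R_0$; the key observation is that the update moves $\sigma$-separated relevant points out of $R_0$ (into $R_1 \cup R_2$), so those points also have their color changed to a nonnegative value consistently.

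The main obstacle will be \Cref{def:valid-identical}. When $\sigma$-separated relevant points with a shared $(c_x,t_x)$ are reassigned via $\sigma_x' = \eta_x$, the images $\eta_x$ can populate many distinct centroids of $Y$, potentially inflating $|Y_{c',t'}|$ beyond the permitted scale. I expect the updating step to handle this by giving all newly re-typed points a fresh color (for instance, one unit larger than some existing color), and simultaneously updating $s_x$ to an appropriate new scale $s_{c',t'}$ — with the value chosen precisely so that $|Y_{c',t'}|\le s_{c',t'}$ holds by construction. The verification then reduces to counting the number of distinct values $\eta_x$ takes across $x \in R_{c,t} \cap X_+$, and showing that the recoloring/rescaling rule captures this count. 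For points whose $(c_x, t_x)$ is preserved, $Y_{c,t}$ can only shrink under the partition into $Y_1,Y_2$, so validity of $\cP$ transfers to $\cP_1$ and $\cP_2$ without further work.
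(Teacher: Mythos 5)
Your outline follows the paper's high-level organization (verify the inherited conditions from \Cref{def:2d-valid}, then the new bookkeeping conditions), and you correctly flag \Cref{def:valid} \Cref{def:valid-identical} as the main technical hurdle. However, several of your guesses about the mechanism are wrong in ways that would break the argument.

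The most serious gap concerns \Cref{def:valid-t}. You reason that every $\sigma$-separated relevant point has $\|t_x\|_0\le 1$ ``by the forbidding-step guarantee,'' so bumping $t_x'(j^*)$ to $1$ or $2$ gives $\|t_x'\|_0\le 2$. That inference was valid in $d=2$ because $t_x(j^*)=0$ together with $d=2$ forces $\|t_x\|_0\le 1$, but it fails for $d>2$: a point in $R_2\cap X_+$ has $\|t_x\|_0=2$ with $t_x(j^*)=0$, and incrementing would give $\|t_x'\|_0=3$. The paper's update rule handles this by sending $R_2\cap X_+$ to the irrelevant set ($t_x'=\bot$, $\ell_x'=2L$), and the analogue of \Cref{lm:2d-separated-relevant} only provides the $q_x$ bound for $x\in R_0\cup R_1$, not for $R_2$. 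Without noticing this transition, your verification of \Cref{def:valid-t} and of \Cref{def:2d-valid} \Cref{def:2d-valid-3} (which now needs the separate case $x\in R_2\cap X_+$) both break.

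Your proposed mechanism for \Cref{def:valid-identical} is also not what the paper does, and as described it would not obviously work. You suggest assigning a ``fresh color'' during the update step and choosing the new scale to absorb the count of distinct $\eta_x$'s. The paper instead assigns colors deterministically during the \emph{forbidding} step based on the dyadic scale of $\ell_x$ (only for $R_0$ points, where \Cref{def:valid-power} ensures $\ell_x$ is a power of $2$, so equal color implies equal length), keeps the color unchanged in the update, and changes only the type. The bound $|Y'_{c,t}|\le s'_{c,t}$ then comes from a dedicated forbidden region $H_{c,t}$ (a union of intervals around the $j^*$-coordinates of $Y_{c,t}$) whose Markov-type size bound forces \eqref{eq:new-s}; your proposal does not contain any ingredient playing the role of $H_{c,t}$, and a naive ``fresh color'' scheme would both exhaust the bounded color space $\{0,\ldots,\lfloor\log_2 k\rfloor-1\}$ and fail to guarantee that all points in the new $R_{c,t}$ share the same $\ell_x$ and $s_x$. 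Finally, the length-update rule you guess ($\ell_x' = \ell_x + 2^{11}L(\ell_x/L)^{1/(d-\|t_x\|_0)}$) is close but not the actual rule ($\ell_x'=2^{12}L(\ell_x/L)^{1/(d-\|t_x\|_0)}$ for $R_0\cup R_1$, $\ell_x'=2L$ for $R_2$); you get the right inequality for \Cref{def:2d-valid-1}, but only because the actual rule happens to dominate yours.
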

\begin{lemma}
\label{lm:cost}
The two new subproblems $\cP_1,\cP_2$ output by $\singlecut$ satisfy $A(\cP_1) + A(\cP_2) \le A(\cP)$.
\end{lemma}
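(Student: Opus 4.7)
The plan is to follow the same template as the proof of \Cref{lm:2d-cost} for $d=2$, but with the richer subproblem data (colors $c_x$, scales $s_x$, potentials $p_x$) and the revised $M(\cP)$ in which all of $R_0, R_1, R_2$ now contribute, weighted by the factor $(16(\log(2k))^2\log\log_2(2k))^{2-\|t_x\|_0}$. After applying an analogue of the preprocessing step of \Cref{sec:2d-preprocessing} (which will not increase $A(\cP)$), I may assume $\ell_x \le L/16$ for every $x \in R$.

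I would then decompose
\[
A(\cP_1) + A(\cP_2) = f(M(\cP_1)) + f(M(\cP_2)) + \sum_{x \in (X_1 \cup X_2) \setminus R'} p'_x(\ell'_x)^2
\]
and track each point's contribution by cases: non-$\sigma$-separated points keep all their data; $\sigma$-separated irrelevant points preserve $p_x$ and $\ell_x$ and only change $\sigma_x$; $\sigma$-separated relevant points $x \in R \cap X_+$ — which by an analogue of \Cref{lm:2d-type} will satisfy $t_x(j^*) = 0$ and hence $\|t_x\|_0 \le 1$ — receive an updated type with $\|t'_x\|_0 = \|t_x\|_0 + 1$ and a length $\ell'_x \approx \ell_x + 2^{11} L (\ell_x/L)^{1/(d-\|t_x\|_0)}$ analogous to the rule \eqref{rule:2d-ell}. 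For these points the weight coefficient in $M$ shrinks by exactly one factor of $16(\log(2k))^2\log\log_2(2k)$, providing the slack needed to absorb the enlarged $(\ell'_x)^2$ plus any rearrangement of $(p_x, s_x)$ forced by \Cref{def:valid}.

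Next, I would invoke a cutting lemma analogous to \Cref{lm:2d-cutting}: there exists $\theta \in (b_1(j^*), b_2(j^*)) \setminus F$ for which the total re-assignment contribution to $M(\cP_1) + M(\cP_2)$ is bounded by $O(M^*\log(M(\cP)/M^*)\log\log_2(2k))$, with $M^* := \min\{M(\cP)/2, M(\cP) - M_1^*, M(\cP) - M_2^*\}$; the Seymour-style probing transfers directly, and the volume argument of \Cref{lm:volume} was already stated for general $d$. Combining this with a functional inequality of the form $f(M_1^*) + f(M_2^*) + c\cdot M^*\log(M/M^*)\log\log_2(2k) \le f(M(\cP))$ — which for $f(M) = 16 M(M/m)^{1/\log(2k)}(1+\log(M/m))\log\log_2(2k)$ is obtained by exploiting the extra multiplicative factor $(M/m)^{1/\log(2k)}$ that was absent in the $d=2$ definition of $f$ — yields the desired $A(\cP_1) + A(\cP_2) \le A(\cP)$.

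The main obstacle will be the bookkeeping for colors, scales, and potentials demanded by \Cref{def:valid} items \ref{def:valid-color}--\ref{def:valid-identical}. When a $\sigma$-separated relevant point migrates from the equivalence class $R_{c,t}$ to a new class $R_{c',t'}$, the new triple $(c'_x, s'_x, p'_x)$ must be chosen so that every member of $R_{c',t'}$ shares common values of $s$ and $\ell$ and so that $|Y_{c',t'}| \le s'_{c',t'}$. Since a single class can scatter into many new classes across the two sides of the cut, the color $c_x \in \{-1,\ldots,\lfloor\log_2 k\rfloor-1\}$ should be thought of as a dyadic bucketing of $|Y_{c,t}|$, and the potential $p_x$ must absorb the corresponding scale factor so that the weighted sum $\sum p'_x(\ell'_x)^2(16(\log(2k))^2\log\log_2(2k))^{2-\|t'_x\|_0}$ still balances against its predecessor once the cutting-cost bound has been applied.
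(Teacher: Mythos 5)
Your high-level plan matches the paper: decompose $A(\cP_1)+A(\cP_2)$ into the $f(\cdot)$ terms plus the irrelevant-point sum, preprocess so $\ell_x$ is small compared to $L$, invoke a cutting lemma (\Cref{lm:cutting}) to bound the $\sigma$-separated contribution, and close with a functional inequality for $f$. However, there is a genuine gap at the exact point you flag as the ``main obstacle.'' In the $d=2$ proof one had the clean identity $M(\cP_i)=M_i^*$, so one could directly write $f(M(\cP_i))=f(M_i^*)$ and feed into a concavity argument. For $d>2$ this identity \emph{fails}: $\sigma$-separated points in $R_0\cup R_1$ retain nonzero type and therefore still contribute to the $M$-sums of $\cP_1,\cP_2$, so $M(\cP_1)+M(\cP_2)$ can strictly exceed $M(\cP)$. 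Your functional inequality is stated for $f(M_1^*)+f(M_2^*)$, which silently assumes the 2d identity and is therefore not what you need to bound $f(M(\cP_1))+f(M(\cP_2))$. The paper's fix is its Lemma~\ref{lm:M-star-d>2}, specifically the bound
\[
M(\cP_1)+M(\cP_2)-M_1^*-M_2^*\ \le\ \frac{M^*}{2\log(2k)},
\]
proved using \Cref{lm:cutting} together with the fact that each such point's weight in $M$ drops by one factor of $16(\log(2k))^2\log\log_2(2k)$ (which is the source of the slack you intuited). It then introduces auxiliary quantities $M_1 = M^*+M(\cP_1)-M_1^*$ and $M_2=(M-M^*)+M(\cP_2)-M_2^*$ satisfying $M_1+M_2 \le M + \min\{M_1,M_2\}/(2\log(2k))$, and applies \Cref{claim:super-convexity} to the exponent $1+2/\log(2k)$ hidden in $f$. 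This is exactly what the $(M/m)^{1/\log(2k)}$ factor you noticed is for, but the superadditivity needs the $+\,\min\{M_1,M_2\}/(2\log(2k))$ slack to be \emph{quantified}, and that quantification comes from the missing inequality above; simply ``exploiting the extra multiplicative factor'' does not go through without it.

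Two smaller points. First, the colors $c_x$ and scales $s_x$ are entirely irrelevant to this lemma; they enter only into \Cref{lm:valid} (specifically \Cref{def:valid} item~\ref{def:valid-identical}) and the potential-lower-bound argument in \Cref{lm:p}. Your last paragraph, which is about preserving the $R_{c,t}$ equivalence structure, is addressing validity rather than cost. Also, $c_x$ is a dyadic bucketing of $\ell_x/(L/32k)$, not of $|Y_{c,t}|$. Second, a detail: the paper's update rule \eqref{rule:ell} for $d>2$ is $\ell_x'=2^{12}L(\ell_x/L)^{1/(d-\|t_x\|_0)}$ without the additive $\ell_x$ term, and it is paired with \eqref{rule:p1}/\eqref{rule:p2} chosen precisely so that $p_x'(\ell_x')^2=p_x\ell_xL$; this identity, not the shape of $\ell_x'$ per se, is what lets the cutting bound transfer cleanly to both the $M$-excess and the new irrelevant-point contribution $\sum_{x\in R_2\cap X_+}p_x'(\ell_x')^2$.
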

We describe the algorithm $\singlecut$ step by step in the following subsections.
\subsubsection{Preprocessing}
If $x\in R$ has $\ell_x\ge L/64$, we replace $\ell_x$ by $65\ell_x$, replace $p_x$ by $p_x/65^2$, and set $t_x = \bot$ (thus removing $x$ from $R$). 
Similarly to \Cref{sec:2d-preprocessing}, it is clear that the new subproblem is still valid, and in the new subproblem every point $x\in R$ satisfies $\ell_x\le L/64$. Moreover, the value of $A(\cP)$ does not increase.
For the rest of \Cref{sec:single-cut}, we use $\cP = (X, Y, (\sigma_x)_{x\in X}, (\ell_x)_{x\in X}, (t_x)_{x\in X}, (c_x)_{x\in X}, (s_x)_{x\in X}, (p_x)_{x\in X})$ to denote the subproblem \emph{after} the preprocessing step.

\subsubsection{Forbidding}
Similarly to \Cref{sec:2d-forbidding}, we specify a subset $F$ of the interval $(b_1(j^*),b_2(j^*))$ as the forbidden region.

For all $x\in X$, we define $Y(x),\eta_x, q_x,W_x$ as in \eqref{eq:candidate-target},\eqref{eq:target}, and \eqref{eq:Wx}.
We also define $S, T$ as in \eqref{eq:T}.

For a point $x\in R_0$ with $\ell_x \ge L/32k$, we change its color from $c_x = -1$ (guaranteed by \Cref{def:valid-color} in \Cref{def:valid}) to $c_x = \lfloor\log_2(\ell_x/(L/32k))\rfloor$.
Our preprocessing guarantees that $\ell_x \le L/64$, so the new $c_x$ is an integer between $0$ and $\lfloor \log_2 k\rfloor - 1$. 
Moreover, points $x\in R_0$ have $\ell_x$ being a power of $2$ by \Cref{def:valid-power} in \Cref{def:valid}. This means that if $x,x'\in R_0$ have $c_x = c_{x'} \ge 0$ after the color change, then $\ell_x = \ell_{x'}$.
By \Cref{def:valid-s} in \Cref{def:valid},
all points $x\in R_0$ have $s_x = k$,
so \Cref{def:valid-identical} in \Cref{def:valid} still holds with the new colors and the induced new definitions of $R_{c,t},Y_{c,t}, s_{c,t},\ell_{c,t}$.

For every color $c\ge 0$ and every type $t\ne \bot$, if $R_{c,t} = \emptyset$, define $H_{c,t} = \emptyset$. If $R_{c,t}\ne \emptyset$, we know $Y_{c,t}\ne \emptyset$,
and we define $H_{c,t}$ as follows, where
$E_{c,t}(y)$ is the interval $[y(j^*) - \ell_{c,t}, y(j^*) + \ell_{c,t}]$ for every $y\in Y_{c,t}$,
and $\one_{E}(\cdot)$ denotes the indicator function of $E\subseteq \bR$:
\[
H_{c,t} = \left\{\theta\in (b_1(j^*), b_2(j^*)): \sum_{y\in Y_{c,t}}\one_{E_{c,t}(y)}(\theta) > 48\cdot 2^{\|t\|_0}{d\choose \|t\|_0}s_{c,t}\ell_{c,t}(\log_2 k)/L\right\}.
\]

The entire forbidden region is
\begin{align*}
F = {} & 
(b_1(j^*), b_1(j^*) + L/32]
\cup[b_2(j^*) - L/32, b_2(j^*))\\
& \cup \bigcup_{y\in Y}[y(j^*) - L/32k, y(j^*) + L/32 k]\cap (b_1(j^*),b_2(j^*))\\
& \cup \bigcup_{x\in T}W_x\\
& \cup \bigcup_{c\ge 0, t\ne \bot}H_{c,t}.
\end{align*}
Similarly to the $d = 2$ case, $F$ can be represented as a union of finitely many \emph{disjoint} intervals, and the representation can be computed in poly-time. The following lemma has essentially the same proof as \Cref{lm:2d-type}:
\begin{lemma}
\label{lm:type}
If we choose $\theta\in (b_1(j^*),b_2(j^*))\backslash F$, then every $x\in R$ with $t_x(j^*) = 1$ belongs to $X_{11}$, and similarly every $x\in R$ with $t_x(j^*) = 2$ belongs to $X_{22}$. Consequently, every $x\in R\cap X_+$ satisfies $t_x(j^*) = 0$.
\end{lemma}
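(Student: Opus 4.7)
The plan is to essentially mimic the proof of \Cref{lm:2d-type}, adjusting only the numerical constants to reflect that the preprocessing step now ensures $\ell_x \le L/64$ (rather than $L/16$) for every $x\in R$, and that the forbidden region contains the strips $(b_1(j^*), b_1(j^*) + L/32]$ and $[b_2(j^*) - L/32, b_2(j^*))$ at the two extreme ends of the splitting interval.

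Concretely, I would fix a point $x\in R$ with $t_x(j^*) = 1$ and argue that both $x(j^*)$ and $\sigma_x(j^*)$ are trapped near the lower boundary $b_1(j^*)$. First, \Cref{def:valid} \Cref{def:2d-valid-4} (inherited from the $d=2$ validity requirements) gives $|x(j^*) - b_1(j^*)|\le \ell_x$, and the preprocessing guarantees $\ell_x \le L/64$. Combining with \Cref{def:valid} \Cref{def:2d-valid-1} and the triangle inequality yields
\[
|\sigma_x(j^*) - b_1(j^*)| \le \|x - \sigma_x\|_\infty + |x(j^*) - b_1(j^*)| \le 2\ell_x \le L/32,
\]
and therefore $\max\{x(j^*), \sigma_x(j^*)\} \le b_1(j^*) + L/32$. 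Since $(b_1(j^*), b_1(j^*) + L/32]\subseteq F$ while $\theta\in (b_1(j^*), b_2(j^*))\backslash F$, we must have $\theta > b_1(j^*) + L/32$, which places both $x$ and $\sigma_x$ in $B_\le(j^*,\theta)$; that is, $x\in X_{11}$. A completely symmetric argument, using the upper strip $[b_2(j^*) - L/32, b_2(j^*))\subseteq F$, handles the case $t_x(j^*) = 2$ and places such $x$ in $X_{22}$.

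For the final consequence, I would simply observe that $X_+$ is, by definition \eqref{eq:separated}, disjoint from $X_{11}\cup X_{22}$. Hence if $x\in R\cap X_+$ had $t_x(j^*)\in\{1,2\}$ the previous paragraph would force $x\in X_{11}\cup X_{22}$, a contradiction; so $t_x(j^*) = 0$.

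I do not expect any real obstacle here: the lemma is bookkeeping that transplants verbatim from the $d=2$ setting, and all four additional ingredients in the $d>2$ subproblem (colors, scales, potentials, and the extra forbidden pieces $H_{c,t}$ and the per-centroid strips $[y(j^*) - L/32k, y(j^*) + L/32k]$) play no role. The only thing to be careful about is to use the tighter preprocessing bound $L/64$ and the tighter forbidden-strip width $L/32$ coming from the new definition of $F$, rather than the $L/16$ and $L/8$ used in the two-dimensional analogue.
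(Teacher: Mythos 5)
Your proof is correct and is exactly the route the paper takes: the paper explicitly notes that \Cref{lm:type} has essentially the same proof as \Cref{lm:2d-type}, and your adaptation with the adjusted constants ($L/64$ from preprocessing, $L/32$ for the forbidden strips) and the observation that the new subproblem data (colors, scales, potentials, the $H_{c,t}$ pieces) play no role is precisely right.
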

Similarly to \Cref{lm:2d-separated-relevant} and \Cref{lm:forbidden}, we prove \Cref{lm:separated-relevant} and \Cref{lm:forbidden-d>2} below.
\begin{lemma}
\label{lm:separated-relevant}
If we choose $\theta\in (b_1(j^*),b_2(j^*))\backslash F$, then every $\sigma$-separated relevant point $x\in R\cap X_+$ satisfies all of the following:
\begin{enumerate}
\item \label{lm:separated-relevant-1} $x(j^*)\ne \sigma_x(j^*)$;
\item \label{lm:separated-relevant-2} $\ell_x\ge L/64k$ (and thus $c_x \ge 0$);
\item \label{lm:separated-relevant-3} $t_x(j^*) = 0$;
\item \label{lm:separated-relevant-4} if $x\in R_0\cup R_1$, then $\frac{q_x}{L}\le 2^{11}(\frac{\ell_x}{L})^{1/(d - \|t_x\|_0)}$.
\end{enumerate}
Moreover, for every color $c\ge 0$ and every type $t\ne \bot$, 
\begin{equation}
\label{eq:new-s}
|\{\sigma_x:x\in R_{c,t}\cap X_+\}|\le 48\cdot 2^{\|t\|_0}{d\choose \|t\|_0}s_{c,t}\ell_{c,t}(\log_2 k)/L.
\end{equation}
\end{lemma}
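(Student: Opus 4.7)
My plan is to prove the five assertions separately, following the same outline as \Cref{lm:2d-separated-relevant} but exploiting the two new pieces of the forbidden region $F$: the intervals of radius $L/32k$ around each centroid (which give Property 2) and the sets $H_{c,t}$ (which give \eqref{eq:new-s}). Property 1 is immediate from the definition of $\sigma$-separation, and Property 3 is a direct consequence of \Cref{lm:type}. Property 4 would mirror the argument for \Cref{lm:2d-separated-relevant} \Cref{lm:2d-separated-relevant-3}: Properties 1 and 3 place $x$ in $S$, and a failure of the stated inequality would force $x\in T$, hence $W_x\subseteq F$; but $\sigma$-separation forces $\theta\in W_x$, contradicting $\theta\notin F$.

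For Property 2, the key observation is that if $x$ is $\sigma$-separated then $\theta$ lies (weakly) between $x(j^*)$ and $\sigma_x(j^*)$, so
\[
|\theta-\sigma_x(j^*)|\le |x(j^*)-\sigma_x(j^*)|\le \|x-\sigma_x\|_\infty\le \ell_x.
\]
If $\ell_x$ were below $L/32k$, this would place $\theta$ in the forbidden interval $[\sigma_x(j^*)-L/32k,\sigma_x(j^*)+L/32k]\subseteq F$ around $\sigma_x\in Y$, a contradiction. Hence $\ell_x\ge L/32k$, which is even stronger than the claimed $\ell_x\ge L/64k$. The consequence $c_x\ge 0$ then splits by type: for $x\in R_1\cup R_2$ it holds by \Cref{def:valid-color}, while for $x\in R_0$ the color-change rule in the forbidding step assigns $c_x=\lfloor\log_2(\ell_x/(L/32k))\rfloor$, which is non-negative precisely when $\ell_x\ge L/32k$.

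Finally, the bound \eqref{eq:new-s} follows from the $H_{c,t}$ component of $F$. For any $x\in R_{c,t}\cap X_+$, the assigned centroid $\sigma_x$ lies in $Y_{c,t}$ by the definition of $Y_{c,t}$, and the same inequality displayed above together with $\ell_x=\ell_{c,t}$ shows $\theta\in E_{c,t}(\sigma_x)$. Therefore the distinct values $\sigma_x$ for $x\in R_{c,t}\cap X_+$ are counted by $\sum_{y\in Y_{c,t}}\one_{E_{c,t}(y)}(\theta)$, and this sum is bounded by the threshold defining $H_{c,t}$ because $\theta\notin H_{c,t}\subseteq F$. I do not anticipate a genuine conceptual obstacle; the only thing that requires attention is bookkeeping with the precise constants (verifying the $L/32k$ versus $L/64k$ factors and the $48\cdot 2^{\|t\|_0}\binom{d}{\|t\|_0}$ coefficient), all of which falls out of the definitions introduced in the forbidding step.
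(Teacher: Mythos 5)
Your proposal is correct and follows essentially the same route as the paper: Property~1 is immediate, Property~3 comes from \Cref{lm:type}, Property~4 reuses the $S,T,W_x$ argument from the $d=2$ case, Property~2 uses the new $L/32k$-radius forbidden intervals around centroids, and \eqref{eq:new-s} uses the $H_{c,t}$ component of $F$ exactly as the paper does. One nice touch worth keeping: you prove $\ell_x\ge L/32k$ directly, which is what the ``and thus $c_x\ge 0$'' parenthetical actually requires for $x\in R_0$ (the paper's stated $L/64k$ threshold, taken literally, would not trigger the color change), so your version cleans up a minor imprecision in the paper's exposition.
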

\begin{proof}
\Cref{lm:separated-relevant-3} follows directly from \Cref{lm:type}.
\Cref{lm:separated-relevant-1} and \Cref{lm:separated-relevant-4} follow from the same argument in the proof of \Cref{lm:2d-separated-relevant}.

Assume for the sake of contradiction that \Cref{lm:separated-relevant-2} does not hold for $x\in R\cap X_+$. Then by \Cref{def:2d-valid} \Cref{def:2d-valid-1}, $|x(j^*) - \sigma_x(j^*)|\le \|x - \sigma_x\|_\infty \le \ell_x < L/64k$. Therefore, 
\[
W_x \subseteq [\sigma_x(j^*) - L/64k, \sigma_x(j^*) + L/64k]\cap (b_1(j^*), b_2(j^*))\subseteq F.
\]
However, the fact that $x\in X_+$ implies $\theta\in W_x$, and thus $\theta\in F$, a contradiction.

Assume for the sake of contradiction that \eqref{eq:new-s} does not hold for color $c\ge 0$ and type $t\ne \bot$. For every $y\in \{\{\sigma_x:x\in R_{c,t}\cap X_+\}\}\subseteq Y_{c,t}$, we have $\theta\in E_{c,t}(y)$. Since \eqref{eq:new-s} is violated, this means $\theta\in H_{c,t}\subseteq F$, a contradiction.
\end{proof}
\begin{lemma}
\label{lm:forbidden-d>2}
The forbidden region $F$ has length at most $L/2$.
\end{lemma}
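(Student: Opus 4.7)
The plan is to decompose $F$ according to its five summands in the definition and bound each piece separately so that they sum to at most $L/2$. Specifically, I will show that the two boundary strips together contribute at most $L/16$, the centroid-neighborhood union contributes at most $L/16$, the union $\bigcup_{x\in T}W_x$ contributes at most $L/4$, and the union $\bigcup_{c\ge 0,\, t\ne\bot}H_{c,t}$ contributes at most $L/8$.

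The first three bounds are relatively direct. The strips $(b_1(j^*), b_1(j^*)+L/32]$ and $[b_2(j^*)-L/32, b_2(j^*))$ each have length $L/32$. For the centroid neighborhoods, each interval has length $L/16k$, and the algorithm invariant $|Y|\le k$ (which follows because the initial subproblem has $|Y|\le k$ and each cut preserves $|Y_1|+|Y_2|=|Y|$) bounds the total by $L/16$. For $\bigcup_{x\in T}W_x$, I will invoke \Cref{lm:volume} directly: its proof was written generically for $d\ge 2$, using only $\|t_x\|_0\le 1$, which still holds here because $T\subseteq S\subseteq R_0\cup R_1$.

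The crucial new piece is $\bigcup_{c,t}H_{c,t}$. For each pair $(c,t)$ with $R_{c,t}\ne\emptyset$, validity (\Cref{def:valid} \Cref{def:valid-identical}) gives $|Y_{c,t}|\le s_{c,t}$, so the intervals $\{E_{c,t}(y):y\in Y_{c,t}\}$ have total length at most $2s_{c,t}\ell_{c,t}$. Since $H_{c,t}$ is by definition the set of points covered by strictly more than $48\cdot 2^{\|t\|_0}\binom{d}{\|t\|_0}s_{c,t}\ell_{c,t}(\log_2 k)/L$ of these intervals, a Markov-type inequality applied to the non-negative indicator sum gives
\[
|H_{c,t}|\le \frac{2s_{c,t}\ell_{c,t}}{48\cdot 2^{\|t\|_0}\binom{d}{\|t\|_0}s_{c,t}\ell_{c,t}(\log_2 k)/L} = \frac{L}{24\cdot 2^{\|t\|_0}\binom{d}{\|t\|_0}\log_2 k}.
\]
The number of types $t$ with a given value of $\|t\|_0$ is exactly $2^{\|t\|_0}\binom{d}{\|t\|_0}$, so summing over such types cancels the combinatorial factor and yields $L/(24\log_2 k)$, independent of $\|t\|_0$. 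By \Cref{def:valid} \Cref{def:valid-t}, $\|t\|_0\in\{0,1,2\}$, so summing over $\|t\|_0$ multiplies by $3$, and summing over the $\lfloor\log_2 k\rfloor$ possible colors $c\ge 0$ then yields at most $L/8$. Adding the four bounds gives $L/16+L/16+L/4+L/8 = L/2$.

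The main obstacle is the combinatorial bookkeeping in the fifth piece: the coefficient $48\cdot 2^{\|t\|_0}\binom{d}{\|t\|_0}\log_2 k$ in the definition of $H_{c,t}$ was evidently engineered so that its inverse exactly cancels both the number of types of each size and the number of colors, reducing the total contribution to a single constant fraction of $L$. A secondary but crucial point is that the algorithm invariant $|Y|\le k$ is genuinely needed for the centroid-neighborhood piece: the looser bound $|Y|\le 2k$ coming only from validity would give $L/8$ there and break the total bound.
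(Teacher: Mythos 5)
Your proof is correct and follows essentially the same route as the paper: the Markov/integration bound on each $|H_{c,t}|$ followed by the double sum over types and colors is exactly the paper's argument, and the $\bigcup_{x\in T}W_x$ piece is delegated to the volume argument of \Cref{lm:volume} just as the paper does. Your explicit accounting of the two boundary strips and the centroid-neighborhood union\textemdash in particular noting that the algorithmic invariant $|Y|\le k$, rather than the weaker $|Y|\le 2k$ that validity alone supplies, is what makes the pieces sum to exactly $L/2$\textemdash fills in details the paper's own proof leaves implicit with its ``it suffices to prove $|\bigcup_{c,t}H_{c,t}|\le L/8$.''
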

\begin{proof}
Using a similar argument to the proof of \Cref{lm:volume}, we have $|\bigcup_{x\in T}W_x| \le L/4$. 
It suffices to prove $|\bigcup_{c\ge 0, t\ne \bot}H_{c,t}|\le L/8$.

For a uniform random $\theta\in(b_1(j^*), b_2(j^*))$, we have $\bE[\one_{E_{c,t}(y)}(\theta)] \le |E_{c,t}(y)|/ L = 2\ell_{c,t}/L$. Therefore,
\[
\bE[\sum_{y\in Y_{c,t}}\one_{E_{c,t}(y)}(\theta)] \le |Y_{c,t}|\cdot 2\ell_{c,t}/L \le 2s_{c,t}\ell_{c,t}/L.
\]
By Markov's inequality, $|H_{c,t}|\le L/\left(24\cdot 2^{\|t\|_0}{d\choose \|t\|_0}\log_2 k\right)$.
Summing up over $c,t$, we have
\begin{align*}
\sum_{c\ge 0, t\ne \bot}|H_{c,t}| 
& \le \sum_{i = 0}^2\sum_{\|t\|_0 = i}\sum_{c=0}^{\lfloor \log_2 k \rfloor - 1}L/\left(24\cdot 2^{i}{d\choose i}\log_2 k\right)\\
& \le \sum_{i = 0}^2\sum_{\|t\|_0 = i}L/\left(24\cdot 2^{i}{d\choose i}\right)\\
& = \sum_{i=0}^2 L /24\\
& = L/8.
\ifdefined\soda
\else
\qedhere
\fi
\end{align*}
\end{proof}

\subsubsection{Cutting}
We choose $\theta\in(b_1(j^*), b_2(j^*))$ in a similar way as in \Cref{sec:2d-cutting} based on \Cref{lm:cutting} below. For every choice of $\theta$, define
\begin{align*}
M_1^* & = m|Y_1| + \sum_{x\in R\cap X_{11}} p_x\ell_x^2\big(16(\log (2k))^2\log\log_2 (2k)\big)^{2 - \|t_x\|_0},\\
M_2^* & = m|Y_2| + \sum_{x\in R\cap X_{22}} p_x\ell_x^2\big(16(\log (2k))^2\log\log_2 (2k)\big)^{2 - \|t_x\|_0},\\
M^* &= \min\{M(\cP)/2, M(\cP) - M_1^*, M(\cP) - M_2^*\}.
\end{align*}
\begin{lemma}
\label{lm:cutting}
There exists $\theta\in (b_1(j^*),b_2(j^*))\backslash F$ satisfying
\[
\sum_{x\in R\cap X_+}p_x\ell_xL\big(16(\log (2k))^2\log\log_2 (2k)\big)^{2 - \|t_x\|_0} \le 8M^*\log (M(\cP)/M^*)\log\log_2(M(\cP)/m).
\]
Moreover, $\theta$ can be computed in poly-time.
\end{lemma}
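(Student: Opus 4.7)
The plan is to adapt Seymour's argument from the proof of \Cref{lm:2d-cutting} to the higher-dimensional setting. Set $M = M(\cP)$ and write $\phi(t) := \big(16(\log(2k))^2\log\log_2(2k)\big)^{2 - \|t\|_0}$ for the type-dependent weight that appears in both $M(\cP)$ and the statement of the lemma. For every $x \in R$, define a density $g_x:(b_1(j^*), b_2(j^*)) \to [0,+\infty)$ by $g_x(\theta) = p_x \ell_x^2 \phi(t_x)/|W_x|$ for $\theta \in W_x$ and $g_x(\theta) = 0$ otherwise, and let $h(\theta) = |\{y \in Y : y(j^*) < \theta\}|$. Then form the potential
\[
G(\theta) = m\,h(\theta) + \sum_{x \in R} \int_{b_1(j^*)}^\theta g_x(\theta')\, \mathrm d\theta'.
\]

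First I would verify the standard properties needed for Seymour's argument. The function $G$ is non-decreasing and takes values in $[m, M-m]$ on $(b_1(j^*), b_2(j^*))$, since $L > 0$ forces at least one centroid to attain each of $b_1(j^*)$ and $b_2(j^*)$ in the $j^*$-th coordinate. It also satisfies $M_1^* \le G(\theta) \le M - M_2^*$ for every $\theta$: for $x \in R \cap X_{11}$ the interval $W_x$ lies entirely at or below $\theta$, contributing the full weight $p_x \ell_x^2 \phi(t_x)$ to $G(\theta)$; for $x \in R \cap X_{22}$ it lies strictly above $\theta$, contributing $0$; and only for $x \in X_+$ can $W_x$ straddle $\theta$. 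Away from a finite exceptional set, $G$ is differentiable with $G'(\theta) = \sum_{x \in R} g_x(\theta) \ge \sum_{x \in R \cap X_+} g_x(\theta)$.

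Next I would invoke \Cref{lm:forbidden-d>2} ($|F| \le L/2$) to reparametrize the non-forbidden portion of $(b_1(j^*), b_2(j^*))$ as $(0, L/2)$ via a non-decreasing bijection $\gamma$ with $\gamma' \equiv 1$ almost everywhere, set $V := G \circ \gamma$, and apply \Cref{lm:mean-value-2} together with \Cref{claim:monotone-relaxed} exactly as in the proof of \Cref{lm:2d-cutting}. This produces a $z$ with $V'(z) \le (8/L) M^* \log(M/M^*) \log\log_2(M/m)$, where $M' := \min\{V(z), M-V(z)\} \le M^*$ follows from the sandwich above. Setting $\theta = \gamma(z)$ yields a $\theta \notin F$ with $G'(\theta)$ bounded by the same quantity. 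For every $x \in R \cap X_+$ we have $\theta \in W_x$ and $|W_x| \le |x(j^*) - \sigma_x(j^*)| \le \ell_x$, so $g_x(\theta) \ge p_x \ell_x \phi(t_x)$; multiplying the derivative bound by $L$ then gives the claimed inequality. Poly-time computability follows as in the 2d case: only polynomially many candidate values of $\theta$ induce distinct partitions, and each can be checked against the inequality directly using the explicit representation of $F$ as a union of disjoint intervals.

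The only substantive change beyond \Cref{lm:2d-cutting} is that relevant points of all types $R_0, R_1, R_2$ must be placed on equal footing through the weight $\phi(t_x)$, rather than summing only over $R_0$ as in the 2d case. Because $\phi(t_x)$ is incorporated into $g_x$ consistently with how it enters $M(\cP)$, the integral of $g_x$ across $W_x$ matches exactly the contribution of $x$ to $M_1^*$ or $M_2^*$, preserving the sandwich $M_1^* \le G(\theta) \le M - M_2^*$; the remaining Seymour-style reasoning then proceeds verbatim. I expect no substantial new obstacle beyond these bookkeeping details.
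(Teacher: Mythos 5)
Your proposal is correct and is exactly what the paper has in mind: the paper itself states that the proof of \Cref{lm:cutting} is ``essentially the same as the proof of \Cref{lm:2d-cutting},'' and your adaptation (replacing the density $\ell_x^2/|W_x|$ over $R_0$ by the weighted density $p_x\ell_x^2\phi(t_x)/|W_x|$ over all of $R$, so that the integral of $g_x$ matches each point's contribution to $M(\cP)$) is the intended ``obvious'' modification. The reparametrization of the non-forbidden set via \Cref{lm:forbidden-d>2}, the application of \Cref{lm:mean-value-2} and \Cref{claim:monotone-relaxed}, and the final bound $g_x(\theta)\ge p_x\ell_x\phi(t_x)$ from $|W_x|\le\ell_x$ all track the 2D argument line for line.
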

We omit the proof as it is essentially the same as the proof of \Cref{lm:2d-cutting}.

\subsubsection{Updating}
Having computed the hyperplane $(j^*,\theta)$, we get the partions $(X_1,X_2)$ and $(Y_1,Y_2)$ by \eqref{eq:partition}.
We now specify the new assignment $\sigma'_x$, new lengths $\ell_x'$, new types $t_x'$, new colors $c_x'$, new scales $s_x'$, and new potentials $p_x'$. The two new subproblems $\cP_1,\cP_2$ can then be formed by \eqref{eq:subproblems-d>2}.

For every non-$\sigma$-separated point $x\in X\backslash X_+$ we define $\sigma'_x = \sigma_x$, $\ell_x' = \ell_x$, $t_x' = t_x$, $s_x' = s_x$, $p_x' = p_x$, and define $c_x' = -1$ if $x\in R_0$ and $c_x' = c_x$ otherwise. For every $\sigma$-separated irrelevant point $x\in X_+\backslash R$, we define $\ell_x' = \ell_x$, $t_x' = t_x(=\bot)$, $s_x' = s_x$, $p_x' = p_x$, $c_x' = c_x$, and 
define $\sigma'_x$ to be an arbitrary centroid in $Y$ that lies on the same side of the hyperplane $(j^*,\theta)$ with $x$.

It remains to consider relevant points that are $\sigma$-separated, i.e. points $x\in R\cap X_+$. 
We define 
\begin{equation}
\label{rule:s}
s_x' = 48\cdot 2^{\|t_x\|_0}{d\choose\|t_x\|_0}s_x\ell_x(\log_2 k)/L.
\end{equation}
It is clear that $s_x'\ge 1$ because otherwise no point in $R_{c_x,t_x}$ should be $\sigma$-separated by \Cref{lm:separated-relevant} \Cref{lm:separated-relevant-2} and \eqref{eq:new-s}. Define $\sigma'_x$ to be the centroid $y\in Y$ with the minimum $\|y - \sigma_x\|_\infty$ that lies on the same side of the hyperplane $(j^*,\theta)$ with $x$. This ensures that points $x\in R\cap X_+\cap X_1$ with the same $\sigma_x$ have the same $\sigma'_x$, which holds similarly for points $x\in R\cap X_+ \cap X_2$. Since every centroid in $Y(x)$ is a candidate for $\sigma'_x$, we have $\|\sigma_x - \sigma'_x\|_\infty \le q_x$. We define $c_x' = c_x$. The definition for $t_x',\ell_x', p_x'$ depends on whether $x\in R_0\cup R_1$ or $x\in R_2$ as follows.

If $x\in (R_0\cup R_1)\cap X_+$, define $t_x'$ to be equal to $t_x$, except that we change $t_x'(j^*)$ to either $1$ or $2$ from the original value $t_x(j^*) = 0$ (\Cref{lm:separated-relevant} \Cref{lm:separated-relevant-3}). Specifically, define $t_x'(j^*) = 1$ if $x\in X_2$, and $t_x'(j^*) = 2$ if $x\in X_1$. Define
\begin{equation}
\label{rule:ell}
\ell_x'= 2^{12}L(\ell_x/L)^{1/(d - \|t_x\|_0)}.
\end{equation}
Define $p_x'$ so that $p_x'(\ell_x')^2 = p_x\ell_xL$, or equivalently,
\begin{equation}
\label{rule:p1}
p_x' = p_x(\ell_x/L)^{1 - 2/(d - \|t_x\|_0)}/2^{24}.
\end{equation}

If $x\in R_2\cap X_+$, define $t_x' = \bot$ and $\ell_x' = 2L$. Again, define $p_x'$ so that $p_x'(\ell_x')^2 = p_x\ell_xL$, or equivalently,
\begin{equation}
\label{rule:p2}
p_x' = p_x(\ell_x/L)/4.
\end{equation} 

This completes our definition of $\sigma_x', \ell_x', t_x', c_x', s_x'$ and $p_x'$. The algorithm $\singlecut$ outputs the two new subproblems $\cP_1$ and $\cP_2$ formed according to \eqref{eq:subproblems-d>2} together with the hyperplane $(j^*,\theta)$. Before we prove \Cref{lm:valid} and \Cref{lm:cost}, we need some inequalities about $M(\cP_1),M(\cP_2),M_1^*,M_2^*$, and $M^*$.
\begin{lemma}
\label{lm:M-star-d>2}
We have the following inequalities:
\begin{align}
M(\cP_1) \ge M_1^*, & \quad \textnormal{and}\quad M(\cP_2) \ge M_2^*,\label{eq:M-star-d>2-1}\\
\min\{M_1^*,M_2^*\}\le M^*, & \quad \textnormal{and}\quad \max\{M_1^*,M_2^*\}\le M(\cP) - M^*.\label{eq:M-star-d>2-2}
\end{align}
Moreover,
\begin{equation}
\label{eq:M-star-d>2-3}
M(\cP_1) + M(\cP_2) - M_1^* - M_2^*  \le M^*/2\log (2k).
\end{equation}
\end{lemma}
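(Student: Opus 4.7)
The plan is to read off all three assertions directly from the update rules in Section 4.2.4, using Lemma 4.4 only for the last one.

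For \eqref{eq:M-star-d>2-1}, I would start by decomposing the set $R'\cap X_1$ of relevant points of $\cP_1$ into the non-separated part $R\cap X_{11}$ (whose contribution to $M(\cP_1)$ is identical to that in $M_1^*$, since for these points $p_x',\ell_x',t_x'$ all coincide with $p_x,\ell_x,t_x$) and the separated part $(R_0\cup R_1)\cap X_+\cap X_1$ (points from $R_2\cap X_+$ become irrelevant by the updating rule $t_x'=\bot$). The latter contributes a non-negative term, so $M(\cP_1) \ge M_1^*$. The analogous decomposition gives $M(\cP_2) \ge M_2^*$.

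For \eqref{eq:M-star-d>2-2}, the key elementary observation is $M_1^* + M_2^* \le M(\cP)$: the centroid pieces $m|Y_1|$ and $m|Y_2|$ sum to exactly $m|Y|$, while the relevant-point pieces only sum over the disjoint subsets $R\cap X_{11}$ and $R\cap X_{22}$ of $R$, dropping the contribution from $R\cap X_+$. Assuming WLOG $M_1^*\le M_2^*$, this yields $M_1^*\le M(\cP)/2$, $M_1^*\le M(\cP)-M_2^*$, and trivially $M_1^*\le M(\cP)-M_1^*$, so $\min\{M_1^*,M_2^*\}\le M^*$; the bound $\max\{M_1^*,M_2^*\}\le M(\cP)-M^*$ is immediate from the definition of $M^*$.

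For \eqref{eq:M-star-d>2-3}, which is the main content of the lemma, I would use the decomposition from the first part to write
\[
M(\cP_1)+M(\cP_2)-M_1^*-M_2^* = \sum_{x\in (R_0\cup R_1)\cap X_+} p_x'(\ell_x')^2 \bigl(16(\log(2k))^2\log\log_2(2k)\bigr)^{1-\|t_x\|_0},
\]
where I used $\|t_x'\|_0 = \|t_x\|_0+1$ for these points. Now substituting $p_x'(\ell_x')^2 = p_x\ell_x L$ from rule \eqref{rule:p1} turns the right-hand side into
\[
\frac{1}{16(\log(2k))^2\log\log_2(2k)}\sum_{x\in (R_0\cup R_1)\cap X_+} p_x\ell_x L\bigl(16(\log(2k))^2\log\log_2(2k)\bigr)^{2-\|t_x\|_0},
\]
which by Lemma 4.4 is at most $\frac{8M^*\log(M(\cP)/M^*)\log\log_2(M(\cP)/m)}{16(\log(2k))^2\log\log_2(2k)}$. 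Here I would use two facts to simplify: first, $M(\cP)/m\le 2k$ by validity (Definition 3.4, item 2), giving $\log\log_2(M(\cP)/m)\le \log\log_2(2k)$; second, since both $Y_1,Y_2$ are non-empty we have $M_i^*\ge m$ and hence $M^*\ge m$, so combined with $M(\cP)\le 2km$ we get $M(\cP)/M^*\le 2k$ and $\log(M(\cP)/M^*)\le \log(2k)$. These two inequalities collapse the bound to $M^*/(2\log(2k))$, proving \eqref{eq:M-star-d>2-3}.

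The only slightly delicate point is establishing $M^*\ge m$, which hinges on $Y_1,Y_2$ both being non-empty (guaranteed by $\theta\in(b_1(j^*),b_2(j^*))$); the rest is bookkeeping. I do not expect any real obstacle here—once the decomposition of $M(\cP_i)-M_i^*$ is written out and the substitution from the update rule \eqref{rule:p1} is applied, the cutting lemma together with the $2k$-validity bound does the rest.
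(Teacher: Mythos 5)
Your proposal is correct and follows essentially the same route as the paper's proof: the same decomposition $R'\cap X_i = (R\cap X_{ii}) \cup ((R_0\cup R_1)\cap X_+\cap X_i)$ for \eqref{eq:M-star-d>2-1}, the same elementary $M_1^*+M_2^*\le M(\cP)$ observation for \eqref{eq:M-star-d>2-2}, and the same chain for \eqref{eq:M-star-d>2-3} (shift the exponent via $\|t_x'\|_0 = \|t_x\|_0+1$, substitute $p_x'(\ell_x')^2 = p_x\ell_x L$ from \eqref{rule:p1}, apply \Cref{lm:cutting}, then collapse with $M(\cP)/m\le 2k$ and $M^*\ge m$). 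The paper justifies $M^*\ge m$ as $M^*\ge\min\{M_1^*,M_2^*\}\ge m$ using \eqref{eq:M-star-d>2-2}, while you argue it from $Y_1,Y_2$ nonempty; both are correct and equivalent in content.
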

\begin{proof}
Inequalities \eqref{eq:M-star-d>2-2} follow from a similar argument to the proof of \eqref{eq:2d-M-star-2} in \Cref{lm:2d-M-star}.
Inequalities \eqref{eq:M-star-d>2-1} are proved as follows based on our update rules:
\begin{align*}
M(\cP_1) - M_1^* & = \sum_{x\in (R_0 \cup R_1)\cap X_+ \cap X_1}p_x'(\ell_x')^2\big(16\log (2k)\log\log_2(2k)\big)^{2 - \|t_x'\|_0}\ge 0,\\
M(\cP_2) - M_1^* & = \sum_{x\in (R_0 \cup R_1)\cap X_+ \cap X_2}p_x'(\ell_x')^2\big(16\log (2k)\log\log_2(2k)\big)^{2 - \|t_x'\|_0}\ge 0.
\end{align*}
Summing up the above inequalities, we get \eqref{eq:M-star-d>2-3}:
\begin{align*}
& M(\cP_1) + M(\cP_2) - M_1^* - M_2^* \\
= {} & \sum_{x\in (R_0\cup R_1)\cap X_+}p_x'(\ell_x')^2\big(16\log (2k)\log\log_2(2k)\big)^{2 - \|t_x'\|_0}\\
= {} & \sum_{x\in (R_0\cup R_1)\cap X_+}p_x\ell_x L\big(16\log (2k)\log\log_2(2k)\big)^{2 - \|t_x'\|_0}\\
= {} & \sum_{x\in (R_0\cup R_1)\cap X_+}p_x\ell_x L\big(16\log (2k)\log\log_2(2k)\big)^{2 - (\|t_x\|_0 + 1)}\\
\le {} & \big(16(\log (2k))^2\log\log_2 (2k)\big)^{-1}\sum_{x\in R\cap X_+}p_x\ell_xL\big(16\log (2k)\log\log_2(2k)\big)^{2 - \|t_x\|_0}\\
\le {} & M^*/2\log (2k). 
\end{align*}
The last inequality uses \Cref{lm:cutting}, $M(\cP)/m\le 2k$ by \Cref{def:2d-valid} \Cref{def:2d-valid-2}, and $M^*\ge \min\{M_1^*,M_2^*\}\ge m$ by \eqref{eq:M-star-d>2-2}.
\end{proof}
We conclude \Cref{sec:single-cut} by proving \Cref{lm:valid} and \Cref{lm:cost}.
\ifdefined\soda
We first prove \Cref{lm:valid}.
\begin{proof}
\else
\begin{proof}[Proof of \Cref{lm:valid}]
\fi
Let $\cP = (X, Y, (\sigma_x)_{x\in X}, (\ell_x)_{x\in X}, (t_x)_{x\in X}, (c_x)_{x\in X}, (s_x)_{x\in X}, (p_x)_{x\in X})$ denote the valid subproblem \emph{after} the preprocessing step.

We first check every item in \Cref{def:2d-valid}. \Cref{def:2d-valid-3} follows from a similar argument to the proof of \Cref{lm:2d-valid}. The only difference is that we need to consider points $x\in R_2\cap X_+$ separately. For these points, we have $t_x' = \bot$. By our preprocessing step, we have $\ell_x \le L/64$, and thus for all $y\in Y$,
\[
\ell_x' = 2L \ge L + \ell_x \ge \|\sigma_x - y\|_\infty + \|x - \sigma_x\|_\infty \ge \|x - y\|_\infty. 
\]
\Cref{def:2d-valid-1} also follows from a similar argument to the proof of \Cref{lm:2d-valid}, noting that for $x\in (R_0\cup R_1)\cap X_+$, update rule \eqref{rule:ell} implies
\[
\ell'_x \ge \ell_x + 2^{11}L(\ell_x/L)^{1/(d - \|t_x\|_0)}\ge \ell_x + q_x \ge \ell_x + \|\sigma_x - \sigma'_x\|_\infty \ge \|x - \sigma'_x\|_\infty.
\]
Defining $M = M(\cP)$,
\Cref{def:2d-valid-2} holds because of \Cref{lm:M-star-d>2}:
\[
\max\{M(\cP_1), M(\cP_2)\}\le \max\{M_1^*,M_2^*\} + M^*/2\log (2k) \le M - M^* + M^*/2\log (2k) \le M \le 2km.
\]
\Cref{def:2d-valid} \Cref{def:2d-valid-4} follows from the same argument as the proof of \Cref{lm:2d-valid}.

We now check every item in \Cref{def:valid}. \Cref{def:valid-t} is clear from our update rules. \Cref{def:valid-power}, \Cref{def:valid-s} and \Cref{def:valid-color} follow from the fact that $\|t_x'\|_0 = 0$ if and only if $x\in R_0\backslash X_+$.

We prove \Cref{def:valid} \Cref{def:valid-identical} for $\cP_1$, and omit the similar proof for $\cP_2$. For color $c\ge 0$ and type $t\ne \bot$, define $R'_{c,t} = \{x\in X_1:c_x' = c, t_x' = t\}$.

If $t(j^*)\ne 2$, we know $R'_{c,t}\cap X_+ = \emptyset$, so $R'_{c,t}\subseteq R_{c,t} \backslash X_+$. Therefore, for $x\in R'_{c,t}$, we have $s_x' = s_x = s_{c,t}, \ell_x' = \ell_x = s_{c,t}, \sigma'_x = \sigma_x$. \Cref{def:valid} \Cref{def:valid-identical} is trivial in this case.

If $t(j^*) = 2$, by \Cref{lm:type}, we know $R'_{c,t}\subseteq (R_0\cup R_1)\cap X_+$, and $R'_{c,t}\subseteq R_{c,\widetilde t}$, where $\widetilde t$ is equal to $t$ except that $\widetilde t(j^*) = 0$. 
By the update rules \eqref{rule:s} and \eqref{rule:ell}, 
all points $x\in R'_{c,t}$ have 
\begin{align}
\label{eq:ell-group}
\ell_x' & = 2^{12}L(\ell_{c,\widetilde t}/L)^{1/(d - \|\widetilde t\|_0)}, \\
\label{eq:s-group}
s_x' & = 48\cdot 2^{\|\widetilde t\|_0}{d\choose\|\widetilde t\|_0}s_{c,\widetilde t}\ell_{c,\widetilde t}(\log_2 k)/L. & & 
\end{align}
Denote the right-hand-sides of \eqref{eq:ell-group} and \eqref{eq:s-group} as $\ell_{c,t}'$ and $s_{c,t}'$, respectively.
We have
\[
|\{\sigma'_x:x\in R'_{c,t}\}| \le |\{\sigma'_x:x\in R_{c,\widetilde t}\cap X_+\cap X_1\}| \le |\{\sigma_x:x\in R_{c,\widetilde t}\cap X_+ \cap X_1\}| \le s_{c,t}',
\]
where the second inequality is because points in $R\cap X_+\cap X_1$ with the same $\sigma_x$ have the same $\sigma'_x$, and the last inequality is by \eqref{eq:new-s} and \eqref{eq:s-group}.
\end{proof}
\ifdefined\soda
Now we prove \Cref{lm:cost}.
\begin{proof}
\else
\begin{proof}[Proof of \Cref{lm:cost}]
\fi
Since the preprocessing step preserves the validity of $\cP$ and does not increase $A(\cP)$, we assume w.l.o.g.\ that $\cP = (X, Y, (\sigma_x)_{x\in X}, (\ell_x)_{x\in X}, (t_x)_{x\in X}, (c_x)_{x\in X}, (s_x)_{x\in X}, (p_x)_{x\in X})$ is the subproblem \emph{after} the preprocessing step.
Assume w.l.o.g.\ $M_1^*\le M_2^*$. 
Define $M = M(\cP),M_1= M^* + M(\cP_1) - M_1^*$, and $M_2 = (M - M^*) + M(\cP_2) - M_2^*$. 
By \Cref{lm:M-star-d>2},
\begin{align}
& M^* \le M_1 \le M^* + M^*/2\log (2k) \le 3M^*/2 \le 3M/4, \label{eq:d>2-combine-0} \\
& M^* \le M - M^* \le M_2, \label{eq:d>2-combine-1}\\
& m \le M(\cP_1) \le M_1, \quad\textnormal{and} \label{eq:d>2-combine-1.5}\\
& m \le M(\cP_2) \le M_2. \label{eq:d>2-combine-2}
\end{align}
\Cref{lm:M-star-d>2}, \eqref{eq:d>2-combine-0} and \eqref{eq:d>2-combine-1} give us
\begin{equation}
\label{eq:d>2-combine-3}
M_1 + M_2 \le M + M^*/2\log (2k) \le M + \min\{M_1, M_2\}/2\log (2k).
\end{equation}
Define $R' = \{x\in X:t'_x \ne \bot\}$. We have
\begin{align*}
\sum_{x\in X\backslash R'}p_x'(\ell_x')^2  - \sum_{x\in X\backslash R}p_x\ell_x^2 & = \sum_{x\in R_2\cap X_+}p_x'(\ell_x')^2\\
& = \sum_{x\in R_2\cap X_+}p_x\ell_xL\\
& \le \sum_{x\in R\cap X_+}p_x\ell_xL\big(16\log (2k)\log\log_2(2k)\big)^{2 - \|t_x\|_0}\\
& \le 8M^*\log (M/M^*) \log\log_2(M/m) \tag{by \Cref{lm:cutting}}\\
& \le 16M_1\log (M/M_1) \log\log_2(M/m) \tag{by \eqref{eq:d>2-combine-0} and \Cref{claim:monotone-relaxed}}\\
& \le 16M_1\log (M/M_1) \log\log_2(2k). \tag{by \Cref{def:2d-valid} \Cref{def:2d-valid-2}}
\end{align*}
Therefore,
\begin{align*}
& f(M(\cP_1)) + f(M(\cP_2)) + \sum_{x\in X\backslash R'}p_x'(\ell_x')^2  - \sum_{x\in X\backslash R}p_x\ell_x^2\\
\le {} & f(M_1) + f(M_2) + 16M_1\log (M/M_1) \log\log_2(2k) \tag{by \eqref{eq:d>2-combine-1.5}, \eqref{eq:d>2-combine-2}}\\
\le {} & 16(M_1(M_1/m)^{1/\log (2k)} + M_2(M_2/m)^{1/\log (2k)})(1 + \log (M/m))\log\log_2(2k)\\
\le {} & 16M(M/m)^{1/\log (2k)}(1 + \log (M/m))\log\log_2(2k) \tag{by \eqref{eq:d>2-combine-3} and \Cref{claim:super-convexity}}\\
= {} & f(M).
\end{align*}
Rearranging the inequality above,
\[
A(\cP_1) + A(\cP_2) = f(M(\cP_1)) + f(M(\cP_2)) + \sum_{x\in X\backslash R'}p_x'(\ell_x')^2 \le f(M) + \sum_{x\in X\backslash R}p_x(\ell_x)^2 = A(\cP).\ifdefined\soda\else\qedhere\fi
\]
\end{proof}
\subsection{Building a decision tree}
The algorithm $\postprocess$ we use to prove \Cref{thm:d>2} is similar to $\tdpostprocess$ in \Cref{sec:2d-tree}. We first construct an algorithm $\decisiontree$ similar to $\tddecisiontree$, except that it takes a subproblem $\cP$ in $d\ge 2$ dimensions and invokes the algorithm $\singlecut$ we developed in \Cref{sec:single-cut} instead of $\tdsinglecut$.
The algorithm $\postprocess$ calls $\decisiontree$ on an initial subproblem $\widetilde\cP$ with centroid mass $m$, which we define as follows.

Given a $k$-clustering $\cC$ of points $x_1,\ldots,x_n$ consisting of centroids $y_1,\ldots,y_k$, and an assignment mapping $\xi:[n]\rightarrow [k]$, the algorithm $\postprocess$
computes the initial subproblem $\widetilde \cP$ by setting $X = \{x_1,\ldots,x_n\}, Y = \{y_1,\ldots,y_k\}, \sigma_{x_i} = y_{\xi(i)}, t_x(j) = 0, \forall j\in [d], s_x = k, c_x = -1$, and 
\begin{align}
\ell_x & = \left\{\begin{array}{ll}2^{\lceil \log_2 \|x - \sigma_x\|_\infty\rceil}, & \textnormal {if}\ \|x - \sigma_x\|_\infty > 0;\nonumber \\
0, & \textnormal{if}\ \|x - \sigma_x\|_\infty = 0;\end{array}\right.\\
p_x & = 2^{54}k^{1 - 2/d}d^3(48\log_2 k)^3. 
\label{eq:p-initial}
\end{align}
Set $m = \frac 1k\sum_{x\in X}p_x\ell_x^2\big(16(\log (2k))^2\log\log_2 (2k)\big)^{2}$ so that $M(\widetilde \cP)/m = 2k$. It is clear that $\widetilde \cP$ is valid and 
\begin{equation}
\label{eq:AP}
A(\widetilde \cP) = O(k^{1 - 2/d}(\log k)^8(\log\log_2(2k))^3d^3)\cdot \cost(\cC).
\end{equation}
After obtaining the output $((\delta_x)_{x\in X},T)$ from $\decisiontree$, $\postprocess$ returns the decision tree $T$ and a clustering $\cC'$ with centroids $y_1,\ldots,y_k$ and assignment mapping $\xi':[n]\rightarrow [k]$ such that $y_{\xi'(i)}  = \delta_{x_i}$.

Before we prove \Cref{thm:d>2}, we need the following lemma showing that the potential of a point never drops below $1$:

\begin{lemma}
\label{lm:p}
In the process of running algorithm $\postprocess$, 
whenever the algorithm $\decisiontree$ is called, the input subproblem $\cP = (X, Y, (\sigma_x)_{x\in X}, (\ell_x)_{x\in X}, (t_x)_{x\in X}, (c_x)_{x\in X}, (s_x)_{x\in X}, (p_x)_{x\in X})$ to  $\decisiontree$ is valid and satisfies $\forall x\in X,p_x\ge 1$.
\end{lemma}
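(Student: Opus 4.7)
I will prove this by induction on the recursion depth of $\decisiontree$. For the base case, the initial subproblem $\widetilde\cP$ built by $\postprocess$ has $p_x = 2^{54}k^{1-2/d}d^3(48\log_2 k)^3 \ge 1$ by \eqref{eq:p-initial}, and the validity of $\widetilde\cP$ is immediate from the construction (all defining conditions of \Cref{def:valid} hold by design, and $M(\widetilde\cP)/m = 2k$).

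For the inductive step, suppose $\cP$ is the valid input to some call of $\decisiontree$ with $p_x\ge 1$ everywhere. If $L(\cP) = 0$ the claim is vacuous; otherwise $\decisiontree$ invokes $\singlecut(\cP)$ to obtain $\cP_1,\cP_2$. Validity of $\cP_1,\cP_2$ is exactly \Cref{lm:valid}, so the only content is that $p_x\ge 1$ persists. The only events that can decrease $p_x$ are the preprocessing step, which divides $p_x$ by $65^2$ for points with $\ell_x\ge L/64$ and also makes them irrelevant, and the cut-and-update step, which scales $p_x$ by $(\ell_x/L)^{1-2/(d-\|t_x\|_0)}/2^{24}$ for $\sigma$-separated points in $(R_0\cup R_1)\cap X_+$ or by $(\ell_x/L)/4$ for those in $R_2\cap X_+$. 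Since $\|t_x\|_0$ strictly increases at each re-assignment and the potential of an irrelevant point is frozen, a single point experiences at most three re-assignments along $R_0\to R_1\to R_2\to\bot$, optionally terminated by one preprocessing.

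To absorb these compounded drops into the starting budget, I will lower-bound the ratio $\rho = \ell_x/L$ at each re-assignment. For the first re-assignment $R_0\to R_1$, the inequality $\rho\ge 1/(64k)$ from \Cref{lm:separated-relevant} plugged into the initial $p_x$ yields $p_x \ge 2^{30}\cdot 64^{-(1-2/d)}\cdot d^3(48\log_2 k)^3 \ge 2^{24}d^3(48\log_2 k)^3$, leaving enormous slack. For the later re-assignments the separated-relevant bound alone is too weak; I will instead invoke the scale invariant $s_x\ge 1$ established right after \eqref{rule:s} (combining \Cref{lm:separated-relevant} with \eqref{eq:new-s}). Since the updated scale is $s_x' = 48\cdot 2^{\|t_x\|_0}\binom{d}{\|t_x\|_0} s_x\, \rho\log_2 k$, the requirement $s_x'\ge 1$ forces $\rho\ge 1/\poly(d\log k)$ at the second and third re-assignments, which is exponentially stronger in $k$ than $1/(64k)$ and exactly what is needed to offset the possibly-positive exponent $1-2/(d-\|t_x\|_0)$.

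I will conclude with a short case analysis over the at-most-four possible event sequences for a single point (preprocessing alone; a nonempty prefix of $R_0\to R_1\to R_2\to\bot$; or such a prefix terminated by preprocessing), checking in each case that the product of the multiplicative reductions is at most the initial $p_x$. The main delicacy is that the exponent $1-2/(d-\|t_x\|_0)$ changes sign as $\|t_x\|_0$ grows: negative when $d=2$ and $\|t_x\|_0=1$, zero when $\|t_x\|_0 = d-2$, and positive otherwise. Negative-exponent reductions are controlled by the preprocessing cap $\rho<1/64$, positive-exponent ones by the scale invariant, and the three logarithmic factors in the initial $p_x$ pay for the (at most) three possible re-assignments. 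The factor $65^2$ lost in preprocessing is absorbed by the $2^{54}$ in \eqref{eq:p-initial}, completing the induction.
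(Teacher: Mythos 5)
Your high-level plan matches the paper's: track a single point's trajectory through types $R_0\to R_1\to R_2\to\bot$, note that the quadruple $(s_x,p_x,\ell_x,t_x)$ is frozen between type changes, and budget the cumulative potential drop against the initial $p_x$ using the scale invariant $s_x\ge 1$. However, there are two problems with how you propose to close the argument.

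First, the statement that $s_x'\ge 1$ ``forces $\rho\ge 1/\poly(d\log k)$ at the second and third re-assignments'' is not correct. From \eqref{rule:s} you get $1\le s(i)\le k\prod_{i'<i}\bigl(48\cdot 2^{i'}\binom{d}{i'}(\log_2 k)/u(i')\bigr)$ (writing $u(i') = L(i')/\ell(i') = 1/\rho(i')$), which controls only the \emph{product} $\prod_{i'<i}u(i')\le k\,\poly(d\log k)$, not each factor separately. An individual $u(i)$ can be as large as $\Theta(k\,\poly(d\log k))$. If you actually plug the individual bound $\rho(1)\gtrsim 1/(dk(\log k)^2)$ into \eqref{rule:p1} after your first-step computation, you get $p(2)\approx d^2(\log k)/k$, which is $\ll 1$ for large $k$. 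The correct fix is to bound the product $u(0)u(1)$ and use $u(i')^{1-2/(d-i')}\le u(i')^{1-2/d}$ (valid since $u(i')\ge 1$ and $1-2/(d-i')\le 1-2/d$), which yields $p(2)\ge 2^5d^2(48\log_2 k)$ exactly as the paper does in \eqref{eq:p-product} and the surrounding display.

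Second, and more seriously, even the cumulative scale bound $u(0)u(1)u(2)\le 4kd^3(48\log_2 k)^3$ is \emph{not} enough for the $p(\bot)$ case, because the $R_2\to\bot$ transition contributes a factor $u(2)^1$ (exponent exactly $1$, from \eqref{rule:p2}) while the earlier transitions contribute $u(i)^{1-2/(d-i)}$ with exponents strictly below $1$. Maximizing $u(0)^{1-2/d}u(1)^{1-2/(d-1)}u(2)$ under only the product constraint and $u(i)\ge 64$ pushes all the mass into $u(2)$ and gives roughly $64^{-2/d-2/(d-1)}\cdot 4kd^3(48\log_2 k)^3$, which exceeds $p(0)/2^{50}=2^4k^{1-2/d}d^3(48\log_2 k)^3$ once $k$ is large. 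The paper closes this gap with an additional constraint that you do not invoke: the $\ell$-update rule \eqref{rule:ell} forces $\ell(2)=2^{12}L(1)\,u(1)^{-1/(d-1)}$, hence $u(2)\le L(1)/\ell(2)\le u(1)^{1/(d-1)}$. This relation is what allows rewriting $u(1)^{1-2/(d-1)}u(2)\le u(1)^{1-2/d}u(2)^{1-2/d}$ and collapsing the whole expression to $(u(0)u(1)u(2))^{1-2/d}$, after which the scale bound finishes the job. Without this coupling between consecutive $u$'s your case analysis cannot close, so the proposal as written has a genuine gap.
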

\begin{proof}
The validity of $\cP$ follows from an induction using the validity of $\widetilde \cP$ and \Cref{lm:valid}. Below we prove $\forall x\in X, p_x\ge 1$.

Fix a point $x_v$ for $v\in [n]$. For $i \in\{0,1,2\}$
consider the moments when $\decisiontree$ is called with a subproblem $\cP = (X, Y, (\sigma_x)_{x\in X}, (\ell_x)_{x\in X}, (t_x)_{x\in X}, (c_x)_{x\in X}, (s_x)_{x\in X}, (p_x)_{x\in X})$ that satisfy $x_v\in X$ and $\|t_{x_v}\|_0 = i$. These moments may not exist, or there may be multiple such moments. However, as long as there is at least one such moment, 
the tuple $(s_{x_v},p_{x_v},\ell_{x_v},t_{x_v})$ must be identical for all such moments, because algorithm $\singlecut$ always keeps $(s_x',p_x',\ell_x',t_x')$ equal to $(s_x,p_x,\ell_x,t_x)$ except when $t_x' = \bot\ne t_x$ or $\|t_x'\|_0 = \|t_x\|_0 + 1$. We use $(s(i), p(i), \ell(i))$ to denote the identical tuple $(s_{x_v},p_{x_v},\ell_{x_v})$ over all such moments. The diameter $L$ of $\cP$ may not be identical over all such moments, so we use $L(i)$ to denote the value of $L$ for the \emph{last} such moment. We define $u(i) = L(i) /\ell(i)$.
Similarly,
we use $(s(\bot), p(\bot))$ to denote the identical value of $(s_{x_v},p_{x_v})$ whenever $t_{x_v} = \bot$.

Our goal is to show each of $p(0),p(1),p(2),p(\bot)$, whenever exists, is at least $1$.

For $i\in\{0,1,2\}$, whenever $p(i)$ exists, by \eqref{rule:p1} we have
\begin{equation}
\label{eq:p-product}
p(i)  = p(0)/\prod_{i'=0}^{i - 1}\big(2^{24} u(i')^{1 - 2/(d - i')}\big) \ge p(0)2^{-24i}/\left(\prod_{i'=0}^{i-1} u(i')\right)^{1 - 2/d}.
\end{equation}
By \eqref{rule:s} we have
\[
1 \le s(i) \le k\prod_{i' = 0}^{i - 1}\left(48\cdot 2^{i'}{d\choose i'}(\log_2 k)/u(i')\right).
\]
Therefore,
\[
\prod_{i'=0}^{i-1} u(i') \le k\prod_{i'=0}^{i-1}\left(48\cdot 2^{i'}{d\choose i'}\log_2 k\right).
\]
Combining this with \eqref{eq:p-initial} and \eqref{eq:p-product},
\begin{align}
p(0) & = 2^{54}k^{1 - 2/d}d^3(48\log_2 k)^3 \ge 65^2 > 1,\label{eq:p-initial-0}\\
p(1) & \ge 2^{30}d^3(48\log_2 k)^2 \ge 65^2 > 1,\nonumber\\
p(2) & \ge 2^5d^2(48\log_2 k) \ge 65^2 > 1.\nonumber
\end{align}
Finally, we show $p(\bot)\ge 1$ whenever $p(\bot)$ exists. If the transition of $t_{x_v}$ to $\bot$ happens at preprocessing, it is clear that $p(\bot)\ge \min\{p(0), p(1),p(2)\}/65^2 \ge 1$. Otherwise, by \eqref{rule:p1} and \eqref{rule:p2} we have
\begin{equation}
\label{eq:p-bot}
p(\bot) = p(0)/\left(4u(2)\prod_{i = 0}^{1}(2^{24} u(i)^{1 - 2/(d - i)})\right) = p(0)2^{-50}/(u(0)^{1 - 2/d}u(1)^{1 - 2/(d-1)}u(2)).
\end{equation}
By \eqref{rule:s}, we have
\[
1 \le s(\bot) \le k \prod_{i=0}^2\left(48 \cdot 2^i{d\choose i}(\log_2 k)/u(i)\right).
\]
Therefore,
\begin{equation}
\label{eq:s-bot}
u(0)u(1)u(2)\le  4k d^3 (48\log_2 k)^3.
\end{equation}
From the update rule \eqref{rule:ell}, we know
\[
u(2) = \frac{L(2)}{\ell(2)} \le \frac{L(1)}{\ell(2)} = u(1)^{1/(d - 1)}2^{-12} \le u(1)^{1/(d - 1)},
\]
which implies 
\[
u(2)^{2/d}\le u(1)^{2/(d(d - 1))} = u(1)^{2/(d - 1) - 2/d}.
\]
Therefore,
\begin{equation}
\label{eq:u012}
u(0)^{1 - 2/d}u(1)^{1 - 2/(d - 1)}u(2) \le u(0)^{1 - 2/d}u(1)^{1 - 2/d}u(2)^{1 - 2/d} \le 4k^{1 - 2/d}d^3(48\log_2k)^3,
\end{equation}
where the last inequality is by \eqref{eq:s-bot}. Plugging \eqref{eq:p-initial-0} and \eqref{eq:u012} into \eqref{eq:p-bot}, we get $p(\bot)\ge 1$, as desired.
\end{proof}

\begin{lemma}
\label{lm:induction}
In the process of running algorithm $\postprocess$, 
whenever $\decisiontree$ is called with input being subproblem $\cP = (X, Y, (\sigma_x)_{x\in X}, (\ell_x)_{x\in X}, (t_x)_{x\in X}, (c_x)_{x\in X}, (s_x)_{x\in X}, (p_x)_{x\in X})$, the output $((\delta_x)_{x\in X}, T)$ of $\decisiontree$ satisfies the following properties.
The decision tree $T$ has at most $|Y|$ leaves.
For every leaf $v$ of the decision tree $T$ and
every pair of points $x,x'\in X$ in the region defined by $v$, we have $\delta_x = \delta_x'$. Moreover,
$\sum_{x\in X}\|x - \delta_x\|_\infty^2 \le A(\cP)$.
\end{lemma}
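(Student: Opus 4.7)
The plan is to imitate the induction on $|Y|$ used in the proof of \Cref{lm:2d-induction}, but to substitute the higher-dimensional analogues \Cref{lm:valid} and \Cref{lm:cost} for \Cref{lm:2d-valid} and \Cref{lm:2d-cost}. The validity of the input $\cP$ is guaranteed by \Cref{lm:p}, and the same lemma provides the crucial inequality $p_x \ge 1$ that we will use repeatedly to bound $\sum_{x \in X}\ell_x^2$ by $M(\cP)$ plus $\sum_{x\in X\setminus R}p_x\ell_x^2$.

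For the base case $|Y| = 1$ (which forces $L=0$), algorithm \decisiontree{} exits the \texttt{if}-branch and returns a one-node tree $T$ with $\delta_x$ equal to the unique centroid $y\in Y$ for all $x\in X$. Trivially $T$ has $1\le |Y|$ leaf and the single leaf region agrees with a common $\delta_x$. Since $\sigma_x = y = \delta_x$, validity (\Cref{def:2d-valid} \Cref{def:2d-valid-1}) gives $\|x-\delta_x\|_\infty^2 \le \ell_x^2$. Using $p_x\ge 1$ together with $(16(\log (2k))^2\log\log_2(2k))^{2-\|t_x\|_0}\ge 1$ for $\|t_x\|_0 \le 2$ (\Cref{def:valid} \Cref{def:valid-t}), we get
\[
\sum_{x\in R}\ell_x^2 \le \sum_{x\in R}p_x\ell_x^2\big(16(\log (2k))^2\log\log_2 (2k)\big)^{2-\|t_x\|_0} \le M(\cP) \le f(M(\cP)),
\]
where the last step uses $f(M)\ge M$ for $M\ge m$. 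Combining with $\sum_{x\in X\setminus R}\ell_x^2 \le \sum_{x\in X\setminus R}p_x\ell_x^2$ yields $\sum_{x\in X}\|x-\delta_x\|_\infty^2 \le A(\cP)$. The same calculation covers the degenerate case $|Y|>1$ but $L=0$, since the algorithm again returns without recursing.

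For the inductive step with $|Y|=u>1$ and $L>0$, algorithm \decisiontree{} invokes \singlecut{} on $\cP$, producing a hyperplane $(j^*,\theta)$ together with $\cP_1,\cP_2$ whose ground sets $Y_1,Y_2$ are both non-empty and partition $Y$; in particular $|Y_1|,|Y_2|<u$. By \Cref{lm:valid}, both $\cP_1,\cP_2$ are valid, so they are legitimate inputs for the recursive calls. By \Cref{lm:p} (whose invariant is preserved by $\singlecut$), their potentials still satisfy $p_x'\ge 1$, so the induction hypothesis applies to each. This gives $T_i$ with at most $|Y_i|$ leaves and $\sum_{x\in X_i}\|x-\delta_x\|_\infty^2 \le A(\cP_i)$ for $i=1,2$. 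The combined tree $T$ thus has at most $|Y_1|+|Y_2|=|Y|$ leaves. For any leaf $v$ of $T$, its region is contained in $B_\le(j^*,\theta)$ if $v$ lies in $T_1$ (respectively in $B_>(j^*,\theta)$ if $v$ lies in $T_2$), hence any point of $X$ falling in that region belongs to $X_1$ (resp.\ $X_2$), and the inductive assertion on $T_i$ gives it a common $\delta_x$. Finally, adding the two bounds and invoking \Cref{lm:cost},
\[
\sum_{x\in X}\|x-\delta_x\|_\infty^2 = \sum_{x\in X_1}\|x-\delta_x\|_\infty^2 + \sum_{x\in X_2}\|x-\delta_x\|_\infty^2 \le A(\cP_1)+A(\cP_2) \le A(\cP).
\]

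The argument itself is a straightforward induction; the entire technical load has already been absorbed into \Cref{lm:valid}, \Cref{lm:cost}, and \Cref{lm:p}. The only point that deserves care is verifying that $f(M(\cP))\ge M(\cP)$ in the base case; beyond that, the proof is essentially a transcription of the $d=2$ argument.
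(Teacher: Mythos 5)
Your proof is correct and follows essentially the same route as the paper: induction on $|Y|$ using \Cref{lm:valid}, \Cref{lm:cost}, and \Cref{lm:p}, exactly as the paper does. The paper states this in one line as a reduction to the proof of \Cref{lm:2d-induction} with the $p_x\ge 1$ observation for the $L=0$ base case, while you have fleshed out the verification that $f(M(\cP))\ge M(\cP)$ and the $\ell_x^2 \le p_x\ell_x^2(\cdot)^{2-\|t_x\|_0}$ estimate; both are sound.
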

\begin{proof}
The proof is essentially the same as the proof of \Cref{lm:2d-induction}. The only difference is that when $L = 0$, to prove $\sum_{x\in X}\|x - \delta_x\|_\infty^2 \le A(\cP)$, we need $p_x\ge 1$ from \Cref{lm:p}.
\end{proof}
\ifdefined\soda
Now we finish the proof of \Cref{thm:d>2}.
\begin{proof}
\else
\begin{proof}[Proof of \Cref{thm:d>2}]
\fi
The proof is essentially the same as the proof of \Cref{thm:2d} except that we use \Cref{lm:cutting} instead of \Cref{lm:2d-cutting}, and \Cref{lm:induction} instead of \Cref{lm:2d-induction}. In particular, the cost of the explainable clustering $\cC'$ is bounded as follows:
\begin{align*}
\cost(\cC')
= {} & \sum_{i=1}^n\|x_i - y_{\xi'(i)}\|_2^2 \le d \sum_{i=1}^n\|x_i - y_{\xi'(i)}\|_\infty^2 =  d\sum_{i=1}^n\|x_i - \delta_{x_i}\|_\infty^2 \le  dA(\widetilde\cP)\\
\le {} & O(k^{1 - 2/d}(\log k)^8(\log\log_2(2k))^3d^4)\cost(\cC).
\tag{by \eqref{eq:AP}}
\end{align*}
\end{proof}
\section{Lower Bound}
\label{sec:lb}
We prove a lower bound of $k^{1 - 2/d}/\polylog(k)$ on the competitive ratio for $d$-dimensional explainable $k$-means for all $k,d\ge 2$ (\Cref{thm:lb}).
Our proof is based on a construction by \cite{laber2021price} summarized in \Cref{lm:grid} below. For brevity, we do not repeat its proof here.
The construction allows us to show a competitive ratio lower bound depending on two other parameters $p$ and $b$ in \Cref{lm:lb-instance}. We then prove \Cref{thm:lb} by specifying the values of $p$ and $b$.
\begin{lemma}[\cite{laber2021price}]
\label{lm:grid}
Let $b,p$ be positive integers satisfying $b \ge 3$. There exists $b^p$ points in $p$ dimensions with the following properties:
\begin{enumerate}
\item \label{lm:grid-1} For every $j\in[p]$, the $j$-th coordinate of the $b^p$ points form a permutation of $\{0,\ldots,b^p - 1\}$.
\item \label{lm:grid-2} The $\ell_2$ distance between any two of the points is at least $b^{p-1}/2$.
\end{enumerate}
\end{lemma}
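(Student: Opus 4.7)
The plan is to give an explicit construction based on cyclic shifts of base-$b$ digit strings. Index the $b^p$ points by tuples $a=(a_0,\dots,a_{p-1})\in\{0,\dots,b-1\}^p$, and for each $j\in[p]$ define
\[
x_j(a) \;=\; \sum_{i=0}^{p-1} a_{(j+i)\bmod p}\, b^{\,p-1-i}.
\]
Property \Cref{lm:grid-1} is immediate: for each fixed $j$, the map $a\mapsto x_j(a)$ reads the digits of $a$ in a cyclically shifted order under base-$b$ expansion, hence is a bijection between $\{0,\dots,b-1\}^p$ and $\{0,\dots,b^p-1\}$.

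For property \Cref{lm:grid-2}, fix distinct indices $a\neq a'$ and let $\Delta=a-a'\in\{-(b-1),\dots,b-1\}^p\setminus\{0\}$. By linearity,
\[
x_j(a)-x_j(a') \;=\; \sum_{i=0}^{p-1}\Delta_{(j+i)\bmod p}\, b^{\,p-1-i}.
\]
The plan is to exhibit a single coordinate $j^*$ on which this difference already has magnitude at least $b^{p-1}/2$. Pick $k^*$ with $\Delta_{k^*}\neq 0$ and set $j^*=k^*$; then $\Delta_{k^*}$ sits in the leading (weight-$b^{p-1}$) position of $x_{j^*}(a)-x_{j^*}(a')$, while the remaining $p-1$ positions carry the strictly smaller, distinct powers $b^{p-2},\dots,b^{0}$. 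If $\max_k|\Delta_k|\geq 2$, choose $k^*$ to attain this maximum; a triangle inequality then gives
\[
|x_{j^*}(a)-x_{j^*}(a')| \;\geq\; 2b^{p-1}-(b-1)\sum_{i=1}^{p-1} b^{\,p-1-i} \;=\; b^{p-1}+1.
\]
Otherwise every $\Delta_k\in\{-1,0,1\}$, so the subleading coefficients also have magnitude at most $1$, and
\[
|x_{j^*}(a)-x_{j^*}(a')| \;\geq\; b^{p-1}-\sum_{i=1}^{p-1}b^{\,p-1-i} \;=\; \frac{(b-2)b^{p-1}+1}{b-1} \;\geq\; \frac{b^{p-1}}{2},
\]
where the final step uses $b\geq 3$ to conclude $(b-2)/(b-1)\geq 1/2$. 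In either case, $\|x(a)-x(a')\|_2 \geq |x_{j^*}(a)-x_{j^*}(a')| \geq b^{p-1}/2$, proving \Cref{lm:grid-2}.

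The main obstacle is precisely the subtle case $\max_k|\Delta_k|=1$: the naive bound ``leading term minus all lower terms'' collapses to $1$ if one is forced to allow subleading coefficients anywhere in $\{-(b-1),\dots,b-1\}$, so the crucial refinement is that in this regime every $|\Delta_k|\leq 1$, which lets the geometric series telescope against $b^{p-1}$. This is exactly where the hypothesis $b\geq 3$ enters; the construction degrades at $b=2$, consistent with the lemma being stated only for $b\geq 3$.
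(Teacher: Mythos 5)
Your construction and analysis are correct. The paper itself does not prove \Cref{lm:grid}; it cites \cite{laber2021price} and explicitly declines to repeat their proof, so there is no in-paper argument to compare against. Your cyclic-shift digit construction verifies Property~\ref{lm:grid-1} immediately from uniqueness of base-$b$ representation, and the two-case branching on $\max_k|\Delta_k|$ correctly handles the telescoping subtlety: the identities $(b-1)\sum_{i=1}^{p-1}b^{p-1-i}=b^{p-1}-1$ and $b^{p-1}-\sum_{i=1}^{p-1}b^{p-1-i}=\tfrac{(b-2)b^{p-1}+1}{b-1}\ge b^{p-1}/2$ (for $b\ge 3$) both check out, and $\|x(a)-x(a')\|_2\ge |x_{j^*}(a)-x_{j^*}(a')|$ finishes the argument. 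One cosmetic point: your $k^*$ ranges over $\{0,\ldots,p-1\}$ whereas the lemma's index $j$ ranges over $[p]=\{1,\ldots,p\}$; setting $j^*=p$ when $k^*=0$ (using $(p+i)\bmod p=i\bmod p$) resolves the mismatch without affecting the bound.
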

\begin{lemma}
\label{lm:lb-instance}
Given positive integers $k,d,p,b$ satisfying $p\le d, b\ge 3, b^p \le k$, there exists a set of points in $d$ dimensions for which any $k$-explainable clustering has competitive ratio $\Omega(b^{p - 2}/p)$ for the $k$-means cost.
\end{lemma}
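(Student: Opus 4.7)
The plan is to place the input at (a slight thickening of) the $b^p$ points supplied by \Cref{lm:grid}, embedded in the first $p$ coordinates of $\bR^d$ (the remaining $d-p$ coordinates set to $0$), together with a small axis-aligned perturbation at each grid location so that the unrestricted $k$-means optimum has a positive but small cost. Concretely, at each grid point $g_i$ I would place $n_0$ nearby auxiliary points within a tiny ball of radius $\delta \ll b^{p-1}$ around $g_i$, producing a total of $n = n_0 b^p$ input points grouped into $b^p$ tight near-grid clusters.

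First I would upper-bound the unrestricted optimum. Since $b^p \le k$, one can assign a centroid at (or near) each $g_i$, assign each of the $n_0$ auxiliary points of that cluster to it, and obtain $\cost(\opt) = O(n_0 \delta^2 b^p)$.

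Next I would lower-bound the cost of any $k$-explainable clustering. A decision tree with at most $k$ leaves partitions $\bR^d$ into axis-aligned rectangles, and the two properties of \Cref{lm:grid} combine to make this hard: item~\ref{lm:grid-1} (each coordinate is a permutation of $\{0,\ldots,b^p-1\}$) forces every axis-parallel cut to split the grid points in a balanced way, and item~\ref{lm:grid-2} (pairwise $\ell_2$-distance $\ge b^{p-1}/2$) makes any ``wrong'' assignment expensive. I would argue that for any choice of decision tree and centroids, at least $\Omega(b^{p-1}/p)$ of the $b^p$ tight clusters are necessarily split across two leaves, or share a leaf with points of another cluster, so that $\Omega(n_0)$ of their auxiliary points get assigned to a centroid near a different $g_{i'}$ and thereby pay cost $\Omega(b^{2(p-1)})$ per such point. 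Summing yields an explainable cost of $\Omega(n_0 b^{3p-3}/p)$, and dividing by OPT (after choosing $\delta$ as a small constant) recovers the claimed $\Omega(b^{p-2}/p)$.

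The main obstacle is the ``forced conflict'' claim in the middle paragraph: a priori, with $k \ge b^p$ leaves one could build a comb-shaped decision tree that perfectly isolates each $g_i$ in its own leaf (sort by one coordinate and cut between adjacent values) and pay essentially nothing beyond OPT. Ruling out this escape route is where I expect the real work to lie; it will require exploiting both properties of \Cref{lm:grid} simultaneously, likely via an inductive argument on the tree depth together with a charging scheme that blames excess cost either on splits which cross an auxiliary ball or on leaves whose rectangular extent is too large in one direction relative to $b^{p-1}$, averaged over the $p$ candidate split axes\textemdash this averaging is what I expect to introduce the $1/p$ factor in the bound.
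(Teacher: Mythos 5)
Your high-level setup is the same as the paper's: embed the $b^p$ grid points of \Cref{lm:grid} into the first $p$ coordinates of $\bR^d$, surround each grid point by a small cloud of auxiliary points, pad to $k$ clusters with far-away singletons, and compare the cheap OPT against the cost forced on any decision-tree partition. But you have correctly identified a genuine gap in your own argument\textemdash the ``comb-shaped tree'' escape\textemdash and this is exactly the place where your construction, as stated, breaks. With a ``tiny ball of radius $\delta \ll b^{p-1}$'' at each grid point, a tree that simply cuts along one coordinate between consecutive integer values \emph{does} isolate every cluster, and your lower bound collapses. In fact your proposed $\Omega(b^{3p-3}n_0/p)$ lower bound on the explainable cost cannot be right: combined with your $\cost(\opt)=O(n_0\delta^2 b^p)$ it would give a ratio larger than $k^{1-2/d}$ for $p=d$, contradicting the paper's matching upper bound.

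The missing ingredient is the \emph{size} of the perturbation relative to the grid spacing. The paper places, at each grid point $y$, exactly $2p$ auxiliary points $y_j^{\pm}$ obtained by adding $\pm 2/3$ to the $j$-th coordinate, $j\in[p]$. The choice $2/3 > 1/2$ is essential: since \Cref{lm:grid}\,\Cref{lm:grid-1} makes the $j$-th coordinates of the grid points a permutation of $\{0,\dots,b^p-1\}$, the intervals $[y(j)-2/3,\,y(j)+2/3]$ spanned by the different clusters' auxiliary points overlap at every adjacent pair of integers. Hence \emph{every} axis-parallel threshold in a coordinate $j\le p$ that separates anything must split some $X_y$, and coordinates $j>p$ are useless. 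This is the observation that rules out the comb tree: if no two auxiliary points from distinct grid locations shared a centroid, no cut could separate any two $X_y$'s, forcing all of them into one leaf\textemdash a contradiction. The paper then only needs \emph{one} forced collision of points from distinct $X_y,X_{y'}$, which by \Cref{lm:grid}\,\Cref{lm:grid-2} already costs $\Omega(b^{2p-2})$; there is no need to force $\Omega(b^{p-1}/p)$ split clusters. Finally, the $1/p$ factor does not come from averaging over axes as you conjectured; it comes from $\cost(\opt)=O(pb^p)$ being linear in $p$ because each grid location has $2p$ auxiliary points at constant distance.
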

\begin{proof}
When $p = 1$, it is trivial to show a competitive ratio lower bound of $1=\Omega(b^{p-2}/p)$, so we assume $p\ge 2$.

Let $Z$ denote the set of $b^p$ points $\bR^p$ from \Cref{lm:grid}. Every point $z\in Z$ creates a point $u(z)\in\bR^d$, where the first $p$ coordinates of $u(z)$ are equal to the $p$ coordinates of $z$, and the remaining $d - p$ coordinates of $u(z)$ are zeros. Let $Y_1=\{u(z):z\in Z\}$ be the resulting set of $b^p$ points in $\bR^d$. We construct a set $Y_2\subseteq \bR^d$ consisting of $k - b^p$ points that are sufficiently far away from each other and from the points in $Y_1$.

For every $y\in Y_1$, we create a set $X_y$ consisting of $2p$ points in $\bR^d$ as follows: for every $j\in[p]$, $X_y$ contains $2$ points $y_j^+, y_j^-\in \bR^d$ by adding $2/3$ and $-2/3$ to the $j$-th coordinate of $y$. For every $y\in Y_2$, we define $X_y$ to be the set containing only $y$ itself.

Given the $2pb^p + (k - b^p)$ points in $\bigcup_{y\in Y_1\cup Y_2} X_y$, there exists a $k$-clustering $\cC$ with $\cost(\cC) = O(pb^p)$ by choosing $Y_1\cup Y_2$ to be the centroids and assigning $x\in X_y$ to centroid $y$ for every $y\in Y_1\cup Y_2$. On the other hand, by \Cref{lm:grid} \Cref{lm:grid-1}, for any $k$-explainable clustering $\cC'$, there exist two points $x\in X_y,x'\in X_{y'}$ assigned to the same centroid in $\cC'$ with \emph{distinct} $y,y'\in Y_1\cup Y_2$ (otherwise, since $|Y_1\cup Y_2| = k$, for every $y\in Y_1$ there exists a centroid in $\cC'$ to which all points in $X_y$ are assigned, and by \Cref{lm:grid} \Cref{lm:grid-1} and the explainability of $\cC'$, all the points in $\bigcup_{y\in Y_1}X_y$ must be assigned to the same centroid in $\cC'$, a contradiction).
By \Cref{lm:grid} \Cref{lm:grid-2}, 
\[
\|x - x'\|_2 \ge \|y - y'\|_2 - 4/3 \ge b^{p-1}/2 - 4/3 \ge \Omega(b^{p-1}).
\]
Therefore, $\cost(\cC')\ge \|x - x'\|_2^2/2 \ge \Omega(b^{2p - 2})$. The competitive ratio is thus lower bounded by $\Omega(b^{p - 2}/p)$.
\end{proof}

\begin{theorem}
\label{thm:lb}
For every $k,d\ge 2$, there exists a set of points in $d$ dimensions for which any $k$-explainable clustering has competitive ratio $\Omega(k^{1 - 2/d}(\log k)^{-5/3}\log\log (2k))$ for the $k$-means cost.
\end{theorem}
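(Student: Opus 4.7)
The plan is to apply \Cref{lm:lb-instance} with carefully chosen positive integers $p\le d$ and $b\ge 3$ satisfying $b^p\le k$, so that the resulting competitive ratio bound $\Omega(b^{p-2}/p)$ matches the target $T := k^{1-2/d}(\log k)^{-5/3}\log\log(2k)$ up to an absolute constant. Let $\alpha := \log k/\log\log(2k)$. I would split into cases by how $d$ compares with $\alpha$ and $\lfloor\log_3 k\rfloor$.

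In the small-dimension regime $d\le \alpha$, I would take $p=d$ and $b=\lfloor k^{1/d}\rfloor$. Here $d\le \alpha$ forces $k^{1/d}\ge \log(2k)$, which dwarfs $d$, so $b\ge 3$ and $(1-1/k^{1/d})^{d-2}=\Omega(1)$, giving $b^{d-2}/d\ge \Omega(k^{1-2/d}/d)\ge \Omega(T)$ using $d\le \alpha\le (\log k)^{5/3}/\log\log(2k)$.

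In the large-dimension regime $d\ge \lfloor\log_3 k\rfloor$, I would take $p=\lfloor\log_3 k\rfloor$ and $b=\lfloor k^{1/p}\rfloor\ge 3$. Using $p\ge \log k/\log b-1$ we get $b^p\ge k/b$, hence $b^{p-2}\ge k/b^3$, and the ratio is $\Omega(k/\log k)\ge \Omega(T)$ since $k^{1-2/d}\le k$.

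The intermediate regime $\alpha<d<\lfloor\log_3 k\rfloor$ is the most delicate. For $d\in[\alpha,(3/2)\alpha]$ I would take $p=\lceil\alpha\rceil$ and $b=\lfloor k^{1/p}\rfloor\approx \log(2k)$, which yields ratio $\Omega(k\log\log(2k)/(\log k)^3)$; this matches $T$ because $k^{1-2/d}$ stays within a polylogarithmic factor of $k$ throughout. For $d\in[(3/2)\alpha,\lfloor\log_3 k\rfloor]$ I would instead take $b=\max(3,\lceil k^{1/d}\rceil)$ and $p=\lfloor\log k/\log b\rfloor$, obtaining ratio $\Omega(k^{1-3/d}/d)$, which beats $T$ since $k^{1/d}\cdot d\log\log(2k)\le O((\log k)^{5/3})$ in this range. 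The main obstacle is the integer constraint on $b$: the naive choice $b=\lfloor k^{1/d}\rfloor$ can lose a factor exponential in $d$ when $k^{1/d}$ lies between two integers, and the auxiliary intermediate choice $p=\lceil\alpha\rceil$ with its larger integer $b\approx\log(2k)$ is what is needed to absorb this rounding loss.
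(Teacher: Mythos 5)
Your proposal is correct and follows essentially the same approach as the paper: apply \Cref{lm:lb-instance} with regime-dependent integer choices of $p\le d$ and $b\ge 3$, splitting into four regimes so that the rounding $b=\lfloor k^{1/p}\rfloor$ (or $\lceil k^{1/d}\rceil$) loses only a constant factor in $b^p$ and the resulting $\Omega(b^{p-2}/p)$ dominates the target. Your regime boundaries (splitting $d$ against $\alpha=\log k/\log\log(2k)$, $(3/2)\alpha$, and $\lfloor\log_3 k\rfloor$) and parameter choices differ cosmetically from the paper's (which splits $k^{1/d}$ against $3d$, $(\log k)^{2/3}/\log\log k$, and $3$), but the construction, case structure, and calculations are substantively the same.
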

\begin{proof}
We can assume $k\ge 3$ w.l.o.g.\ because 
it is trivial to show a competitive ratio lower bound of $1$.

We specify the integers $b$ and $p$ in \Cref{lm:lb-instance} to get concrete lower bounds for the competitive ratio.

When $k^{1/d} \ge 3d$, we choose $p = d$ and $b = \lfloor k^{1/p} \rfloor$. It is clear that $b \ge 3p$, so 
\[
b^p =  (b+1)^p / (1 + 1/b)^p \ge k/\exp(p/b) \ge \Omega(k).
\]
This implies that $b^{p - 2} = b^p/b^2 \ge \Omega(k^{1 - 2/d})$.
\Cref{lm:lb-instance} gives us a competitive ratio lower bound of
\[
\Omega(b^{p-2}/p)\ge \Omega(k^{1 - 2/d}/d) \ge \Omega(k^{1 - 2/d}(\log k)^{-1}\log\log k),
\]
where the last inequality is by \Cref{claim:log-loglog}.

When $(\log k)^{2/3}/\log\log k\le k^{1/d} < 3d$, we choose $p$ to be the maximum integer such that $k^{1/p} \ge 3p$, and choose $b = \lfloor k^{1/p}\rfloor$. Again we have $b \ge 3p$, so $b^p\ge \Omega(k)$. We also have $b \le k^{1/p} \le (k^{1/(p+1)})^{1 + 1/p} \le (p+1)^{1 + 1/p} \le O(p) = O(\log k/\log\log k)$, where the last inequality is by \Cref{claim:log-loglog}. This implies $b^{p - 2}/p = b^p/(b^2p) \ge \Omega(k(\log k)^{-3}(\log\log k)^{3})$.
The competitive ratio lower bound from \Cref{lm:lb-instance} is
\[
\Omega(b^{p - 2}/p)\ge 
\Omega(k(\log k)^{-3}(\log\log k)^{3}) \ge \Omega(k^{1 - 2/d}(\log k)^{-5/3}\log\log k).
\]

When $3 \le k^{1/d} < (\log k)^{2/3}/\log\log k$, we choose $b = \lceil k^{1/d}\rceil$ and $p = \lfloor \log_b k \rfloor$. 
We have $b \le 2k^{1/d}$ and thus 
$b^{p - 2} = b^{p+1}/b^3\ge k/b^3\ge k^{1 - 3/d}/8$. 
The competitive ratio lower bound from \Cref{lm:lb-instance} is
\[
\Omega(b^{p - 2}/p)\ge 
\Omega(k^{1 - 3/d}/p) \ge \Omega(k^{1 - 3/d}/d) \ge \Omega(k^{1-3/d}/\log k) \ge \Omega(k^{1 - 2/d}(\log k)^{-5/3}\log\log k).
\]

When $k^{1/d} < 3$, we choose $b = 3$ and $p = \lfloor \log_3 k\rfloor$. We have $b^{p-2} = b^{p+1}/b^3 \ge k/27$. 
The competitive ratio lower bound from \Cref{lm:lb-instance} is
\[
\Omega(b^{p-2}/p) \ge \Omega(k/p) \ge 
\Omega(k/\log k) \ge \Omega(k^{1 - 2/d}(\log k)^{-1}).\ifdefined\soda\else\qedhere\fi
\]
\end{proof}

\appendix
\ifdefined\soda\else
\setcounter{equation}{0}
\fi
\section{Helper lemmas and claims}
\label{sec:helper}
\begin{claim}
\label{claim:super-convexity}
Define $\alpha(z) = z^{1 + 2u}$ for $u\in (0,1/2)$. Suppose non-negative real numbers $z_1,z_2,z,\Delta$ satisfy $z_1 + z_2 \le z + u\Delta$, and $\Delta\le \min\{z_1,z_2\}$. Then $\alpha(z_1) + \alpha(z_2) \le \alpha(z)$.
\end{claim}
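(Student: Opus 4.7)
The plan is to reduce the claim to a one-sided (tangent-line) convexity inequality for the convex function $x\mapsto x^{1+2u}$. First I would assume without loss of generality that $z_1\le z_2$, so the hypotheses become $\Delta\le z_1\le z_2$ and $z\ge z_1+z_2-u\Delta$. Since $\alpha$ is monotone, it then suffices to prove the stronger statement
\[
\alpha(z_1)+\alpha(z_2)\;\le\;\alpha(z_1+z_2-u\Delta),
\]
which no longer involves $z$.

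Next, writing $\beta:=1+2u\in(1,2)$ and applying the tangent-line bound for $x\mapsto x^{\beta}$ at the point $z_2$ (valid since $z_1-u\Delta\ge(1-u)\Delta\ge 0$), I would use
\[
(z_1+z_2-u\Delta)^{\beta}\;\ge\;z_2^{\beta}+\beta z_2^{\beta-1}(z_1-u\Delta).
\]
After subtracting $z_2^{\beta}=\alpha(z_2)$, the target reduces to $\beta z_2^{\beta-1}(z_1-u\Delta)\ge z_1^{\beta}$. Using $z_2\ge z_1$ and $\beta>1$, I would replace $z_2^{\beta-1}$ by the smaller quantity $z_1^{\beta-1}$, reducing the task to
\[
\beta(z_1-u\Delta)\;\ge\;z_1,\qquad\text{equivalently}\qquad z_1\;\ge\;\tfrac{1+2u}{2}\,\Delta.
\]
This last inequality follows immediately from $z_1\ge\Delta$ together with $u\le 1/2$, which gives $\tfrac{1+2u}{2}\le 1$.

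The proof does not have a serious obstacle; the only delicate choice is where to expand the convex function. Linearizing at $z_2$ rather than at $z_1$ is essential, because the resulting cross term $\beta z_2^{\beta-1}(z_1-u\Delta)$ can be shrunk via $z_2\ge z_1$ to $\beta z_1^{\beta-1}(z_1-u\Delta)$, which then absorbs $z_1^{\beta}$ exactly at the threshold $u=1/2$. Expanding instead at $z_1$ would leave a cross term that cannot dominate $z_2^{\beta}$ when $z_2\gg z_1$, so the chosen direction of expansion is what makes the final arithmetic tight.
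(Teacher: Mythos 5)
Your proof is correct, but it takes a genuinely different route from the paper's. The paper assumes $z_1 \le z_2$ and then performs a two-step \emph{majorization} argument: first $\alpha(z_2) \le \alpha(z + u\Delta - z_1)$ by monotonicity, and then, since $\Delta \le z_1$ and the pair $(\Delta,\, z+u\Delta-\Delta)$ is more ``spread out'' than $(z_1,\, z+u\Delta-z_1)$ with the same sum, convexity gives $\alpha(z_1) + \alpha(z+u\Delta-z_1) \le \alpha(\Delta) + \alpha(z+u\Delta-\Delta)$. This reduces the claim to the one-variable inequality $\Delta^{1+2u} + (z-(1-u)\Delta)^{1+2u} \le z^{1+2u}$, which the paper handles by observing the left side is convex in $\Delta$ over $[0, z/(2-u)]$ and checking the two endpoints, with the nontrivial endpoint $\Delta = z/(2-u)$ reducing to the logarithmic inequality $1+2u \ge \log 2 / \log(2-u)$, left to the reader as ``a standard calculation.'' Your proof instead starts from the same $z \ge z_1+z_2-u\Delta$ reduction but then applies the first-order (tangent-line) convexity bound at $z_2$ and replaces $z_2^{\beta-1}$ by $z_1^{\beta-1}$, leaving only the linear inequality $\beta(z_1-u\Delta) \ge z_1$, i.e.\ $z_1 \ge \tfrac{1+2u}{2}\Delta$, which follows at once from $z_1 \ge \Delta$ and $u \le 1/2$. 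Your route avoids the majorization step and, more notably, the opaque logarithmic endpoint check: the role of the hypothesis $u \le 1/2$ becomes completely transparent. The tradeoff is that your intermediate replacement $z_2^{\beta-1} \ge z_1^{\beta-1}$ throws away slack, whereas the paper's spreading argument is lossless until the final endpoint check; but since both terminate in an inequality that is tight exactly where it needs to be, this costs nothing. One minor presentational note: you should also handle the degenerate case $z_1 = 0$ explicitly (then $\Delta = 0$ and the claim is just monotonicity), since you implicitly divide by $z_1^{\beta-1}$; and the parenthetical justification for the tangent bound (``valid since $z_1 - u\Delta \ge 0$'') is really needed one step later, when you multiply $z_1 - u\Delta$ by the smaller factor $z_1^{\beta-1}$.
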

\begin{proof}
Assume w.l.o.g. $z_1\le z_2$.
By the monotonicity and convexity of $\alpha$, we have
\[
\alpha(z_1) + \alpha(z_2) \le \alpha(z_1) + \alpha(z + u\Delta - z_1) \le \alpha(\Delta) + \alpha(z + u\Delta - \Delta).
\]
We only need to prove that
\begin{equation}
\label{eq:Delta}
\Delta^{1 + 2u} + (z -(1 - u)\Delta)^{1 + 2u} \le z^{1 + 2u}.
\end{equation}
Note that $0 \le \Delta \le z/(2 - u)$ because $2\Delta \le z_1 + z_2 \le z + u\Delta$.
Since the left-hand-side is convex in $\Delta$, we just need to check \eqref{eq:Delta} when $\Delta = 0$ and $\Delta = z/(2-u)$. It reduces to checking $1 + 2u \ge \frac{\log 2}{\log (2-u)}$. This holds by a standard calculation using our assumption $u\in (0,1/2)$.
\end{proof}
\begin{claim}
\label{claim:monotone-relaxed}
If $0 \le u\le v \le 3M/4$, then $u\log (M/u) \le 2 v\log (M/v)$. 
\end{claim}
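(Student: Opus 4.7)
The plan is to analyze the shape of the single-variable function $h(x) := x\log(M/x)$ on $[0, M)$. A direct computation gives $h'(x) = \log(M/x) - 1$, so $h$ is strictly increasing on $(0, M/e]$, strictly decreasing on $[M/e, M)$, and attains its maximum value $h(M/e) = M/e$ at $x = M/e$. In particular, $h(u) \le M/e$ for every $u \in [0, 3M/4]$. I will split the argument into two cases depending on whether $v$ lies to the left or right of the peak.

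First, if $v \le M/e$, then both $u$ and $v$ lie in $[0, M/e]$, where $h$ is increasing. Hence $h(u) \le h(v) \le 2 h(v)$, as required.

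Second, if $v > M/e$, I will use the uniform upper bound $h(u) \le M/e$ on the left-hand side together with a lower bound on $h(v)$ coming from the right endpoint of the interval. Since $h$ is decreasing on $[M/e, 3M/4]$, we have $h(v) \ge h(3M/4) = (3M/4)\log(4/3)$, so $2h(v) \ge (3M/2)\log(4/3)$. The claim then reduces to the dimensionless inequality $1/e \le (3/2)\log(4/3)$, or equivalently $2 \le 3e\log(4/3)$, which is verified by a direct numerical computation: $3e\log(4/3) \approx 3 \cdot 2.7183 \cdot 0.2877 \approx 2.347 > 2$. The only mild subtlety is the degenerate case $u = 0$, handled by the usual convention $0\cdot\log(M/0) = 0$ (consistent with the limit $\lim_{u\downarrow 0} h(u) = 0$); this makes the inequality trivial.

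No step is a real obstacle here; the only thing to keep track of is that the constant $2$ on the right-hand side is exactly what is needed to absorb the worst-case ratio $\frac{\max h}{h(3M/4)} = \frac{M/e}{(3M/4)\log(4/3)} = \frac{4}{3e\log(4/3)} \approx 1.70$, which is why the threshold $3M/4$ appears in the hypothesis rather than something like $M$ itself (the bound would fail as $v \uparrow M$).
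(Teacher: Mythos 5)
Your proof is correct and takes the same approach as the paper: both identify that $x \mapsto x\log(M/x)$ increases on $(0,M/e]$ and decreases on $[M/e,M)$, and both reduce the verification to the worst case $u = M/e$, $v = 3M/4$, yielding the numerical check $2 \le 3e\log(4/3)$. You simply spell out the two-case structure and the endpoint arithmetic in more detail than the paper's terse "standard calculation."
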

\begin{proof}
The function $\alpha(z) = z\log (M /z)$ is non-decreasing when $z \le M/e$, and non-increasing when $z \ge M/e$. We just need to check the claim for $u = M/e$ and $v = 3M/4$, which follows from a standard calculation.
\end{proof}
\begin{claim}
\label{claim:disjoint-intervals}
Let $I_1,\ldots,I_n\subseteq\bR$ be intervals. There exists $S\subseteq[n]$ such that $I_u\cap I_v = \emptyset$ for all distinct $u,v\in S$ and 
\[
|\bigcup_{u\in [n]}I_u|\le 3|\bigcup_{v\in S}I_v|.
\]
\end{claim}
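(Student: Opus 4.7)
The plan is to use a greedy Vitali-type covering argument. First I would sort the intervals by length in decreasing order, so that $|I_1| \ge |I_2| \ge \cdots \ge |I_n|$ (relabeling if necessary). Then I would construct $S \subseteq [n]$ greedily: start with $S = \emptyset$, and for $i = 1, 2, \ldots, n$ in order, insert $i$ into $S$ if and only if $I_i$ is disjoint from $\bigcup_{v \in S} I_v$ at that point. By construction, the intervals $\{I_v\}_{v \in S}$ are pairwise disjoint, which gives the first requirement.

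For the measure bound, the key observation is that if an index $u \in [n]$ was rejected, then there is some $v \in S$ with $v < u$ (so $|I_v| \ge |I_u|$) whose interval $I_v$ intersects $I_u$. I would then show the geometric fact that whenever two intervals $J, J' \subseteq \bR$ intersect and $|J'| \ge |J|$, the smaller one $J$ is contained in the ``triple'' $3 J'$ of $J'$, namely the interval with the same midpoint as $J'$ but three times the length. This follows from a one-line distance calculation: if $J'$ has midpoint $c$ and half-length $r$, then any point of $J \cap J'$ lies within $r$ of $c$, and any point of $J$ lies within $|J| \le 2r$ of $J \cap J'$, hence within $3r$ of $c$.

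Consequently $\bigcup_{u \in [n]} I_u \subseteq \bigcup_{v \in S} 3 I_v$, so by subadditivity of Lebesgue measure followed by the disjointness of the selected intervals,
\[
\Bigl|\bigcup_{u \in [n]} I_u \Bigr| \le \sum_{v \in S} |3 I_v| = 3 \sum_{v \in S} |I_v| = 3 \Bigl|\bigcup_{v \in S} I_v\Bigr|,
\]
which is the claimed inequality. A minor edge case I would address in passing is intervals of length zero (points) or empty intervals, which can be handled by trivially including them last or discarding them without affecting either the disjointness or the measure bound. I do not expect a significant obstacle here; the proof is a textbook Vitali-style covering argument, and the constant $3$ arises naturally from the ``triple'' containment.
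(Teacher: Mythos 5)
Your proposal is essentially identical to the paper's proof: both sort the intervals by decreasing length, select greedily by disjointness, and then bound the measure of the rejected intervals by observing that each is swallowed by a tripled copy of the selected interval it meets (the paper phrases this as $|\bigcup_{u\in\alpha^{-1}(v)}I_u|\le 3|I_v|$, which is the same containment in $3I_v$). The argument is correct as written.
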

This claim is a special form of the Vitali covering lemma. For completeness, we give a proof of it as follows.
\begin{proof}
Assume w.l.o.g.\ that $|I_1|\ge \cdots \ge |I_n|$. For $u = 1,\ldots,n$, construct $S_u\subseteq [u]$ inductively as follows. We set $S_1 = \{1\}$, and for every $u > 1$, we set $S_u = S_{u - 1}$ if there exists $v\in S_{u-1}$ such that $I_u \cap I_{v} \ne \emptyset$, and set $S_u = S_{u-1}\cup \{u\}$ otherwise. Define $S = S_n$. It is clear that $I_u\cap I_v = \emptyset$ for all distinct $u,v\in S$.

For every $u\in [n]\backslash S$, there exists $\alpha(u)\in S$ such that $\alpha(u)< u$ and $I_{\alpha(u)}\cap I_{u}\ne \emptyset$. We define $\alpha(u) = u$ when $u\in S$. 
For every $v\in S$ and $u\in \alpha^{-1}(v)\backslash\{v\}$, we have $I_{v}\cap I_u \ne\emptyset$ and $|I_u|\le |I_{v}|$. Therefore, $|\bigcup_{u\in \alpha^{-1}(v)}I_u| \le 3|I_{v}|$.
We have
\[
|\bigcup_{u\in[n]}I_u| = |\bigcup_{v\in S}\bigcup_{u\in\alpha^{-1}(v)}I_{u}| \le \sum_{v\in S}|\bigcup_{u\in\alpha^{-1}(v)}I_{u}| \le \sum_{v\in S} 3|I_{v}| = 3|\bigcup_{v\in S}I_v|.\ifdefined\soda\else\qedhere\fi
\]
\end{proof}
\begin{lemma}[\cite{MR1337358}]
\label{lm:mean-value}
Let $m,M$ be positive real numbers with $M \ge 2m$.
Suppose $U:(0,L)\rightarrow [m, M/2]$ is a non-decreasing function, and there is a finite set $I\subseteq (0,L)$ such that $U$ is differentiable over $(0,L)\backslash I$. Then, there exists $z\in (0,L)\backslash I$ such that $U'(z) \leq (1/L) U(z)\log \frac{M}{U(z)}\log\log_2 \frac{M}{m}$.
\end{lemma}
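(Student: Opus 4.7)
The plan is to use the change-of-variables trick standardly attributed to Seymour. Define the auxiliary function
\[
W(z) := -\log\log(M/U(z))\qquad\text{for } z\in(0,L).
\]
This is well-defined because $U(z)\le M/2$ forces $M/U(z)\ge 2$, so $\log(M/U(z))\ge \log 2>0$. The map $u\mapsto -\log\log(M/u)$ is strictly increasing on $(0,M)$, so $W$ is non-decreasing (with at most the same jump discontinuities as $U$, all located in $I$).

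A direct computation on $(0,L)\setminus I$ gives
\[
W'(z) \;=\; \frac{U'(z)}{U(z)\,\log(M/U(z))}.
\]
Since $W$ is non-decreasing with only finitely many jumps, splitting $(0,L)$ at the points of $I$ and integrating $W'$ on each subinterval yields
\[
\int_{(0,L)\setminus I} W'(z)\,dz \;\le\; W(L^-)-W(0^+) \;\le\; -\log\log 2 + \log\log(M/m) \;=\; \log\log_2(M/m),
\]
where the inequality uses $U(0^+)\ge m$ and $U(L^-)\le M/2$, together with the identity $\log\log(M/m)-\log\log 2=\log(\log(M/m)/\log 2)=\log\log_2(M/m)$.

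Since $(0,L)\setminus I$ has Lebesgue measure $L$, the average value of $W'$ on it is at most $(1/L)\log\log_2(M/m)$, so by pigeonhole there exists $z\in(0,L)\setminus I$ with $W'(z)\le (1/L)\log\log_2(M/m)$. Substituting the formula for $W'(z)$ and rearranging gives the desired bound
\[
U'(z) \;\le\; \frac{1}{L}\, U(z)\,\log\frac{M}{U(z)}\,\log\log_2\frac{M}{m}.
\]
There is no real obstacle in this argument; the only mild care needed is handling the finite exceptional set $I$, which is straightforward because $W$ is monotone and hence its a.e.\ derivative integrates to at most its total increase across $(0,L)$.
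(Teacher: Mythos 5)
Your proof is correct and follows essentially the same route as the paper: both introduce the change of variables $\alpha(z)=-\log\log_2(M/U(z))$ (you use $-\log\log(M/U(z))$, which differs only by the additive constant $\log\log 2$), compute that its derivative is $U'(z)/(U(z)\log(M/U(z)))$, and exploit the fact that this auxiliary function is non-decreasing with total variation at most $\log\log_2(M/m)$ to conclude there is a point where the derivative is at most $(1/L)\log\log_2(M/m)$. The paper phrases the final step as a contradiction while you phrase it as an averaging/pigeonhole argument, but these are the same observation.
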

\begin{proof}
Consider the function $\alpha(z):=-\log\log_2 \frac{M}{U(z)}$. It is clear that $\alpha(z)$ is non-decreasing and differentiable over $(0,L)\backslash I$. The function has derivative 
\[
\alpha'(z) = \frac{U'(z)}{U(Z)\log(M/U(z))}.
\]

Suppose there does not exist such $z\in(0,L)\backslash I$ that makes the desired inequality hold. 
This implies that the derivative of $\alpha(z)$ exceeds $(1/L)\log\log_2 \frac{M}{m}$ whenever $z\in (0,L)\backslash I$. This contradicts with the fact that $\alpha(z)$ is bounded between $-\log\log_2(M/m)$ and $0$.
\end{proof}
\begin{lemma}[\cite{MR1337358}]
\label{lm:mean-value-2}
Let $m,M$ be positive real numbers with $M \ge 2m$.
Suppose $V:(0,L)\rightarrow [m, M-m]$ is a non-decreasing function, and there is a finite set $I\subseteq (0,L)$ such that $V$ is differentiable over $(0,L)\backslash I$. Then, there exists $z\in (0,L)\backslash I$ such that $V'(z) \leq (2/L)M'\log \frac{M}{M'}\log\log_2 \frac{M}{m}$, where $M' = \min\{V(z), M - V(z)\}$.
\end{lemma}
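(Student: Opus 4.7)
The plan is to reduce \Cref{lm:mean-value-2} to \Cref{lm:mean-value} by splitting the interval $(0,L)$ at the point where $V$ crosses the value $M/2$. Since $V$ is non-decreasing, the set $\{z\in(0,L):V(z)\le M/2\}$ is a (possibly empty) left-initial sub-interval of $(0,L)$; let $z_0:=\sup\{z\in(0,L):V(z)\le M/2\}$, with the convention $z_0=0$ when the set is empty. At least one of $z_0\ge L/2$ and $L-z_0\ge L/2$ holds, and we handle the two cases by applying \Cref{lm:mean-value} to $V$ on the lower half, or to a reflected function on the upper half.

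In the first case, $z_0\ge L/2$, and $V$ restricted to $(0,z_0)$ is non-decreasing, takes values in $[m,M/2]$, and is differentiable off the finite set $I\cap(0,z_0)$. \Cref{lm:mean-value} (applied with length $z_0$) yields some $z\in(0,z_0)\setminus I$ with
\[
V'(z)\le (1/z_0)\,V(z)\log(M/V(z))\log\log_2(M/m)\le (2/L)\,V(z)\log(M/V(z))\log\log_2(M/m).
\]
Because $V(z)\le M/2\le M-V(z)$, we have $M'=V(z)$ at this $z$, giving exactly the desired bound. In the second case, $L-z_0\ge L/2$, and we define the reflected function $W:(0,L-z_0)\to\bR$ by $W(y):=M-V(L-y)$. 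Then $W$ is non-decreasing, differentiable off the finite set $\{L-z:z\in I\cap(z_0,L)\}$, and takes values in $[m,M/2]$ since for $z\in(z_0,L)$ we have $M/2\le V(z)\le M-m$. Applying \Cref{lm:mean-value} to $W$ on an interval of length $L-z_0\ge L/2$ gives $y\in(0,L-z_0)\setminus\{L-z:z\in I\}$ with $W'(y)\le(2/L)W(y)\log(M/W(y))\log\log_2(M/m)$. Setting $z:=L-y$, we get $V'(z)=W'(y)$, $W(y)=M-V(z)$, and $V(z)\ge M/2$, so $M'=M-V(z)=W(y)$ and the inequality transfers directly to $V'(z)\le(2/L)M'\log(M/M')\log\log_2(M/m)$.

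There is no substantial obstacle here; the only care needed is to verify that in each case the restricted/reflected function satisfies the hypotheses of \Cref{lm:mean-value} (non-decreasing, range in $[m,M/2]$, differentiable off a finite set, and domain length $\ge L/2$ so that the $1/L$ in the conclusion becomes $2/L$). The edge cases $z_0=0$ and $z_0=L$ are automatically covered: the former forces the second case with $L-z_0=L$, the latter forces the first case with $z_0=L$.
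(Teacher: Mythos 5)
Your proof is correct and follows essentially the same route as the paper: reduce to \Cref{lm:mean-value} by handling the region where $V\le M/2$ directly and the region where $V\ge M/2$ via the reflection $M-V$, with the factor $2$ coming from the fact that one of the two pieces has length at least $L/2$. The paper splits at the midpoint $L/2$ and rescales ($U(z)=V(z/2)$, resp.\ $U(z)=M-V(L-z/2)$) rather than splitting at the level-crossing point $z_0$, but this is only a cosmetic difference.
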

\begin{proof}
When $V(L/2) \le M/2$, the lemma follows from \Cref{lm:mean-value} by setting $U(z) = V(z/2)$. When $V(L/2)\ge M/2$, the lemma follows from \Cref{lm:mean-value} by setting $U(z) = M - V(L - z/2)$.
\end{proof}
\begin{claim}
\label{claim:log-loglog}
For real numbers $k \ge 3$ and $p > 0$, if $k^{1/p} \ge p$, then $p \le 2\log k/\log\log k$.
\end{claim}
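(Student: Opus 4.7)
The plan is to prove the contrapositive in a quantitative form: let $q := 2\log k/\log\log k$, and show that if $p > q$, then $p\log p > \log k$, which contradicts the hypothesis $k^{1/p}\ge p$ (equivalently, $p\log p\le \log k$). Note that since $k\ge 3 > e$, we have $\log\log k > 0$, so $q$ is well-defined and positive.

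First, dispose of small $p$. A direct computation shows $q > 1$ for $k \ge 3$ (indeed $\log k > \log\log k$ since $x > \log x$ for $x > 0$, so $q > 2$), so if $p\le 1$, the claim is immediate. For the remainder, assume $p > 1$. The function $x\mapsto x\log x$ is monotonically increasing on $[1/e,\infty)$, so since $p, q > 1 > 1/e$, the inequality $p > q$ would yield $p\log p > q\log q$. Thus it suffices to prove
\[
q\log q \ge \log k.
\]

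Substituting $q = 2\log k/\log\log k$ and dividing both sides by $\log k$, this inequality is equivalent to
\[
\frac{2}{\log\log k}\bigl(\log 2 + \log\log k - \log\log\log k\bigr) \ge 1,
\]
which rearranges to $\log\log k - 2\log\log\log k + 2\log 2 \ge 0$. Setting $y := \log\log k > 0$, it remains to verify that $g(y) := y - 2\log y + 2\log 2 \ge 0$ for all $y > 0$.

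The last step is a routine one-variable calculus exercise: $g'(y) = 1 - 2/y$ vanishes at $y = 2$ and $g''(y) = 2/y^2 > 0$, so $g$ attains its global minimum at $y = 2$, where $g(2) = 2 - 2\log 2 + 2\log 2 = 2 > 0$. Thus $g(y)\ge 2$ for all $y>0$, giving $q\log q \ge \log k$ and completing the proof. The only real work is the algebraic manipulation to reduce the target inequality to the form $g(y)\ge 0$; there is no conceptual obstacle.
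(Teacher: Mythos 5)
Your proof is correct. The paper also argues by contradiction, but takes a sleeker route: from $p > 2\log k/\log\log k$ it derives the bound $k^{1/p} < (\log k)^{1/2}$, then computes
\[
\frac{p}{k^{1/p}} \;>\; \frac{2(\log k)^{1/2}}{\log\log k} \;=\; \frac{(\log k)^{1/2}}{\log\bigl((\log k)^{1/2}\bigr)} \;\ge\; 1,
\]
using only the elementary fact $x/\log x > 1$ for $x > 1$ (with $x = (\log k)^{1/2} > 1$ since $k \ge 3 > e$); this contradicts $k^{1/p}\ge p$ without any further work. You instead pass to logarithms, rewrite the hypothesis as $p\log p \le \log k$, and use monotonicity of $x\mapsto x\log x$ on $[1/e,\infty)$ to reduce the contrapositive to the one-variable inequality $q\log q \ge \log k$; you then unwind this to $y - 2\log y + 2\log 2 \ge 0$ with $y = \log\log k$ and verify it by locating the minimum of a convex function. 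Your argument is correct and systematic — reducing to a single clean inequality in one variable and then doing calculus is a reliable template — but it requires more bookkeeping (the $\log\log\log k$ cancellation and the minimization), whereas the paper's substitution $k^{1/p} < (\log k)^{1/2}$ collapses the whole thing to $x/\log x \ge 1$ in one step. Both approaches are sound; yours trades cleverness for routine.
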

\begin{proof}
Assume for the sake of contradiction that $p > 2\log k/\log\log k$, then $k^{1/p} < (\log k)^{1/2}$. Therefore,
\[
p/k^{1/p} > 2(\log k)^{1/2}/\log\log k = (\log k)^{1/2}/\log((\log k)^{1/2})\ge 1,
\]
a contradiction.
\end{proof}
\bibliographystyle{alpha}
\bibliography{ref}
\end{document}